\DeclareMathOperator*{\argmax}{arg\,max}
\newcommand{\E}{\mathbb{E}}
\newcommand{\R}{\mathbb{R}}
\newtheorem{assumption}{Assumption}
\newtheorem{lemma}{Lemma}[section]
\newtheorem{proposition}{Proposition}[section]
\newtheorem{theorem}{Theorem}[section]
\newtheorem{remark}{Remark}
\newtheorem{definition}{Definition}
\titleformat*{\section}{\Large\bfseries}
\titleformat*{\subsection}{\large      \bfseries}
\titleformat*{\subsubsection}{\normalsize\bfseries}
\titleformat*{\paragraph}{\normalsize\bfseries}
\titleformat*{\subparagraph}{\normalsize\bfseries}
\titlespacing{\paragraph}{0pt}{0.3\baselineskip}{0.5em}
\title{\huge {Faster Reinforcement Learning by Freezing Slow States}}
\author{{\Large Yijia Wang$^1$ and Daniel R. Jiang$^{2,1}$\\
$^1$University of Pittsburgh, $^2$AI at Meta}}
\date{}
\begin{document}
\maketitle

\begin{abstract}
    We study infinite horizon Markov decision processes (MDPs) with \emph{fast-slow} structure, where some state variables evolve rapidly (``fast states'') while others change more gradually (``slow states''). This structure commonly arises in practice when decisions must be made at high frequencies over long horizons, and where slowly changing information still plays a critical role in determining optimal actions. Examples include inventory control under slowly changing demand indicators or dynamic pricing with gradually shifting consumer behavior. Modeling the problem at the natural decision frequency leads to MDPs with discount factors close to one, making them computationally challenging. We propose a novel approximation strategy that ``freezes'' slow states during phases of lower-level planning and subsequently applies value iteration to an auxiliary upper-level MDP that evolves on a slower timescale. Freezing states for short periods of time leads to easier-to-solve lower-level problems, while a slower upper-level timescale allows for a more favorable discount factor. On the theoretical side, we analyze the regret incurred by our frozen-state approach, which leads to simple insights on how to trade off regret versus computational cost. Empirically, we benchmark our new frozen-state methods on three domains, (i) inventory control with fixed order costs, (ii) a gridworld problem with spatial tasks, and (iii) dynamic pricing with reference-price effects. We demonstrate that the new methods produce high-quality policies with significantly less computation, and we show that simply omitting slow states is often a poor heuristic.

\end{abstract}

\section{Introduction}

Markov decision processes (MDPs) offer a foundational framework for modeling sequential decision making under uncertainty. In many real-world applications, decisions must be made at high frequencies over extended time horizons. From a modeling standpoint, this translates into MDPs with discount factors close to one (which captures the long-term nature of the objective). However, such long-horizon problems can pose computational challenges: standard algorithms like value iteration (VI), which is a core building block in many reinforcement learning (RL) algorithms, rely on the contraction property of the Bellman operator to guarantee convergence. As the discount factor approaches one, this contraction weakens, leading to slow convergence and degraded performance within a given computational budget \citep{bertsekas1996neuro, jiang2015dependence}.

In a wide range of applications, state variables naturally evolve at different timescales. For example, in inventory control problems, the inventory level may fluctuate rapidly from one period to the next, while exogenous factors that influence demand may evolve more gradually. In dynamic pricing problems, the (endogenous) price history can change frequently, but user behavior shifts more slowly. We refer to this as \emph{fast-slow} structure, where the state decomposes into fast-evolving and slow-evolving components. Although slow states may evolve gradually, they often encode critical contextual information, and omitting them can result in suboptimal policies and high regret.

Solving MDPs that fully capture both fast and slow dynamics at the natural decision frequency typically requires long effective horizons, which can pose serious computational burdens. To manage complexity, practitioners often resort to simplifying assumptions, such as fixing or omitting slow states altogether. While this reduces computational burden, it can degrade policy quality by discarding important information. This paper introduces and formalizes the notion of \emph{fast-slow MDPs}, and studies the computational implications of a novel approximation strategy called \emph{frozen-state value iteration} (FSVI).

\subsection{Main Contributions}
In this paper, we propose an approximation strategy that uses two levels of planning, aiming to provide a middle ground between solving the full MDP and completely ignoring slow states. In our approach, we first temporarily ``freeze'' the slow state during \emph{lower-level} planning. Concretely, we compute policies based on finite-horizon MDPs that are conditioned on a fixed slow state. Subsequently, we apply value iteration to an auxiliary \emph{upper-level} MDP that evolves on a slower timescale by leveraging the lower-level policies. This approach makes the lower-level problems more tractable, while the slower upper-level dynamics enable the use of a more favorable (i.e., smaller) effective discount factor.  
Specifically, we make the following contributions:
\begin{enumerate}
    \item We first formally define a fast-slow MDP and provide an (exact) reformulation into an MDP with hierarchical structure. The \emph{upper level} is a slow-timescale infinite horizon MDP and the \emph{lower level} is a fast-timescale finite horizon MDP with $T$ periods. One period of the upper-level problem is composed of a complete lower-level problem. 
    \item We propose a \emph{frozen-state approximation} to the reformulated MDP, along with an associated \emph{frozen-state value iteration} (FSVI) algorithm, where the slow state is frozen in the lower-level problem, while each period in the upper-level problem ``releases'' the slow state. Computational benefits arise in several ways: (1) frozen states simplify the dynamics of the lower-level MDP (fewer successor states), (2) the discount factor in the upper-level problem is more favorable due to the reuse of the lower-level policy, which is computed only once, (3) the lower-level MDP thus becomes separable into independent MDPs, opening the door to speedups via parallel computation. Solving the frozen-state approximation gives a policy that switches between one action from upper-level policy and $T-1$ actions from the lower-level policy. We give a theoretical analysis that upper bounds the expected regret from applying this policy compared to the optimal policy.

    \item We adapt our frozen-state approach to a more practical reinforcement learning setting where a generative model (or simulator) is available but the MDP is not fully known. Specifically, we develop frozen-state fitted value iteration (FSFVI), a version of fitted value iteration \citep{munos2008finite} that incorporates our two-level planning strategy. FSFVI can learn effective policies from sampled transitions, making our method applicable to model-free or simulation-based environments. We also provide a theoretical regret bound for a special case of FSFVI using linear function approximation (see Appendix \ref{sec:avi}).

    \item Lastly, we perform a systematic empirical study on three problem settings: (1) inventory control with fixed costs, (2) gridworld with spatial tasks, and (3) dynamic pricing with reference effects. We show that our proposed algorithms (based on the frozen-state approximation) quickly converge to good policies using significantly less computation compared to standard methods. Notably, our results show that ignoring the slow state, i.e., modeling the system as if the true state variable only includes the fast component,\footnote{See Appendix \ref{sec:app:ignore} for details on how we simulate this behavior in our experiments.} leads to poor results.
\end{enumerate}

\section{Related Work}

\looseness-1 In this section, we provide a brief review of related literature. 
First, there exists a stream of literature focused on sequential decision making problems with \emph{exact} hierarchical, multi-timescale structure.
\citet{chang2003multitime} study multi-timescale MDPs, which are composed of $M$ different decisions that are made on $M$ different discrete timescales. The authors consider the impact of upper-level states and actions on the transition of the lower levels, an idea that is also present in our fast-slow framework. Multi-timescale MDPs have often been applied in supply chain problems, including production planning in semiconductor fabrication \citep{panigrahi2004hierarchical, bhatnagar2006actor}, hydropower portfolio management \citep{zhu2006hydropower}, and strategic network growth for reverse supply chains \citep{wongthatsanekorn2010multi}.
\citet{wang2018demand} propose a row-generation-based algorithm to solve a linear programming formulation of the multi-timescale MDP. 
\citet{jacobson1999piecewise} consider ``piecewise stationary'' MDPs, where the transition and reward functions are ``renewed'' every $N+1$ periods, motivated by problems where routine decisions are periodically interrupted by higher-level decisions. For the case of large renewal periods, they propose a policy called the ``initially stationary policy'' which uses a fixed decision rule for some number of initial periods in each renewal cycle. 
Our fast-slow model focuses on a novel fast-slow structure present in many MDPs and unlike the above work, \emph{does not assume} any natural/exact hierarchical structure. Instead, we focus on how a particular type of (frozen-state) hierarchical structure can be used as an \emph{approximation} to the true MDP. However, we note that many MDPs with natural two-timescale structure can also fit into our framework, and therefore, given that perspective, our model can be viewed as a generalization.

Our proposed frozen-state algorithms are also related to literature on \emph{hierarchical reinforcement learning}, which are methods that artificially decompose a complex problem into smaller sub-problems \citep{barto2003recent}. 
Approaches include the options framework \citep{sutton1999between}, the hierarchies of abstract machines (HAMs) approach \citep{parr1998hierarchical}, and MAXQ value function decomposition \citep{dietterich2000hierarchical}. 
 
Out of these three approaches, the options framework is most closely related to this paper.
A \emph{Markov option} (also called a \emph{macro-action} or \emph{temporally extended action}) is composed of a policy,
a termination condition, %
and an initiation set %
\citep{sutton1999between, precup2000temporal}. 
One of the biggest challenges is to automatically construct options that can effectively speed up reinforcement learning. 
A large portion of work in this direction is based on \emph{subgoals}, states that might be beneficial to reach \citep{digney1998learning, mcgovern2001automatic, jonsson2005causal, ciosek2015value, wang2022subgoal}. 
The subgoals are identified by utilizing the learned model of the environment \citep{menache2002q, mannor2004dynamic, csimcsek2004using, csimcsek2005identifying}, or through trajectories without learning a model \citep{mcgovern2001automatic, stolle2002learning}.

The options (and subgoals) framework is largely motivated by robotics and navigation-related tasks, while we are particularly interested in solving problems that arise in the operations research and operations management domains. The problems that we study do not decompose naturally into ``subgoals,'' leading us to identify and focus on the fast-slow structure, which does indeed arise naturally for many problems of interest.

Another work that is related to the options framework is \citet{song2020temporal}, who divide finite-horizon MDPs into two sub-problems along the time horizon, and concatenate their optimal solutions to generate an overall solution. Our paper also, in a sense, divides MDPs along the time horizon, but our work is quite different from that of \citet{song2020temporal} in that we work on \emph{infinite horizon problems} and convert them into auxiliary problems that operate on a \emph{slower timescale}, which takes advantage of reusable lower-level policies. More importantly, the various methods we propose all build on the idea of freezing certain states to reduce computational cost, which is unique to our approach and, to our knowledge, this is a novel direction that has not been proposed before.

Finally, our work contributes to the broader literature on leveraging MDP structure to improve reinforcement learning and approximate dynamic programming algorithms. Prior research has exploited structural properties of the value function, such as convexity \citep{pereira1991multi,philpott2008convergence,nascimento2009optimal,benjaafar2022dynamic} and monotonicity \citep{papadaki2002exploiting,jiang2015approximate}, as well as structural features of the optimal policy \citep{kunnumkal2008using}. Additional examples include factored MDPs \citep{osband2014near} and weakly-coupled systems \citep{killian2021q,el2023weakly,brown2025unit,nadarajah2025self}. Our contribution to this line of work lies in identifying a new form of MDP structure, which then informs the design of a new algorithmic approach.

\section{Fast-Slow MDPs} \label{sec:GeneralModel}
In this section, we introduce the \emph{base model}, the original MDP to be solved and formally introduce the notion of a \emph{fast-slow} MDP. We then provide a hierarchical reformulation of the base model using fixed-horizon policies, and show the equivalence (in optimal value) between the two models.
Due to the considerable amount of notation in this paper, we provide a table of notation in Table \ref{tab:notation} for reference.

\begin{table}[hb!]
\centering
{\footnotesize
\vspace{10pt}
\begin{tabular}{@{}ll@{}}
\toprule
\textbf{Symbol} & \textbf{Description} \\ \midrule
$\mathcal{S}$ & State space, with generic element $s \in \mathcal S$ \\[0.5em]
$\mathcal{X}$ & Slow state space, with generic element $x \in \mathcal X$ \\[0.5em]
$\mathcal{Y}$ & Fast state space, with generic element $y \in \mathcal Y$ \\[0.5em]
$\mathcal{A}$ & Action space, with generic element $a \in \mathcal A$ \\[0.5em]
$\mathcal{W}$ & Exogenous noise space, with generic element $w \in \mathcal W$ \\[0.5em]
$f$ & Transition function: $f:\mathcal{S}\times\mathcal{A}\times\mathcal{W}\to\mathcal{S}$ \\[0.5em]
$f_\mathcal{X}$ & Slow state transition: $f_\mathcal{X}:\mathcal{S}\times\mathcal{A}\times\mathcal{W}\to\mathcal{X}$ \\[0.5em]
$f_\mathcal{Y}$ & Fast state transition: $f_\mathcal{Y}:\mathcal{S}\times\mathcal{A}\times\mathcal{W}\to\mathcal{Y}$ \\[0.5em]
$r$ & Reward function: $r:\mathcal{S}\times\mathcal{A}\to [0, r_{\max}]$ \\[0.5em]
$\gamma$ & Discount factor \\[0.5em]
$\alpha$ & Constant in the fast-slow property controlling changes in slow state \\[0.5em]
$d_\mathcal{Y}$ & Bound on the one-step change in the fast state \\[0.5em]
$T$ & Number of periods in the lower-level problem \\[0.5em]
$U^*$ & Optimal value function of the base MDP \\[0.5em]
$\nu^*$ & The optimal policy for the base MDP \\ [0.5em]
$L_r$, $L_f$, $L_U$ & Lipschitz constants for $r$, $f$, and $U^*$ \\ [0.5em]
$(\mu, \boldsymbol{\pi})$ & Generic $T$-period policy with upper-level policy $\mu$ and lower-level policy $\boldsymbol{\pi} = (\pi_1,\dots,\pi_{T-1})$ \\[0.5em]
$R(s_0, \mu(s_0), \boldsymbol{\pi})$ & $T$-period reward of hierarchical reformulation associated with $T$-period policy $(\mu, \boldsymbol{\pi})$\\[0.5em]
$(\mu^*,\boldsymbol{\pi}^*)$ & The optimal policy for the hierarchical reformulation; has same value as $\nu^*$\\ [0.5em]
$\tilde{R}(s_0,a,J_1)$ & Approximation to $T$-period reward using lower-level value function approximation $J_1$ \\[0.5em] 
$J_t^*$, $V^k$ & Lower- and upper-level value function approximation used in FSVI \\[0.5em] 
$(\tilde{\mu}^k, \tilde{\boldsymbol{\pi}}^*)$ & Output policy of FSVI after $k$ iterations\\ [0.5em]
$U^i$ & Value function approximation used in base VI \\[0.5em] 
$\nu^k$ & Output policy of base VI after $k$ iterations \\ [0.5em]

$H$ & Bellman operator for the base MDP \\ [0.5em]

$\tilde{H}$ & Bellman operator for lower-level frozen-state problem (discount factor $\gamma$) \\ [0.5em]

$F_{J_1, \boldsymbol{\pi}}$ & Bellman operator for  upper-level problem given lower-level $J_1$ and $\boldsymbol{\pi}$ (discount factor $\gamma^T$)\\ [0.5em]

\bottomrule
\end{tabular}
}
\caption{Frequently used notation in the fast-slow MDP framework.}
\label{tab:notation}
\end{table}

\subsection{Base Model}

Consider a discrete-time MDP $\langle \mathcal{S}, \mathcal{A}, \mathcal{W}, f, r, \gamma \rangle$, where $\mathcal{S}$ is the finite state space, $\mathcal{A}$ is the finite action space, $\mathcal{W}$ is the space of possible realizations of an exogenous, independent and identically distributed (i.i.d.) noise process $\{ w_t \}$ defined on a discrete probability space $(\Omega, \mathcal F, \mathbb P)$, $f: \mathcal{S}\times \mathcal{A} \times \mathcal{W} \rightarrow \mathcal{S}$ is the transition function, $r: \mathcal{S}\times\mathcal{A} \rightarrow [0,r_\textnormal{max}] $ is the bounded reward function, and $\gamma\in[0, 1)$ is the discount factor for future rewards \citep{puterman2014markov}. The objective is
\begin{equation} \label{eq:optimal_value}
    U^*(s) = \max_{\{\nu_t\}} \; \E \left[ \sum_{t=0}^{\infty} \gamma^t \, r\bigl(s_t,\nu_t(s_t)\bigr) \, \Bigr| \, s_0 = s \right],
\end{equation}
where states transition according to $s_{t+1} = f(s_t, a_t, w_{t+1})$ and we optimize over sequences of policies $\nu_t : \mathcal S \rightarrow \mathcal A$, which are deterministic mappings from states to actions. The expectation is taken over exogenous noise process $\{w_t\}_{t=1}^\infty$. We assume throughout that $\mathcal S$, $\mathcal A$, $\mathcal X$, $\mathcal Y$, $\mathcal S \times \mathcal A$, and $\mathcal X \times \mathcal Y$ are equipped with the Euclidean metric,\footnote{However, as long as the relevant spaces are metric spaces, the framework continues to hold. We choose Euclidean metrics as they are natural for our applications.} which is naturally the case for many applications. 

\begin{assumption}[Separability and the Fast-Slow Property]
Suppose the following hold:
\begin{enumerate}[(i)]
    \item The state space $\mathcal{S}$ is separable and can be written as $\mathcal S = \mathcal{X}\times\mathcal{Y}$. We call $\mathcal X$ the ``slow state space'' and $\mathcal Y$ the ``fast state space.''
    
    \item Let $s_{t} = (x_t, y_t) \in \mathcal S$, where $x_t \in \mathcal X$ is the slow state and $y_t \in \mathcal Y$ the fast state, $a_t \in \mathcal A$, and $w_{t+1} \in \mathcal W$. The transition dynamics $s_{t+1} = f(s_t, a_t, w_{t+1}) \in \mathcal S$ can be written with the notation:
    \[
    x_{t+1} = f_\mathcal{X}(x_t,y_t,a_t, w_{t+1}) \in \mathcal X \quad \text{and} \quad y_{t+1} = f_\mathcal{Y}(x_t, y_t, a_t, w_{t+1}) \in \mathcal Y,
    \]
    for some $f_\mathcal{X}: \mathcal{S}\times\mathcal{A} \times \mathcal{W} \rightarrow \mathcal{X}$ and $f_\mathcal{Y}: \mathcal{S}\times \mathcal{A}\times \mathcal{W} \rightarrow \mathcal{Y}$.
    
    \item For any state $(x,y) \in \mathcal S$, action $a \in \mathcal A$, and exogenous noise $w \in \mathcal W$, 
    suppose the one-step transitions of $x$ and $y$ satisfy:
    \begin{equation*}
        \bigl\|y-f_\mathcal{Y}(x,y,a,w)\bigr\|_2 \leq d_\mathcal{Y} \quad \text{and} \quad \bigl \|x-f_\mathcal{X}(x,y,a,w) \bigr\|_2 \leq \alpha d_\mathcal{Y},
    \end{equation*}
    for some $d_\mathcal{Y} < \infty$ and $\alpha \in [0, 1]$.

\end{enumerate}
\label{assumption.fast_slow_transition}
\end{assumption}

\begin{remark}
Note that one particularly instructive example is the case of exogenous slow states, where $x_{t+1} = f_\mathcal{X}(x_t, w_{t+1})$. Here, the transition does not depend on the action $a_t$, nor does it depend on the fast state $y_t$. Such a model is common in practice: examples of exogenous slow states include prices, weather conditions, and other environmental variables that are not influenced by the decision maker's actions or the states of the primary system. See, e.g., \cite{yu2009arbitrarily}, who study a related model called the ``arbitrarily modulated MDP.''
\end{remark}

\begin{assumption}[Lipschitz Properties]
\label{assumption:lipschitz}
Suppose that the reward function $r$, transition function $f$, and optimal value function $U^*$ are Lipschitz with respect to $\| \cdot \|_2$:
\begin{align}
|r(s,a) - r(s',a')| &\le L_r \,\bigl \|(s,a) - (s',a') \bigr\|_2, \label{assumption.lipschitz_reward}\\
 \bigl \|f(s,a,w) - f(s',a',w)\bigr\|_2 &\le L_f \,\bigl \|(s,a) - (s',a') \bigr\|_2\label{eq.lipschitz_f},\\
  \bigl \|U^*(s) - U^*(s') \bigl\|_2 &\le L_U \bigl \|s-s'\bigr\|_2\label{eq.lipschitz_Ustar},
\end{align}
for some Lipschitz constants $L_r$, $L_f$, and $L_U$. Lipschitz assumptions are common in the literature; see, for example, \cite{ok2018exploration}, \cite{domingues2021kernel}, \cite{sinclair2020adaptive}, and \cite{sinclair2022adaptive}. In Appendix \ref{appendix:bounds_on_LU}, we give bounds on $L_U$ in terms of $L_r$ and $L_f$. While we could have used those results directly and omitted the assumption on $L_U$, we opt to include \eqref{eq.lipschitz_Ustar} to increase the clarity of our results.
\end{assumption}

\begin{definition}[Fast-Slow MDP]
An MDP $\langle \mathcal{S}, \mathcal{A}, \mathcal{W}, f, r, \gamma \rangle$ is called a $(\alpha, d_\mathcal{Y})$-fast-slow MDP if Assumptions~\ref{assumption.fast_slow_transition} and \ref{assumption:lipschitz} are satisfied. 
\end{definition}

Given any state $(x,y)$, noise $w$, and policy $\nu$, we use the notation $f^\nu(x,y,w) = f(x,y,\nu(x,y), w)$, $f_\mathcal{X}^{\nu}(x,y,w) = f_\mathcal{X}\bigl(x,y,\nu(x,y),w\bigr)$, $f_\mathcal{Y}^{\nu}(x,y,w) = f_\mathcal{Y}\bigl(x,y,\nu(x,y),w\bigr)$, and $r(x,y,\nu) = r(x,y,\nu(x,y))$ throughout the paper.
The value of a stationary policy\footnote{It is well-known that there exists an optimal policy to (\ref{eq:optimal_value}) that is both stationary and deterministic. See \cite{puterman2014markov}.} $\nu$ at state $(x,y)$ is the expected cumulative reward starting from state $(x,y)$ following policy $\nu$, i.e., 
\begin{equation*}\label{eq.Bellman_original_policy}
    U^{\nu}(x,y) = \E \left[  \sum_{t=0}^{\infty} \gamma^t r\bigl(x_t,y_t,\nu \bigr) \, \Bigr| \, (x_0, y_0) = (x, y) \right] = r\bigl(x,y,\nu\bigr) + \gamma\, \E\bigl[U^{\nu}(x',y')\bigr],
\end{equation*}
where $(x', y') = f^\nu(x,y,w)$ and $(x_{t+1}, y_{t+1}) = f^{\nu}(x_t,y_t,w_t)$ for all $t$.
The optimal value function at state $U^*(x,y)$, as defined in (\ref{eq:optimal_value}), satisfies the Bellman equation, i.e., 
\begin{equation}\label{eq.Bellman_original}
    U^*(x,y) = \max_{a} \; r(x,y,a) + \gamma\, \E\bigl[U^*(x',y')\bigr].
\end{equation}
Denote by $H$ the Bellman operator of the base model; for any state $(x,y)$ and value function $U$,
\begin{equation} \label{eq.Vbase_operator}
    (H U) (x,y) = \max_{a} \, r(x,y,a) + \gamma \, \E \bigl[U(f(x,y,a,w))\bigr].
\end{equation}
A policy that is greedy with respect to the optimal value function, i.e., 
\[
\nu^*(x,y) = \argmax_{a} \; r(x,y,a) + \gamma \, \E \bigl[U^*(x',y')\bigr].
\]
is an optimal policy, and the optimal value $U^*$ and the value of the optimal policy $U^{\nu^*}$ are the same.

\subsection{Hierarchical Reformulation using Fixed-Horizon Policies} \label{sec:original_groupT}
In this section, we derive an exact hierarchical reformulation with the original timescale broken up into groups of $T$ periods each. The reformulation holds for a general MDP $\langle \mathcal{S}, \mathcal{A}, \mathcal{W}, f, r, \gamma \rangle$, but the concepts that we introduce in this section will serve as the basis for developing our frozen-state computational approach for fast-slow MDPs.

Denote $(\mu, \boldsymbol{\pi})$ a \emph{$T$-horizon policy}, which is a sequence of $T$ policies $(\mu,\pi_1,\ldots,\pi_{T-1})$, $\mu:\mathcal{S} \rightarrow \mathcal{A}$, $\pi_t:\mathcal{S} \rightarrow \mathcal{A}$ and $\boldsymbol \pi = (\pi_1,\ldots,\pi_{T-1})$.
Following $(\mu, \boldsymbol{\pi})$ means that we take the first action according to $\mu$ and then next $T-1$ actions according to $\boldsymbol{\pi}$. Given any state $s_0$, the $T$-period reward function (of the base model) associated with $(\mu, \boldsymbol{\pi})$ is written as:
\begin{equation} \label{eq.r_T_original}
    R(s_0, \mu(s_0), \boldsymbol{\pi}) = r(s_0,\mu) + \sum_{t=1}^{T-1} \gamma^t \, r(s_t,\pi_t),
\end{equation}
where $s_1 = f^{\mu}(s_0,w_1)$ and $s_{t+1} = f^{\pi_{t}}(s_t,w_{t+1})$ for $t > 0$.

A \emph{$T$-periodic policy} $(\mu, \boldsymbol{\pi})$ refers to the infinite sequence that repeatedly applies the $T$-horizon policy $(\mu, \boldsymbol{\pi})$, i.e., $(\mu, \boldsymbol{\pi}, \mu, \boldsymbol{\pi}, \ldots)$. Note that despite it not being a stationary policy, the $T$-periodic policy $(\mu, \boldsymbol{\pi})$ can be implemented in the infinite horizon problem defined in (\ref{eq:optimal_value}). The value of the $T$-periodic policy $(\mu, \boldsymbol{\pi})$ at state $s_0$ is
\begin{equation*} \label{eq.Bellman_T_original_policy}
    \bar{U}^{\mu}(s_0, \boldsymbol{\pi}) 
    =\E\left[ \sum_{k=0}^{\infty} \gamma^{kT} R(s_k, \mu(s_k), \boldsymbol{\pi}) \, \Bigr| \, s_0=s \right]
    =\E\bigl[ R(s_0, \mu(s_0), \boldsymbol{\pi}) + \gamma^T \, \bar{U}^{\mu}(s_T, \boldsymbol{\pi}) \bigr],
\end{equation*}
where, again, $s_1 = f^{\mu}(s_0,w_1)$ and $s_{t+1} = f^{\pi_{t}}(s_t,w_{t+1})$ for $t>0$ within each cycle of $T$ periods. Figure \ref{fig.T-periodic} compares stationary policy $\nu$ and a $T$-periodic policy $(\mu, \boldsymbol{\pi})$ for the case of $T=4$. In the figure, we also illustrate how rewards can be written in an ``aggregated'' fashion over the $T$ periods using \eqref{eq.r_T_original}.
\begin{figure}[ht]
	\centering
	\includegraphics[width=0.8\textwidth]{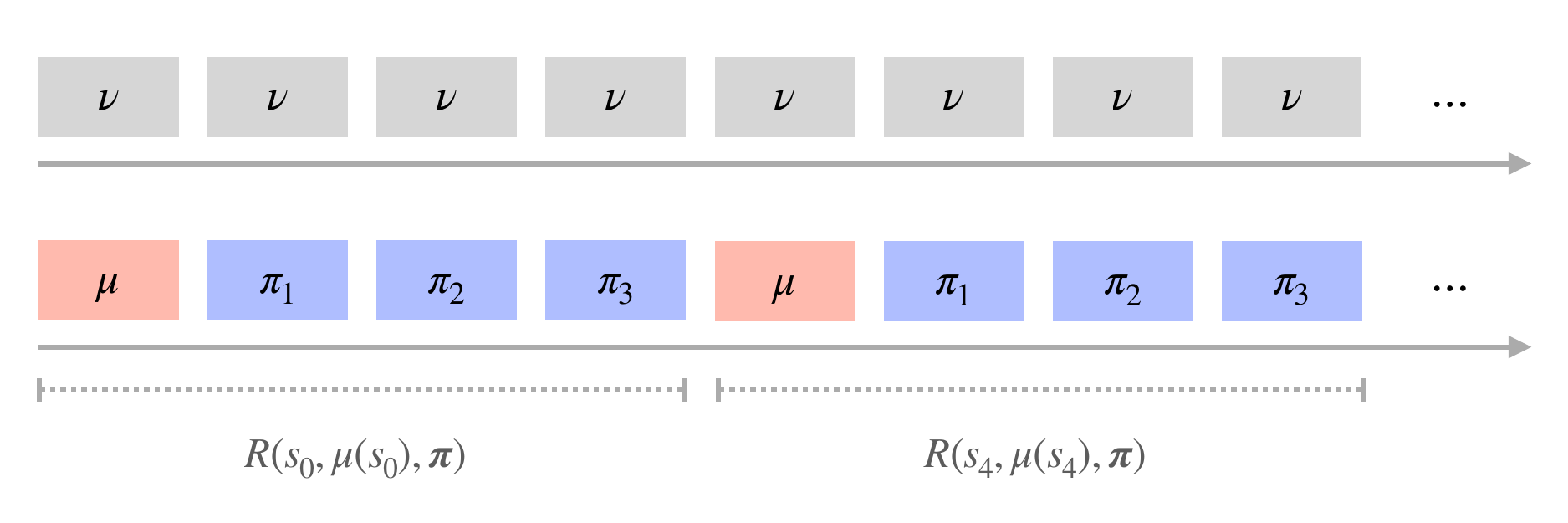}
	\caption{Illustration of a stationary policy $\mu$ (upper timeline) and a $T$-periodic policy $(\mu, \boldsymbol{\pi})$ (lower timeline) for $T=4$. The periods covered by the $T$-period reward associated with $(\mu, \boldsymbol{\pi})$ is visualized with brackets in the lower timeline.}
	\label{fig.T-periodic}
\end{figure}

The optimal value function satisfies the following Bellman equation:
\begin{equation} \label{eq.Bellman_T_original}
    \bar{U}^*(s_0) 
    =\max_{(\mu, \boldsymbol{\pi})} \, \E \bigl[ R(s_0, \mu(s_0), \boldsymbol{\pi}) + \gamma^T \bar{U}^*(s_T) \bigr],
\end{equation}
where the ``action'' now involves selecting the $\boldsymbol{\pi}$ as well.
Denote $(\mu^*,\boldsymbol{\pi}^*)$ an optimal $T$-periodic policy, which solves (\ref{eq.Bellman_T_original}).
In Proposition \ref{thm:stationary_opt_policy}, we prove that the base model \eqref{eq.Bellman_original} and the hierarchical reformulation \eqref{eq.Bellman_T_original} are equivalent in a certain sense.

\begin{proposition}
\label{thm:stationary_opt_policy}
    Given an MDP $\langle \mathcal{S}, \mathcal{A}, \mathcal{W}, f, r, \gamma \rangle$, the following hold:
    \begin{enumerate}[(i)]
        \item The optimal value of the base model \eqref{eq.Bellman_original} is equal to the optimal value of the hierarchical reformulation \eqref{eq.Bellman_T_original}, i.e., $U^* = \bar{U}^*$.
        \item An optimal stationary policy $\nu^*$ with respect to the base model \eqref{eq.Bellman_original} is also an optimal policy for the hierarchical reformulation \eqref{eq.Bellman_T_original}, i.e., $\bar{U}^* = \bar{U}^{\nu^*}$.
    \end{enumerate}
\end{proposition}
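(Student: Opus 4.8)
The plan is to argue entirely at the level of Bellman operators. Write $\bar H$ for the Bellman operator of the hierarchical reformulation \eqref{eq.Bellman_T_original}, i.e.\ $(\bar H V)(s_0) = \max_{(\mu,\boldsymbol\pi)} \E\bigl[R(s_0,\mu(s_0),\boldsymbol\pi) + \gamma^T V(s_T)\bigr]$, where $s_1 = f^\mu(s_0,w_1)$ and $s_{t+1}=f^{\pi_t}(s_t,w_{t+1})$ for $t>0$. Because $\mathcal S$ and $\mathcal A$ are finite, the hierarchical reformulation is itself a discounted MDP: its ``action'' at state $s_0$ is the pair consisting of $\mu(s_0)\in\mathcal A$ and a continuation $\boldsymbol\pi\in(\mathcal A^{\mathcal S})^{T-1}$ (a finite set), its one-step transition sends $s_0$ to $s_T$, its reward is $R$, and its discount factor is $\gamma^T\in[0,1)$. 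Hence, by standard theory \citep{puterman2014markov}, $\bar H$ is a monotone $\gamma^T$-contraction on bounded functions, its unique fixed point is $\bar U^*$, and $\bar U^*$ equals the optimal value over all policies of that MDP. The whole proof then reduces to the single identity $\bar H U^* = U^*$: given this, uniqueness of the fixed point yields $\bar U^* = U^*$, which is part (i), and part (ii) follows by exhibiting a maximizer.

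For the inequality $\bar H U^* \le U^*$, fix a state $s_0$ and an arbitrary $T$-horizon policy $(\mu,\boldsymbol\pi)$. The base-model Bellman equation \eqref{eq.Bellman_original} gives $U^*(s_0) \ge r(s_0,\mu) + \gamma\,\E[U^*(s_1)\mid s_0]$ and, for $1\le t\le T-1$, $U^*(s_t) \ge r(s_t,\pi_t) + \gamma\,\E[U^*(s_{t+1})\mid s_t]$. Substituting these $T$ inequalities into one another and using the tower property together with the Markov property of $\{s_t\}$ under i.i.d.\ noise, I obtain
\[
U^*(s_0) \,\ge\, \E\bigl[\, r(s_0,\mu) + \sum_{t=1}^{T-1}\gamma^t\, r(s_t,\pi_t) + \gamma^T U^*(s_T) \,\big|\, s_0 \bigr] \,=\, \E\bigl[\, R(s_0,\mu(s_0),\boldsymbol\pi) + \gamma^T U^*(s_T) \,\big|\, s_0 \bigr],
\]
where the last step is the definition \eqref{eq.r_T_original} of $R$. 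Since $(\mu,\boldsymbol\pi)$ was arbitrary, taking the maximum over $(\mu,\boldsymbol\pi)$ gives $(\bar H U^*)(s_0) \le U^*(s_0)$.

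For the reverse inequality, plug a single specific $T$-horizon policy into $\bar H$: take $\mu=\nu^*$ and $\pi_t=\nu^*$ for all $t$, where $\nu^*$ is an optimal stationary deterministic policy for the base model, which exists and satisfies $U^{\nu^*}=U^*$ \citep{puterman2014markov}. Since $U^* = U^{\nu^*}$ satisfies the policy-evaluation equation \emph{with equality} along $\nu^*$, the same $T$-fold substitution now holds with equalities throughout, giving $\E\bigl[R(s_0,\nu^*(s_0),(\nu^*,\ldots,\nu^*)) + \gamma^T U^*(s_T)\mid s_0\bigr] = U^*(s_0)$, hence $(\bar H U^*)(s_0) \ge U^*(s_0)$. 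Combining the two inequalities yields $\bar H U^* = U^*$, so $U^* = \bar U^*$, which is part (i). Moreover, the computation just done shows that the upper-level policy which at every state chooses action $\nu^*(s)$ together with continuation $\boldsymbol\pi=(\nu^*,\ldots,\nu^*)$ attains the maximum defining $(\bar H U^*)(s) = (\bar H\bar U^*)(s)$ at every $s$; that is, it is greedy with respect to $\bar U^*$, hence optimal for \eqref{eq.Bellman_T_original}. This policy is exactly the $T$-periodic policy that repeats $\nu^*$, so its hierarchical value equals $\bar U^*$, which is part (ii).

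The only step needing genuine care is the $T$-fold substitution: one must justify pushing each one-step Bellman (in)equality inside the running conditional expectation, which rests on the tower property and on the fact that, under i.i.d.\ exogenous noise, $\{s_t\}$ is a Markov chain whose transition at time $t$ depends only on $s_t$ and the chosen action. Boundedness of $r$ (hence of $R$) together with $\gamma^T\in[0,1)$ ensures all quantities are finite, so no convergence issues arise. I also note that this proposition makes no use of the fast–slow structure of Assumption~\ref{assumption.fast_slow_transition}: it holds for an arbitrary finite MDP, so the decomposition $\mathcal S = \mathcal X\times\mathcal Y$ plays no role in the argument.
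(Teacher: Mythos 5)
Your proposal is correct and follows essentially the same route as the paper: both expand the base-model Bellman equation $T$-fold, identify the result with the Bellman equation of the hierarchical reformulation, invoke uniqueness of its fixed point to get $U^*=\bar U^*$, and observe that the repeated policy $(\nu^*,\ldots,\nu^*)$ attains the maximum for part (ii). Your two-inequality packaging of the identity $\bar H U^* = U^*$ is just a slightly more explicit bookkeeping of the same $T$-fold substitution the paper performs directly.
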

\begin{proof}
See Appendix~\ref{sec:stationary_opt_policy_proof}.
\end{proof}
 Part (i) of Proposition~\ref{thm:stationary_opt_policy} is most relevant to our situation in the sense that the optimal $T$-periodic policy $(\mu^*,\boldsymbol{\pi}^*)$ is no better than the stationary optimal policy $\nu^*$. Therefore, solving the hierarchical reformulation \eqref{eq.Bellman_T_original} allows us to achieve the same value as the $\nu^*$, the optimal policy to the original base model \eqref{eq.Bellman_original}.

Note that, at this point, we have simply reformulated the problem, but \eqref{eq.Bellman_T_original} is no easier to solve than \eqref{eq.Bellman_original}. Despite the more favorable discount factor $\gamma^T$ in \eqref{eq.Bellman_T_original}, its action space is now effectively the space of $T$-horizon policies, rather than a single action $a$. In the next section, we propose an approximation that allows us to \emph{fix} a lower-level policy $\boldsymbol{\pi}$ and only optimize $\mu$. This allows us to enjoy the $\gamma^T$ discount factor while maintaining the same action space.

\section{The Frozen-State Approximation} \label{sec:hierarchical_approx}
We propose a \emph{frozen-state approximation}, where we make two simplifications to the $T$-period finite-horizon problem with terminal value $U^*$ that is embedded in each $T$-period ``time step'' of \eqref{eq.Bellman_T_original}, termed the \emph{lower-level problem}. First, motivated by the slow transitions of $x$ given in Assumption \ref{assumption.fast_slow_transition}, we ``freeze'' slow states for all $T$ periods of the lower-level problem, and second, we decouple the problem from the main MDP by solving an approximation with zero terminal value instead of $U^*$.

The first simplification reduces the computation needed to solve the finite-horizon MDP. The second simplification, due to the decoupling from the main problem, allows us to \emph{pre-compute} an approximation to $\boldsymbol{\pi}^*$, which we denote $\tilde{\boldsymbol{\pi}}^*$. By then fixing $\tilde{\boldsymbol{\pi}}^*$, we are able to construct an auxiliary problem that proceeds at a timescale that is a factor of $T$ \emph{slower} than the MDP of the base model (equivalently, the discount factor becomes $\gamma^T$ instead of $\gamma$), yet optimizing over the same action space. This naturally leads to algorithms with computational benefits (see Section~\ref{sec:frozen_state_error}). The number of periods $T$ to freeze the state is a parameter to the approach.  See Figure \ref{fig.hierarchical} for a high-level illustration; we provide a detailed description of the approach in the next few sections.

\begin{figure}[ht]
	\begin{subfigure}[ht]{0.48\textwidth}
		\includegraphics[width=\textwidth]{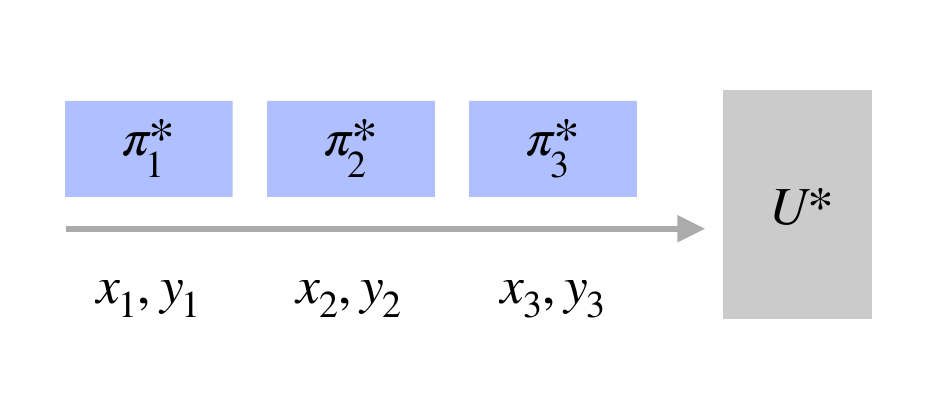}
		\caption{The lower-level problem (i.e., optimizing over $\boldsymbol{\pi}$) embedded in \eqref{eq.Bellman_T_original}.}
	\end{subfigure}
	\begin{subfigure}[ht]{0.48\textwidth}
	    \includegraphics[width=\textwidth]{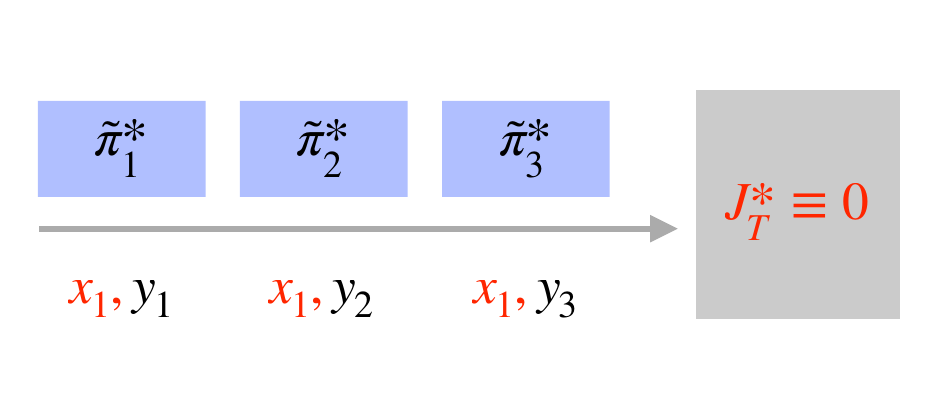}
	    \caption{The lower-level problem of the frozen-state approximation, with frozen $x_1$ and $J_T^*$ instead of $U^*$.}
	\end{subfigure}	
	\caption{A comparison of the lower-level problem of the hierarchical reformulation vs the lower-level problem of the frozen-state approximation.}
	\label{fig.hierarchical}
\end{figure}

\begin{remark}
It is important to note that the freezing of states only occurs ``within the algorithm'' as a step toward more efficient computation of policies. Our resulting policies are then implemented in the underlying base model MDP, which proceeds naturally according to its true dynamics. Our theoretical and empirical results always attempt to answer the question: how well does an approximate policy, which is computed by pretending certain states are frozen, perform in the true model?
\label{rem:pretend}
\end{remark}

\subsection{The Lower-Level MDP (Frozen Slow States)} 
\label{sec:lowerlevel}
We view the problem from period $1$ to period $T$ as the ``lower level'' of the frozen-state approximation.\footnote{This corresponds to the periods relevant to $\boldsymbol{\pi}$ from $(\mu, \boldsymbol{\pi})$ in the hierarchical reformulation \eqref{eq.Bellman_T_original}, whose structure the frozen-state approximation mimics.}
To form the lower-level problem of the frozen-state approximation, we consider this $T-1$ period problem in isolation:
\begin{equation}
J_1^{\tilde{\boldsymbol{\pi}}}(x, y) = \E \left[ \sum_{t=1}^{T-1} \gamma^{t-1} \, r(x_1, y_t, \tilde{\pi}_t) \,\Bigr| \, (x_1, y_1) = (x,y) \right] \quad \text{and} \quad \max_{\tilde{\boldsymbol{\pi}}} \; J_1^{\tilde{\boldsymbol{\pi}}}(x, y)
\label{eq.lower_objective}
\end{equation}
where $x_{t+1} = x_t = x$ remains frozen, $y_{t+1} = f^{\tilde{\pi}_t}_\mathcal{Y}(x, y_t, w_{t+1})$,
and $\tilde{\boldsymbol{\pi}} = (\tilde{\pi}_1, \ldots, \tilde{\pi}_{T-1})$. The problem \eqref{eq.lower_objective} can be solved using backward dynamic programming: accordingly, let the terminal $J^*_T \equiv 0$ and for $t=1,2,\ldots,T-1$, let
\begin{equation} \label{eq.Vlower}
    J^*_t(x,y) = \max_{a} \,  r(x,y,a) + \gamma \, \E \bigl[ J^*_{t+1}(x,y') \bigr],
\end{equation}
where $y' = f_\mathcal{Y}(x,y,a,w)$. We also have the standard recursion for the performance of a policy: \begin{equation}
    J^{\tilde{\boldsymbol{\pi}}}_t(x,y) = r(x,y,\tilde{\pi}_t(x,y)) + \gamma \, \E \bigl[ J^{\tilde{\boldsymbol{\pi}}}_{t}(x,f^{\tilde{\pi}_t}_\mathcal{Y}(x, y, w_{t+1})) \bigr],
    \label{eq.bellmaneval}
\end{equation}
with $J_T^{\tilde{\boldsymbol{\pi}}} \equiv 0$. We denote by $\tilde{H}$ the Bellman operator of the lower-level problem, which is on the same timescale as the base model (hence, the discount factor is $\gamma$) and looks similar to the Bellman operator $H$ defined in (\ref{eq.Vbase_operator}), but the transition of the slow-state $x$ is frozen. For any state $(x,y)$ and lower-level value function $J_{t+1}$,\footnote{We include time indexing on the value function to emphasize that this Bellman operator is used in a finite-horizon (i.e., non-stationary) setting, but the definition of $\tilde{H}$ itself does not depend on $t$.} define:
\begin{equation} \label{eq.Vlower_operator}
    \bigl (\tilde{H} J_{t+1} \bigr) (x,y) = \max_{a} \, r(x,y,a) + \gamma \, \E \bigl[J_{t+1}(x, f_{\mathcal{Y}}(x,y,a,w))\bigr].
\end{equation}
Note that (\ref{eq.Vlower_operator}) can be viewed as an approximation to (\ref{eq.Vbase_operator}). Analogously, let $\bar{H}^{\tilde{\boldsymbol{\pi}}}$ be the Bellman operator associated with \eqref{eq.bellmaneval}.

Also, let $\tilde{\boldsymbol{\pi}}^* = (\tilde{\pi}^*_1, \ldots, \tilde{\pi}^*_{T-1})$ be the finite-horizon policy that is greedy with respect to $J_t^*$:
\[
\tilde{\pi}^*_t(x, y) = \argmax_{a} \; r(x,y,a) + \gamma \, \E \bigl[J_{t+1}^*(x,y')\bigr].
\]
Note that using $J_T^* \equiv 0$ is aligned with the fact that the lower-level problem \eqref{eq.lower_objective} contains $T-1$ periods. It thus follows that $J_1^{\tilde{\boldsymbol{\pi}}^*} = J_1^*$, meaning that we can use $J_1^*$ to represent the value of the lower-level problem. We will see in the next section that the upper-level problem depends on an approximation of $J_1^{\tilde{\boldsymbol{\pi}}^*}$, which is the value of the lower-level policy in $T-1$ periods (and zero terminal value after that).

\begin{remark}[Non-zero terminal value] 
What about when one wants to solve the lower-level problem with some $J_T^* \neq 0$? In some applications, it may be natural or desirable to assign a non-zero terminal value at time $T$. This would allow for a heuristic continuation value based on the algorithm designer's domain knowledge. In this case, the lower-level policy $\tilde{\boldsymbol{\pi}}^*$ can still be computed using finite-horizon dynamic programming, starting from the non-zero $J_T^*$ as the terminal condition in the recursion \eqref{eq.Vlower}. However, for consistency of the upper-level problem, it is important to evaluate the performance of the resulting policy $\tilde{\boldsymbol{\pi}}^*$ using the original objective \eqref{eq.lower_objective} (without the terminal value). In other words, when $J_T^* \neq 0$, $J_1^{\tilde{\boldsymbol{\pi}}^*}$ is no longer guaranteed to be the same as $J_1^*$. Therefore, to use a non-zero terminal value, we need to perform an additional policy evaluation step for $\tilde{\boldsymbol{\pi}}^*$ to obtain $J_1^{\tilde{\boldsymbol{\pi}}^*}$. This is a relatively simple one-time calculation that can be performed concurrently with the computation of $\tilde{\boldsymbol{\pi}}^*$. We assume throughout the paper that $J_T^* \equiv 0$ for simplicity.
\end{remark}

It may not immediately be clear why freezing slow states is desired. There are two main computational benefits to solving \eqref{eq.Vlower} instead of an analogous version of \eqref{eq.Vlower} \emph{without} freezing $x$:
\begin{itemize}
    \item In algorithms like value iteration \citep{puterman2014markov}, each update requires computing expectations over successor states, and therefore the number of successor states impacts the number of operations for each step of value iteration. When $x$ is frozen, the number of successor states is much smaller since we only have successor fast states ($y'$): in other words, we only need to compute $\E \bigl[J_{t+1}^*(x,y')\bigr]$ instead of $\E \bigl[J_{t+1}^*(x',y')\bigr]$.\footnote{Even in the case that the expectation is approximated via sampling, the former requires sampling from a lower-dimensional successor state distribution.}
        
    \item  Second, \eqref{eq.Vlower} can effectively be viewed as $|\mathcal X|$ independent MDPs, one for each $x\in\mathcal X$, allowing for the possibility of computing the policy with additional parallelism. 
\end{itemize}

\subsection{The Upper-Level MDP (True State Dynamics)}
Let us now consider the upper-level problem of the frozen-state approximation, which is an infinite horizon problem with groups of $T$ periods aggregated. Denote the stationary upper-level policy by $\mu:\mathcal{S} \rightarrow \mathcal{A}$, which is the policy that we are attempting to optimize in the upper-level problem. The upper-level problem takes two ``inputs'' related to the lower-level problem: (1) $J_1$, an approximation of the optimal lower-level value $J_1^*$, (2) $\boldsymbol{\pi}$, a lower-level finite-horizon policy. Fixing these inputs, the value at state $s_0=(x_0,y_0)$ by executing policy $\mu$ is
\begin{equation*} \label{eq.Vupper_policy}
    \begin{aligned}
    V^{\mu}(s_0,J_1,\boldsymbol{\pi})
    = \E\bigl[\tilde{R}(s_0, \mu(s_0), J_1) + \gamma^T V^{\mu}(s_T(\mu, \boldsymbol{\pi}),J_1,\boldsymbol{\pi})\bigr],
    \end{aligned}
\end{equation*}
where $s_T(\mu, \boldsymbol{\pi})$ is the state reached according to the true system dynamics by following $(\mu, \boldsymbol{\pi})$, starting from $s_0$ and
\begin{equation}
\tilde{R}(s_0,a,J_1) = r(s_0,a) + \gamma \, J_1 \bigl(f(s_0,a,w)\bigr)
\label{eq:R_tilde_defn}
\end{equation}
is a one-step approximation to the $T$-period reward function $R$, defined in \eqref{eq.r_T_original}. Figure \ref{fig.upper_problem} helps to visualize the upper-level MDP.

\begin{figure}[h]
	\centering
	\includegraphics[width=0.9\textwidth]{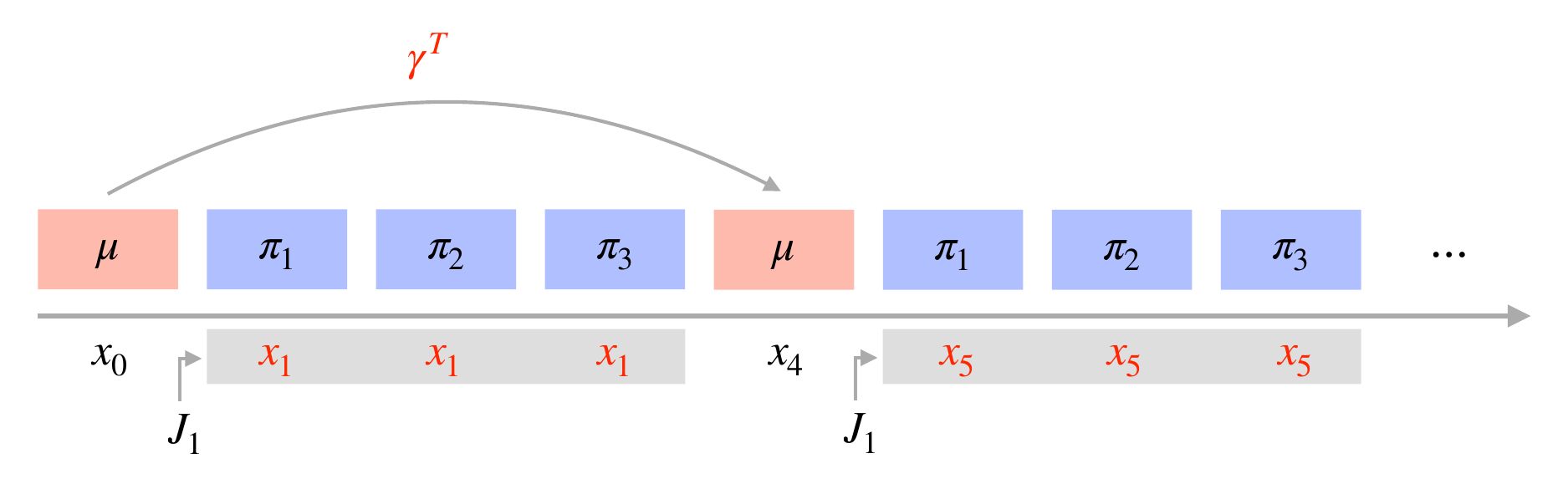}
	\caption{Illustration of the upper-level problem. Notably, the discount factor is $\gamma^T$ and the reward function, from the point of view of $\mu$, depends on the lower-level value function $J_1$. This value function is computed by freezing states, as visualized by the grey box.}
	\label{fig.upper_problem}
\end{figure}

The optimal value (for this approximation) at state $s_0$ can be written as
\begin{equation} \label{eq.Vupper}
    V^*(s_0,J_1,\boldsymbol{\pi})
    = \max_{a} \, \E\bigl[\tilde{R}(s_0, a, J_1) + \gamma^T V^*(s_T(a, \boldsymbol{\pi}),J_1,\boldsymbol{\pi})\bigr],
\end{equation}
where $s_T(a, \boldsymbol{\pi})$ is the state reached according to the true system dynamics by first taking action $a$ and then following $\boldsymbol{\pi}$, starting from $s_0$.

Throughout the paper, we use the notations $V^\mu(J_1, \boldsymbol{\pi}) : \mathcal S \rightarrow \mathbb R$ and $V^*(J_1, \boldsymbol{\pi}):\mathcal S \rightarrow \mathbb R$ to refer to the value function obtained when the MDP is evaluated or solved for a fixed $J_1$ and $\boldsymbol{\pi}$, respectively. We also define the Bellman operator associated with (\ref{eq.Vupper}):
\begin{equation} \label{eq.Vupper_operator}
    \bigl(F_{J_1, \boldsymbol{\pi}} V\bigr)(s_0)
    = \max_{a} \, \E\bigl[\tilde{R}(s_0, a, J_1) + \gamma^T V(s_T(a, \boldsymbol{\pi}))\bigr],
\end{equation}
which will become useful later on. 

\looseness-1 Recall that the optimal lower-level policy (for the frozen-state model) is denoted $\tilde{\boldsymbol{\pi}}^*$ and its optimal value is $J_1^*$. Let $\tilde{\mu}^*$ be the optimal upper-level policy corresponding to these inputs, i.e., the policy that is greedy with respect to $V^*(s_0,J_1^*,\tilde{\boldsymbol{\pi}}^*)$. Thus, $(\tilde{\mu}^*, \tilde{\boldsymbol{\pi}}^*)$ is the resulting $T$-periodic policy from the frozen-state hierarchical approximation; we refer to it as the \emph{$T$-periodic frozen-state policy}.

\section{Frozen-State Value Iteration} \label{sec:frozen_state_error}
In this section, we introduce the our new approach: the \emph{frozen-state value iteration} (FSVI) algorithm. We start by mentioning that the naive approach to solving the base model MDP (\ref{eq.Bellman_original}) is to directly apply standard VI (see, e.g., \cite{bertsekas1996neuro}). For completeness, we provide the full description of standard VI in Algorithm \ref{alg:exact_vi}. 

Our new approach, FSVI, is given in Algorithm~\ref{alg:vi_appr}. The main ideas of the algorithm are:
\begin{enumerate}
\item Solve the lower-level MDP with frozen states to obtain a policy $\tilde{\boldsymbol{\pi}}^*$ and its value $J_1^*$. Since the lower-level problem is a finite horizon MDP, it can be solved exactly using $T-1$ steps of VI. An alternative way of viewing the lower-level problem is that it is $|\mathcal X|$ independent MDPs, which we can potentially solve in parallel. Note that Algorithm \ref{alg:vi_appr} emphasizes this by looping over the slow state $x \in \mathcal X$ in Line 1.
\item Apply value iteration (VI) to the upper-level problem starting with some initial value function $V^0$, while using $J_1^*$ to approximate the $T$-horizon reward and $\tilde{\boldsymbol{\pi}}^*$ for $T$-step transitions. Note that this is an infinite-horizon MDP that operates at a slower timescale and enjoys a more favorable discount factor of $\gamma^T$.
\end{enumerate}

We denote the resulting value function approximation after $k$ iterations of value iteration as $V^k$, from which we obtain a policy $\tilde{\mu}^k$. The $T$-periodic policy output by FSVI is formed by combining $\tilde{\mu}^k$ with the optimal finite-horizon policy $\tilde{\boldsymbol{\pi}}^*$ from the lower-level MDP: $(\tilde{\mu}^k, \tilde{\boldsymbol{\pi}}^*)$.

\begin{algorithm}
	\SetKwInput{Input}{Input}
	\SetKwInput{Output}{Output}
	\BlankLine
	\Input{Initial values $U^0$, number of iterations $k$.}
		\medskip
	\Output{Approximation to the optimal policy $\nu^k$.}
	\BlankLine
	\For{$k' = 1, 2, \ldots, k $}{
	    \vspace{0.5em}
	    \For{$s$ in the state space $\mathcal{S}$}{
	        \vspace{0.5em}
    $U^{k'}(s) = \max_{a} r(s,a) + \gamma\, \E\bigl[U^{k'-1}(f(s,a,w))\bigr].$ 
        	\vspace{0.5em}
    	}
	}
	\vspace{0.5em}
	\For{$s$ in the state space $\mathcal{S}$}{
        \vspace{0.5em}
    	$\nu^{k}(s) = \argmax_{a} r(s,a) + \gamma\, \E\bigl[U^{k}(f(s,a,w))\bigr]$. 
    	\vspace{0.5em}
	}
	\caption{Exact VI for the Base Model (Base VI)}
	\label{alg:exact_vi}
\end{algorithm}%

\begin{algorithm}
	\SetKwInput{Input}{Input}
	\SetKwInput{Output}{Output}
	\BlankLine
	\Input{Initial values $J_T^* \equiv 0$ and $V^0$, number of iterations $k$.}
		\medskip
	\Output{Approximation of the $T$-periodic frozen-state policy $(\tilde{\mu}^k, \tilde{\boldsymbol{\pi}}^*)$ and $J_1^*$.}
	\BlankLine

    	\For{each slow state $x \in \mathcal{X}$}{
		\vspace{0.5em}
        	\For{$t = T-1, T-2, \ldots, 1$}{
		\vspace{0.5em}
		    \For{each fast state $y \in \mathcal{Y}$}{
        		\vspace{0.5em}
        		$J^*_t(x,y) = \max_a r(x,y,a) + \gamma  \, \E \bigl[ J^*_{t+1}(x,f_\mathcal{Y}(x,y,a,w)) \bigr]$. \\
        		\vspace{0.5em}
        		$\tilde{\pi}_t^*(x,y) = \argmax_a r(x,y,a) + \gamma \, \E \bigl[ J^*_{t+1}(x,f_\mathcal{Y}(x,y,a,w))\bigr]$.
        		\vspace{0.5em}
		    }
		}
	}
	\BlankLine
	\For{$k' = 1, 2, \ldots, k $}{
	    \vspace{0.5em}
	    \For{$s_0=(x_0,y_0)$ in the state space $\mathcal{X}\times \mathcal{Y}$}{
	        \vspace{0.5em}
        	$V^{k'}(x_0,y_0,J^*_1,\tilde{\boldsymbol{\pi}}^*) = \max_{a} \E\bigl[\tilde{R}(s_0, a, J^*_1) + \gamma^T V^{i-1}(x_T,y_T,J^*_1,\tilde{\boldsymbol{\pi}}^*)\bigr]$. 
        	\vspace{0.5em}
    	}
	}
	\vspace{0.5em}
	\For{$s_0=(x_0,y_0)$ in the state space $\mathcal{X}\times \mathcal{Y}$}{
        \vspace{0.5em}
    	$\tilde{\mu}^k(x_0,y_0) = \argmax_{a} \E\bigl[\tilde{R}(s_0, a, J^*_1) + \gamma^T V^k(x_T,y_T,J^*_1,\tilde{\boldsymbol{\pi}}^*)\bigr]$. 
    	\vspace{0.5em}
	}
	\caption{Frozen-State Value Iteration (FSVI)}
	\label{alg:vi_appr}
\end{algorithm}%

\subsection{Natural Application Domains for FSVI}

While the FSVI algorithm is general and can be applied to any MDP, we do not necessarily expect it to perform well in any arbitrary setting. Its approximation relies heavily on a separation of timescales and on the idea that some form of structured behavior can be captured by freezing slow variables over short horizons. That said, there are two broad classes of problems where we expect FSVI to work particularly well:

\begin{enumerate}
    \item \textbf{Problems with implicit hierarchical structure.}
Many real-world applications do not exhibit explicit hierarchy in their MDP formulations but nevertheless contain natural two-level behavioral structure. We illustrate this type of problem in Section \ref{sec:experiment} through our gridworld with spatial tasks domain: the agent first ``accepts'' a task (each task comes with a starting and ending location) to undertake and then executes a sequence of navigation actions to complete it. This problem was motivated by operations domains such as warehouse robotics \citep{yang2020novel} and on-demand spatial services from the driver's perspective \citep{chung2018bike,jiang2020optimistic,ong2021driver,ashkrof2024relocation}, where high-level task acceptance decisions are followed by fast-paced and relatively more myopic execution.
\item \textbf{Problems with natural cyclical behavior.} In some domains, optimal or near-optimal behavior follows a cyclic structure. A classic example is inventory control, where the policy repeatedly builds up and depletes inventory. We show results on inventory control with fixed costs in Section \ref{sec:experiment}. Related problems, such as energy or commodity trading \citep{kim2011optimal,carmona2010valuation,lohndorf2010optimal,jiang2015optimal} or inventory repositioning \citep{he2020robust,benjaafar2022dynamic}, may also exhibit similar cyclical behavior. Another domain we consider in Section \ref{sec:experiment} is a dynamic pricing problem with reference effects, where cyclic pricing strategies have been shown to perform well under various assumptions \citep{wang2016intertemporal,chen2017efficient}. In all of the above settings, freezing the slow state over a short horizon can capture the cyclical dynamics of the optimal policy.
\end{enumerate}
A key advantage of FSVI is that it can discover both forms of structure automatically through its two-level decomposition. Therefore, the user of the algorithm need not encode any concrete hierarchical or cyclical behavior and only needs to set the parameter $T$, which can often be selected using domain expertise. In Section \ref{sec:interpreting}, we also show how our theoretical analyses may guide the selection of $T$.

\subsection{Computational Cost of FSVI}
\label{sec:runningtime}
Let us now discuss the computational cost of FSVI. Here, we take a dynamic programming perspective,\footnote{In our numerical results of Section \ref{sec:experiment}, we will take a different, more practical perspective, where we work with a generative model or simulator of the environment.} where the model is known and expectations can be computed. It is well known that each iteration of standard VI, which provides a contraction factor of $\gamma$, has time complexity $\mathcal O\bigl(|\mathcal S|^2|\mathcal A|\bigr)$ \citep{littman1995complexity}. The upper level of FSVI, on the other hand, enjoys an improved contraction factor $\gamma^T$ with the same per-iteration running time of $\mathcal O\bigl(|\mathcal S|^2|\mathcal A|\bigr)$, given that we pay a \emph{one-time fixed cost} of solving the lower level. The quantity $\mathcal O\bigl(|\mathcal S|^2|\mathcal A|\bigr)$ consists of a factor of $|\mathcal S ||\mathcal A|$ due to the number of state-action pairs at which to compute the Bellman update and another factor of $|\mathcal S|$ due to the number of successor states. Since freezing slow states restricts the successor states to $\mathcal Y$, each iteration of the lower-level VI (first part of Algorithm \ref{alg:vi_appr}) has running time $\mathcal O\bigl(|\mathcal X| |\mathcal Y|^2|\mathcal A|\bigr)$. An additional $\mathcal O\bigl(|\mathcal S|^2 \, T\bigr)$ is required to compute the $T$-step transition probabilities of following $\tilde{\boldsymbol{\pi}}^*$, to be used in the upper-level VI steps, resulting in a one-time fixed cost of $\mathcal O ( |\mathcal X| |\mathcal Y|^2|\mathcal A| \, T + |\mathcal S|^2\, T)$. Particularly when $|\mathcal X|$ is large, this can be a reasonable fixed cost to pay in order to get the much improved discount factor of $\gamma^T$ going forward. The equations below give a direct comparison between the computational cost of VI vs FSVI:
\[
\text{Cost}_{\text{VI}}(k)
=\;
\mathcal O\!\bigl(k\,|\mathcal S|^{2}\,|\mathcal A|\bigr),
\tag{VI after \(k\) iterations}
\]
\[
\text{Cost}_{\text{FSVI}}(k)
=\;
\mathcal O\!\Bigl(\,
\underbrace{|\mathcal X|\,|\mathcal Y|^{2}\,|\mathcal A|\,T
\;+\;
|\mathcal S|^{2}\,T}_{\substack{\text{one-time lower-level}\\\text{pre–processing cost}}}
\;+\;
\underbrace{k\,|\mathcal S|^{2}\,|\mathcal A|}_{\substack{\text{upper-level}\\\text{iterations}}}
\, \Bigr).
\tag{FSVI after \(k\) iterations}
\]

\section{Theoretical Analysis}
\label{sec:theory}

In this section, we develop the theory to understand the regret of FSVI. First, in Section \ref{sec:reward_approx_error}, we derive the reward approximation error between the hierarchical reformulation and the frozen-state approximation. This result becomes needed in later sections. In Section \ref{sec:defn_regret}, we define the notion of regret that we use in the paper. Then in Section \ref{sec:general_lemma}, we give an important lemma that illustrates the trade-offs between the various sources of error and how they contribute to the overall regret. Finally in Section \ref{sec:regret_fsvi}, we give the main result of the section, which characterizes the regret of FSVI for a particular $T$ and $k$.

\subsection{Reward Approximation Error}
\label{sec:reward_approx_error}

Recall that $(\mu^*,\boldsymbol{\pi}^*)$ is an optimal $T$-periodic policy of the base model's hierarchical reformulation \eqref{eq.Bellman_T_original}. Suppose $\boldsymbol{\pi}^*$ is available. Then, the Bellman equation of the base model reformulation is
\begin{align}
    U^*(x_0,y_0) 
    &= \bar{U}^*(x_0,y_0) \nonumber\\
    &= \max_{a} \, \E \bigl[ R(x_0,y_0, a, \boldsymbol{\pi}^*) + \gamma^T \, \bar{U}^*(x_T,y_T) \bigr] \nonumber\\
    &= \max_{a}  \, \E \left[ r(x_0,y_0,a) + \sum_{t=1}^{T-1} \gamma^t \, r(x_t,y_t,\pi^*_t) + \gamma^T \, U^*(x_T,y_T) \right] \nonumber\\
    &= \max_{a} \, \E \Bigl[ r(x_0,y_0,a) + \gamma \, \bigl (H^{T-1} U^* \bigr) (x_1,y_1) \Bigr], \label{eq.base_model_reform2}
\end{align}
where the notation $H^k$ is shorthand for $k$ applications of the operator $H$, i.e., $H^k U = H(H^{k-1} U)$ and $H^1 U = H U$.
Therefore, the expected $T$-horizon reward can be written as
\begin{equation} \label{eq.R1_a}
    \E \bigl [R(x_0,y_0, a, \boldsymbol{\pi}^*) \bigr] = \E \Bigl [r(x_0,y_0,a) + \gamma \, \bigl (H^{T-1} U^* \bigr) (x_1,y_1) - \gamma^T \, U^*(x_T,y_T) \Bigr].
\end{equation}

Given the optimal value $J_1^*$ of the lower level \eqref{eq.Vlower}, the $T$-horizon reward of the upper level \eqref{eq.Vupper} can be written as
\begin{align} 
    \E \bigl [\tilde{R}(x_0, y_0 ,a,J_1^*) \bigr] &= r(x_0,y_0,a) + \gamma \, \E \bigl[J_1^*(x_1, y_1)\bigr] \nonumber\\
    &= r(x_0,y_0,a) + \gamma  \bigl (\tilde{H}^{T-1} J_T^* \bigr) (x_1,y_1), \nonumber\\ &= r(x_0,y_0,a) + \gamma  \bigl (\tilde{H}^{T-1} \, \boldsymbol{0} \bigr) (x_1,y_1),\label{eq.R2_a}
\end{align}
where $\boldsymbol{0}$ is the all-zero value function. %
The difference between \eqref{eq.R1_a} and \eqref{eq.R2_a} can be interpreted as follows: in the former, we follow a lower-level policy that is \emph{aware} of a terminal value $U^*$ (but exclude that value when defining the $T$-horizon reward), while in the latter, we follow a lower-level policy that sees zero terminal reward at the end of the $T-1$ periods.

The first step to understanding the performance of the frozen-state policy is to analyze the reward approximation $\E[\tilde{R}(s_0, a, J_1^*)]$ compared to the true reward $\E[R(s_0, a, \boldsymbol{\pi}^*)]$. Proposition~\ref{prop:cumreward_diff} shows how the difference between two reward functions is dependent on the number of frozen periods $T$, along with the problem parameters. 
\begin{proposition}[Reward Approximation Error]
    \label{prop:cumreward_diff}
    Let $\langle \mathcal{S}, \mathcal{A}, \mathcal{W}, f, r, \gamma \rangle$ be a $(\alpha, d_\mathcal{Y})$-fast-slow MDP satisfying Assumption \ref{assumption:lipschitz}. Let $\boldsymbol{\pi}^*$ be the optimal lower-level policy for the base model reformulation \eqref{eq.Bellman_T_original} and $J_1^*$ be the optimal (first-stage) value of the lower-level problem in the frozen-state approximation \eqref{eq.Vlower}. For any state $s_0=(x_0,y_0)$ and action $a$, the approximation error between the $T$-horizon reward of hierarchical reformulation and the frozen-state approximation, i.e., the discrepancy between \eqref{eq.R1_a} and \eqref{eq.R2_a}, can be bounded as:
\begin{align}
    \bigl| \E\bigl[R(&s_0, a, \boldsymbol{\pi}^*)\bigr]- \E\bigl[\tilde{R}(s_0, a, J_1^*)\bigr]\bigr|\nonumber\\
    &\leq 
    \underbrace{\alpha d_\mathcal{Y} \biggl( L_r \sum_{i=1}^{T-2} \gamma^i \, \sum_{j=0}^{i-1} L_f^j \biggr)}_{\textnormal{freeze error}} 
    + 
    \underbrace{\gamma^{T-1} L_U \Biggl[ \alpha d_\mathcal{Y} \sum_{j=0}^{T-2} L_f^j + \gamma d_\mathcal{Y} (\alpha + 2) (T-1)\Biggr]}_{\textnormal{end-of-horizon error}},
    \label{eq:R_prop}
\end{align}

\end{proposition}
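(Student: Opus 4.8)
The plan is to compare the two ``$T$-horizon reward tails'' appearing in \eqref{eq.R1_a} and \eqref{eq.R2_a}: the term $r(x_0,y_0,a)$ is common to both and cancels. Since $U^*$ is the fixed point of $H$, we have $H^{T-1-t}U^* = U^*$ for every $t$, so the greedy lower-level continuation $\boldsymbol{\pi}^* = (\pi_1^*,\dots,\pi_{T-1}^*)$ --- which is greedy with respect to $H^{T-1-t}U^*$ at stage $t$ --- may be taken to be $T-1$ copies of a stationary optimal policy $\nu^*$ of the base model (cf.\ Proposition~\ref{thm:stationary_opt_policy}). Unrolling the definitions then gives, with $(x_1,y_1)=f(x_0,y_0,a,w_1)$ and $(x_t,y_t)$ generated from $(x_1,y_1)$ by $\nu^*$ in the true dynamics,
\begin{align*}
\E\bigl[R(s_0,a,\boldsymbol{\pi}^*)\bigr] - r(x_0,y_0,a) &= \E\Bigl[\sum_{t=1}^{T-1}\gamma^t\, r(x_t,y_t,\nu^*)\Bigr]\\
&= \gamma\,\E\bigl[(H^{T-1}U^*)(x_1,y_1)\bigr] - \gamma^T\,\E\bigl[U^*(x_T,y_T)\bigr],
\end{align*}
while unrolling \eqref{eq.Vlower} gives
\begin{equation*}
\E\bigl[\tilde{R}(s_0,a,J_1^*)\bigr] - r(x_0,y_0,a) = \gamma\,\E\bigl[J_1^*(x_1,y_1)\bigr] = \E\Bigl[\sum_{t=1}^{T-1}\gamma^t\, r(x_1,\tilde y_t,\tilde\pi^*_t)\Bigr],
\end{equation*}
where the slow state is held frozen at $x_1$, $\tilde y_{t+1} = f^{\tilde\pi^*_t}_\mathcal{Y}(x_1,\tilde y_t,w_{t+1})$, and $\tilde{\boldsymbol{\pi}}^*$ is optimal for the frozen lower-level problem. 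So it remains to bound $\bigl|\sum_{t=1}^{T-1}\gamma^t\,\E[r(x_t,y_t,\nu^*) - r(x_1,\tilde y_t,\tilde\pi^*_t)]\bigr|$, which I would do from above and below separately.

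For the \emph{upper bound}, introduce the auxiliary frozen trajectory $\{\check y_t\}$ driven by the \emph{true}-optimal actions $a_t = \nu^*(x_t,y_t)$: set $\check y_1 = y_1$ and $\check y_{t+1} = f_\mathcal{Y}(x_1,\check y_t,a_t,w_{t+1})$. Since $\tilde{\boldsymbol{\pi}}^*$ is optimal for the frozen lower-level problem (with zero terminal value), replaying the action sequence $\{a_t\}$ there can only do worse, so $\gamma\,\E[J_1^*(x_1,y_1)] \ge \E[\sum_{t=1}^{T-1}\gamma^t\, r(x_1,\check y_t,a_t)]$ and therefore
\begin{equation*}
\E\bigl[R(s_0,a,\boldsymbol{\pi}^*)\bigr] - \E\bigl[\tilde{R}(s_0,a,J_1^*)\bigr] \;\le\; L_r\sum_{t=1}^{T-1}\gamma^t\,\E\bigl\|(x_t,y_t)-(x_1,\check y_t)\bigr\|_2,
\end{equation*}
by \eqref{assumption.lipschitz_reward}. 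The state discrepancy is controlled by $\|x_t-x_1\|_2 \le (t-1)\alpha d_\mathcal{Y}$ (telescoping Assumption~\ref{assumption.fast_slow_transition}(iii)) and the recursion $\|y_{t+1}-\check y_{t+1}\|_2 \le L_f(\|x_t-x_1\|_2 + \|y_t-\check y_t\|_2)$ from \eqref{eq.lipschitz_f} (the two trajectories share the action $a_t$ and the noise $w_{t+1}$); unrolling this recursion and summing against $\gamma^t$ produces the first term of \eqref{eq:R_prop}. For the \emph{lower bound}, use symmetrically that $(H^{T-1}U^*)(x_1,y_1)$ is the \emph{optimal} $(T-1)$-horizon value with terminal $U^*$ in the true dynamics, so it dominates the value of replaying the frozen-optimal actions $\hat a_t = \tilde\pi^*_t(x_1,\tilde y_t)$ in the true dynamics; the reward portion then differs from $\gamma\,\E[J_1^*(x_1,y_1)]$ by another contribution of the same form as the first term of \eqref{eq:R_prop} (again a true-vs-frozen trajectory with identical actions), and the leftover is the terminal mismatch $\gamma^T\bigl(\E[U^*(\hat x_T,\hat y_T)] - \E[U^*(x_T,y_T)]\bigr)$. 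This is absorbed using \eqref{eq.lipschitz_Ustar}, the bounds $\|x_T-x_1\|_2 \le (T-1)\alpha d_\mathcal{Y}$ and $\|y_T - y_1\|_2 \le (T-1) d_\mathcal{Y}$ from Assumption~\ref{assumption.fast_slow_transition}, and one more pass of the $L_f$-recursion, which yields the $\gamma^{T-1}L_U[\,\cdot\,]$ terms. The triangle inequality combines the two bounds into \eqref{eq:R_prop}.

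The main obstacle is the accumulated-error bookkeeping: extracting the exact $\sum_{j=0}^{i-1}L_f^j$ factors and the $\gamma^i$ weights requires carefully unrolling the coupled slow-drift/fast-transition recursion, and the lower bound additionally needs care because the frozen-optimal policy $\tilde{\boldsymbol{\pi}}^*$ and the true-optimal policy $\nu^*$ are genuinely different maps --- so the terminal-value mismatch cannot be coupled directly and must be routed through an intermediate frozen trajectory, separating the part that grows like $\alpha d_\mathcal{Y}\sum_j L_f^j$ (slow drift amplified through $f_\mathcal{Y}$) from the part that grows like $(\alpha+2)(T-1)d_\mathcal{Y}$ (two fast trajectories drifting apart under different actions). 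A minor point to check along the way is that $J_1^*$, being the optimal value of a finite-horizon MDP with i.i.d.\ noise, upper bounds the value of any (possibly history-dependent) action sequence, which is what legitimizes the replay arguments.
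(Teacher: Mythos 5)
Your route is genuinely different from the paper's. The paper never couples trajectories policy-by-policy: it splits the discrepancy at the operator level into (A) $\gamma\,\E|(H^{T-1}U^*)(x_1,y_1)-(\tilde{H}^{T-1}U^*)(x_1,y_1)|$, handled by a dedicated lemma comparing $H^t$ and $\tilde{H}^t$ applied to the \emph{same} Lipschitz function, and (B) $\gamma\,\E|(\tilde{H}^{T-1}U^*)(x_1,y_1)-(\tilde{H}^{T-1}\mathbf{0})(x_1,y_1)-\gamma^{T-1}U^*(x_T,y_T)|$, handled via the contraction of $\tilde{H}$ plus Lipschitzness of $U^*$. Your sandwich argument --- replaying the true-optimal action sequence in the frozen MDP for one direction and replaying the frozen-optimal actions in the true MDP (against $H^{T-1}U^*=U^*$) for the other --- is a legitimate alternative; the history-dependent-policy point you flag is indeed the thing that makes the replay legal, and your routing of the terminal mismatch through an intermediate frozen trajectory correctly reproduces both $L_U$ terms. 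What the paper's operator-level argument buys is that the ``optimality sandwich'' is absorbed into the single inequality $|\max_a(\cdot)-\max_a(\cdot)|\le\max_a|\cdot-\cdot|$, so only one coupled recursion (inside Lemma~\ref{lemma:HV_H'V}) ever has to be unrolled.

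There is, however, one concrete step that fails as written: your coupling recursion does not produce the first term of \eqref{eq:R_prop}. You track $\|x_t-x_1\|_2\le(t-1)\alpha d_\mathcal{Y}$ separately and feed the \emph{full cumulative} slow drift into $\|y_{t+1}-\check y_{t+1}\|_2\le L_f(\|x_t-x_1\|_2+\|y_t-\check y_t\|_2)$ at every step. Unrolling this double-counts the drift and yields a per-step state discrepancy of order $\alpha d_\mathcal{Y}\sum_{j=0}^{t-2}(t-1-j)L_f^j$ rather than the $\alpha d_\mathcal{Y}\sum_{j=0}^{t-2}L_f^j$ needed for \eqref{eq:R_prop}; e.g.\ with $L_f=\gamma=1$ your sum scales like $T^3$ while the claimed term scales like $T^2$. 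The fix is to run the recursion on the joint gap $D_t=\|(x_t,y_t)-(x_1,\check y_t)\|_2$ directly, inserting the intermediate point $\bigl(f_\mathcal{X}(x_1,\check y_t,a_t,w_{t+1}),\,\check y_{t+1}\bigr)$ so that
\begin{equation*}
D_{t+1}\;\le\;L_f\,D_t+\alpha d_\mathcal{Y},\qquad D_1=0,
\end{equation*}
i.e.\ only one fresh unit of slow drift is injected per step, \emph{after} the $L_f$ amplification of the previous joint gap. This gives $D_t\le\alpha d_\mathcal{Y}\sum_{j=0}^{t-2}L_f^j$ and recovers the stated constants; it is exactly the recursion hidden inside the paper's Lemma~\ref{lemma:HV_H'V}. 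With that repair, and the same joint recursion reused for the true-vs-frozen leg of your terminal-mismatch decomposition, your argument goes through.
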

\begin{proof}
The detailed proof is in Appendix~\ref{sec:proof_cumreward_diff}.
\end{proof}

For more convenient notation, we define $\epsilon_r(\gamma,\alpha,d_\mathcal{Y},L_r,L_f,T)$ to be the right-hand-side of (\ref{eq:R_prop}):
\[
\epsilon_r(\gamma,\alpha,d_\mathcal{Y},\mathbf{L},T) =\alpha d_\mathcal{Y} \biggl( L_r \sum_{i=1}^{T-2} \gamma^i \, \sum_{j=0}^{i-1} L_f^j \biggr) + \gamma^{T-1} L_U \Biggl[ \alpha d_\mathcal{Y} \sum_{j=0}^{T-2} L_f^j + \gamma d_\mathcal{Y} (\alpha + 2) (T-1)\Biggr],
\]
where $\mathbf{L} = (L_r, L_f, L_U)$ emphasizes the dependence on the various Lipschitz constants.
In subsequent sections, we use $\epsilon_r$ as an ingredient in analyzing the regret of various frozen-state policies.

\subsection{Defining Regret}
\label{sec:defn_regret}
We start with definitions of the regret of both stationary and $T$-periodic policies. %

\begin{definition}[Regret]
Consider a fast-slow MDP with initial state $s_0$ and optimal policy $\nu^*$. The regret of a stationary policy $\nu$ is defined as  
\[
\mathcal{R}(s_0, \nu) = U^{\nu^*}(s_0) - U^{\nu}(s_0) \quad \text{and} \quad \mathcal R(\nu) = \max_{s_0} \, \mathcal{R}(s_0, \nu).
\] The regret of the $T$-periodic policy $(\mu, \boldsymbol{\pi})$ is defined as:
\begin{align*}
\mathcal{R}(s_0, \mu, \boldsymbol{\pi}) = U^{\nu^*}(s_0) - \bar{U}^{\mu}(s_0, \boldsymbol{\pi}) = \bar{U}^{*}(s_0) - \bar{U}^{\mu}(s_0, \boldsymbol{\pi}) \quad \text{and} \quad \mathcal R(\mu, \boldsymbol{\pi}) = \max_{s_0} \, \mathcal{R}(s_0, \mu, \boldsymbol{\pi}).
\end{align*}
The second equality in the definition of $\mathcal{R}(s_0, \mu, \boldsymbol{\pi})$ uses the value equivalence between the base model and its hierarchical reformulation (Proposition \ref{thm:stationary_opt_policy}).
\end{definition}

\begin{remark}
As a follow-up comment to Remark \ref{rem:pretend}, notice that $V^*(s_0,J_1^*,\tilde{\boldsymbol{\pi}}^*)$ does not directly enter the regret definition, as $V^*(s_0,J_1^*,\tilde{\boldsymbol{\pi}}^*)$ is just the optimal value of the frozen-state approximation, not the value of its implied greedy policy $\tilde{\mu}^*$ when evaluated in the base model. However, the regret of course depends on $V^*(s_0,J_1^*,\tilde{\boldsymbol{\pi}}^*)$ indirectly, because $\tilde{\mu}^*$ depends on $V^*(s_0,J_1^*,\tilde{\boldsymbol{\pi}}^*)$.
\end{remark}

\subsection{A General Lemma}
\label{sec:general_lemma}
In this section, we first prove a general lemma that will be used to analyze FSVI.

\begin{lemma}
\label{lemma:main}
Suppose we have an approximation $(\boldsymbol{\pi}, J_1)$ to the lower-level solution $(\boldsymbol{\pi}^*, U^*)$. Further, suppose we have an approximation $V$ to the upper-level solution $V^*(J_1, \boldsymbol{\pi})$. Consider a $T$-periodic policy $(\mu, \boldsymbol{\pi})$, where
\begin{equation}
\label{eq.mu_approx_opt}
\mu(s_0) = \argmax_{a\in\mathcal A} \, \E\bigl[\tilde{R}(s_0, a, J_1) + \gamma^T V(s_T(a, \boldsymbol{\pi}))\bigr].
\end{equation}
Then, the regret of $(\mu, \boldsymbol{\pi})$ can be bounded as follows:
\begin{align*}
\mathcal R(\mu, \boldsymbol{\pi}) \leq \biggl(&\frac{2\gamma^T}{(1-\gamma^T)^2} + \frac{2}{1-\gamma^T}\biggr) \epsilon_r(\boldsymbol{\pi}^*, J_1) \\
&+ \biggl(\frac{2\gamma^{2T}}{(1-\gamma^T)^2} + \frac{2\gamma^T}{1-\gamma^T}\biggr) L_U \, d(\alpha,d_\mathcal{Y},T) + \frac{2\gamma^T}{1-\gamma^T} \bigl \| V^*(J_1, \boldsymbol{\pi}) - V \bigr\|_\infty,
\end{align*}
where $\epsilon_r(\boldsymbol{\pi}^*, J_1) = \max_{s,a} \, |\E [R(s, a, \boldsymbol{\pi}^*)] - \E [\tilde{R}(s, a, J_1)]|$ and $d(\alpha,d_\mathcal{Y},T) = 2 d_\mathcal{Y} (\alpha+1) (T-1)$.
\end{lemma}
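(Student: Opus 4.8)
The plan is to bound the regret $\mathcal{R}(\mu,\boldsymbol\pi) = \bar U^*(s_0) - \bar U^\mu(s_0,\boldsymbol\pi)$ by inserting $V^*(J_1,\boldsymbol\pi)$ and $V^\mu(J_1,\boldsymbol\pi)$ as intermediaries and controlling each piece with a geometric-series / contraction argument. Concretely, I would write
\[
\bar U^* - \bar U^\mu(\boldsymbol\pi) = \underbrace{\bigl(\bar U^* - V^*(J_1,\boldsymbol\pi)\bigr)}_{\text{(A)}} + \underbrace{\bigl(V^*(J_1,\boldsymbol\pi) - V^\mu(J_1,\boldsymbol\pi)\bigr)}_{\text{(B)}} + \underbrace{\bigl(V^\mu(J_1,\boldsymbol\pi) - \bar U^\mu(\boldsymbol\pi)\bigr)}_{\text{(C)}}.
\]
Terms (A) and (C) are both of the form ``true hierarchical value (under a fixed lower-level policy) minus its frozen-state surrogate,'' so they reduce to comparing the $T$-horizon reward functions $\E[R(s,a,\boldsymbol\pi^*)]$ against $\E[\tilde R(s,a,J_1)]$ and also comparing $\E[R(s,a,\boldsymbol\pi)]$ (the true $T$-step reward under the fixed lower-level policy $\boldsymbol\pi$) against $\E[\tilde R(s,a,J_1)]$. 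For (A) I would use that $\bar U^*$ solves $\bar U^* = \max_a \E[R(s,a,\boldsymbol\pi^*) + \gamma^T \bar U^*(s_T)]$ while $V^*(J_1,\boldsymbol\pi) = \max_a\E[\tilde R(s,a,J_1)+\gamma^T V^*(s_T(a,\boldsymbol\pi))]$, and the standard fact that two fixed points of contraction maps (with modulus $\gamma^T$) that differ by at most $\delta$ in the per-step reward differ by at most $\delta/(1-\gamma^T)$ in value — but here the dynamics also differ (true $s_T$ versus $s_T(a,\boldsymbol\pi)$, which follows $\boldsymbol\pi^*$ vs.\ $\boldsymbol\pi$), which contributes the extra $L_U$-weighted term. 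That is where $d(\alpha,d_\mathcal Y,T) = 2d_\mathcal Y(\alpha+1)(T-1)$ enters: it bounds $\E\|s_T^{\text{true}} - s_T(a,\boldsymbol\pi)\|$ via the telescoping sum of one-step slow-state displacements (each at most $\alpha d_\mathcal Y$) plus fast-state differences, over $T-1$ steps, and then $L_U$ converts a state discrepancy into a value discrepancy.

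Next, for (B), the key observation is that $\mu$ is defined in \eqref{eq.mu_approx_opt} as the greedy policy with respect to the \emph{approximation} $V$ rather than with respect to $V^*(J_1,\boldsymbol\pi)$. So (B) is a standard ``greedy-with-respect-to-approximate-value'' bound: $0 \le V^*(J_1,\boldsymbol\pi) - V^\mu(J_1,\boldsymbol\pi) \le \frac{2\gamma^T}{1-\gamma^T}\|V^*(J_1,\boldsymbol\pi) - V\|_\infty$, which is exactly the last term of the claimed bound. I would cite or reprove the classical lemma (e.g., from \citet{bertsekas1996neuro-dynamic} or \citet{puterman2014markov}) that a policy greedy w.r.t.\ a value function within $\epsilon$ of the optimum is $\tfrac{2\gamma^T\epsilon}{1-\gamma^T}$-suboptimal, applied to the upper-level MDP with discount $\gamma^T$.

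The main obstacle — and the reason the bound has the more delicate prefactors $\frac{2\gamma^T}{(1-\gamma^T)^2} + \frac{2}{1-\gamma^T}$ and $\frac{2\gamma^{2T}}{(1-\gamma^T)^2} + \frac{2\gamma^T}{1-\gamma^T}$ rather than a single $\frac{1}{1-\gamma^T}$ factor — is term (C), i.e.\ comparing the value of the $T$-periodic policy $(\mu,\boldsymbol\pi)$ \emph{as actually run in the base model}, $\bar U^\mu(s_0,\boldsymbol\pi)$, against its frozen-state evaluation $V^\mu(J_1,\boldsymbol\pi)$. The subtlety is that $\mu$ itself depends on $J_1$ and $\boldsymbol\pi$, so one cannot treat it as fixed and independent; and the per-period reward discrepancy $\epsilon_r(\boldsymbol\pi^*,J_1)$ is stated relative to $\boldsymbol\pi^*$, not $\boldsymbol\pi$, so I need to route the comparison for the policy $\boldsymbol\pi$ through $\boldsymbol\pi^*$ as well, picking up a second copy of $\epsilon_r$ and of the $L_U\,d$ term — this is what doubles the constants and produces the $(1-\gamma^T)^{-2}$ factor (one $(1-\gamma^T)^{-1}$ from propagating the error of the approximate $\mu$ and another from summing the reward errors along the infinite horizon). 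I would handle this by writing both $\bar U^\mu$ and $V^\mu$ as fixed points of their respective policy-evaluation operators (contractions with modulus $\gamma^T$), bounding the one-step discrepancy between these operators uniformly by $\epsilon_r(\boldsymbol\pi^*,J_1) + \gamma^T L_U\,d(\alpha,d_\mathcal Y,T)$ (or a similar combination), and invoking the contraction argument; then I would carefully combine (A)+(B)+(C), collecting like terms, and verify that the coefficients telescope into exactly the stated expression. A secondary bookkeeping nuisance will be keeping straight which terms carry a leading $\gamma^T$ (those arising from the ``downstream'' $\bar U^*$ / $V^*$ at the next macro-period) versus which do not (the immediate-period reward error in (A) and (C)), since that distinction is what separates $\frac{2}{1-\gamma^T}$ from $\frac{2\gamma^T}{1-\gamma^T}$ in the final bound.
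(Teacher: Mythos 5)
Your high-level reading of where each ingredient of the bound comes from (reward gap $\epsilon_r$, transition gap $d$ converted by $L_U$, greedy-policy loss from using $V$) is correct, but your three-term telescope does not close, and the paper's proof is structured differently precisely to avoid the obstacle you hit in term (C). The paper defines two $T$-timescale MDPs: $\mathcal M_1$ with reward $\E[R(s,a,\boldsymbol{\pi}^*)]$ and transitions that roll out $\boldsymbol{\pi}^*$ (optimal value $\bar U^*$), and $\mathcal M_2$ with reward $\E[\tilde R(s,a,J_1)]$ and transitions that roll out $\boldsymbol{\pi}$ (optimal value $V^*(J_1,\boldsymbol{\pi})$). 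It first bounds $\epsilon_U := \|\bar U^* - V^*(J_1,\boldsymbol{\pi})\|_\infty \le (\epsilon_r + \gamma^T L_U d)/(1-\gamma^T)$ via a fixed-point comparison (Lemma~\ref{lemma:U_V}) that uses only the Lipschitz continuity of $\bar U^* = U^*$, and then invokes a single Singh--Yee-type simulation lemma (Lemma~\ref{lemma:loss_bound}, a generalization of Corollary 1 of \cite{singh1994upper}) with discount $\gamma^T$, whose conclusion contains the term $2\gamma^T(\epsilon_U+\tilde\epsilon_U)/(1-\gamma^T)$. The $(1-\gamma^T)^{-2}$ prefactors arise exactly because $\epsilon_U$, itself of order $(1-\gamma^T)^{-1}$, reappears multiplied by $2\gamma^T/(1-\gamma^T)$. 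In your decomposition, $\|\bar U^* - V^*(J_1,\boldsymbol{\pi})\|_\infty$ enters only additively as term (A), so even if (B) and (C) went through you would not reproduce the stated coefficients.

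The concrete gap is term (C). Since the upper-level MDP already uses the \emph{true} $T$-step dynamics $s_T(a,\boldsymbol{\pi})$, the only difference between $V^\mu(J_1,\boldsymbol{\pi})$ and $\bar U^\mu(\boldsymbol{\pi})$ is the per-macro-period reward: $\E[\tilde R(s,a,J_1)]$ versus $\E[R(s,a,\boldsymbol{\pi})]$. The hypotheses give you $\epsilon_r(\boldsymbol{\pi}^*,J_1)$, which controls the gap to $\E[R(s,a,\boldsymbol{\pi}^*)]$, not to $\E[R(s,a,\boldsymbol{\pi})]$, and your plan to ``route through $\boldsymbol{\pi}^*$'' requires a bound on $|\E[R(s,a,\boldsymbol{\pi})]-\E[R(s,a,\boldsymbol{\pi}^*)]|$ --- the $T$-step reward suboptimality of an \emph{arbitrary} approximate lower-level policy $\boldsymbol{\pi}$ --- which is simply not available from the lemma's assumptions and can be as large as order $T r_\textnormal{max}$. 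The paper never needs this quantity because its simulation lemma evaluates the learned $\mu$ inside $\mathcal M_1$, whose lower-level rollout is $\boldsymbol{\pi}^*$, and keeps every state-perturbation argument inside the Lipschitz function $U_1^*=U^*$, so no Lipschitz constant for a policy value function and no reward bound for $\boldsymbol{\pi}$ is ever required. To repair your route you would have to either add a hypothesis controlling $\max_{s,a}|\E[R(s,a,\boldsymbol{\pi})]-\E[\tilde R(s,a,J_1)]|$ or abandon the (A)+(B)+(C) telescope in favor of the one-shot simulation-lemma argument.
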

\begin{proof}
See Appendix \ref{sec:proof_lemma_main}.
\end{proof}

The result of Lemma \ref{lemma:main} above can be interpreted as the regret being bounded by
\[
\textnormal{reward error (\& end-of-horizon effect)} \; +\; \textnormal{upper-level freeze error} \; +\; \textnormal{$V$-approximation error},
\]
which directly corresponds to the three terms in the bound. The reward error is due to freezing the slow state in the lower level along with using zero terminal value; the upper-level freeze error propagates the lower-level freeze error to the upper level's infinite horizon; and the $V$-approximation error is due to not solving the upper-level problem exactly.

\subsection{Regret of FSVI}
\label{sec:regret_fsvi}
An instance of FSVI is associated with two primary quantities: $k$, the number of VI iterations, and $T$, the number of periods the slow state is frozen in the frozen-state approximation. The next theorem gives a bound on the regret of the policy obtained for a particular $k$ and $T$.
\begin{theorem} \label{thm:fsvi_regret}
    Let $(\tilde{\mu}^k, \tilde{\boldsymbol{\pi}}^*)$ be the resulting $T$-periodic policy after running FSVI for $k$ iterations. The regret incurred when running $(\tilde{\mu}^k, \tilde{\boldsymbol{\pi}}^*)$ in the base model satisfies
    \begin{align*}
        \mathcal R(\tilde{\mu}^{k}, \tilde{\boldsymbol{\pi}}^*) \leq &\biggl(\frac{2\gamma^T}{(1-\gamma^T)^2} + \frac{2}{1-\gamma^T}\biggr) \epsilon_r(\gamma,\alpha,d_\mathcal{Y},\mathbf{L},T)\\
&+ \biggl(\frac{2\gamma^{2T}}{(1-\gamma^T)^2} + \frac{2\gamma^T}{1-\gamma^T}\biggr) L_U \, d(\alpha,d_\mathcal{Y},T)+ \frac{2 r_\textnormal{max} \gamma^{(k+1)T} }{(1-\gamma)(1-\gamma^T)},
    \end{align*}
    where the last term, which depends on $k$, accounts for the error due to value iteration.
\end{theorem}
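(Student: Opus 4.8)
The plan is to specialize Lemma~\ref{lemma:main} to the output of FSVI and then control the one term it leaves open, namely the upper-level value-iteration error. I would apply Lemma~\ref{lemma:main} with $\boldsymbol{\pi} = \tilde{\boldsymbol{\pi}}^*$, $J_1 = J_1^*$, and $V = V^k$, the iterate after $k$ steps of upper-level value iteration in Algorithm~\ref{alg:vi_appr}. The lemma's hypothesis that $\mu$ be greedy with respect to $V$ in the sense of \eqref{eq.mu_approx_opt} is satisfied by $\tilde{\mu}^k$ by construction, since $\tilde{\mu}^k$ is exactly the $\argmax$ computed in the final loop of Algorithm~\ref{alg:vi_appr}. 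Moreover, because the lower-level frozen-state MDP is finite-horizon, FSVI solves it \emph{exactly} in $T-1$ dynamic-programming steps, so $\tilde{\boldsymbol{\pi}}^*$ and $J_1^*$ carry no approximation error; in particular the quantity $\epsilon_r(\boldsymbol{\pi}^*, J_1^*) = \max_{s,a}|\E[R(s,a,\boldsymbol{\pi}^*)] - \E[\tilde R(s,a,J_1^*)]|$ appearing in Lemma~\ref{lemma:main} is precisely the reward-approximation error bounded in Proposition~\ref{prop:cumreward_diff}, so $\epsilon_r(\boldsymbol{\pi}^*, J_1^*) \le \epsilon_r(\gamma,\alpha,d_\mathcal{Y},\mathbf{L},T)$. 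Substituting this bound produces the first two terms of the claimed inequality, and what remains is to bound the residual term $\frac{2\gamma^T}{1-\gamma^T}\,\|V^*(J_1^*,\tilde{\boldsymbol{\pi}}^*) - V^k\|_\infty$.

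For the residual term I would run the standard value-iteration convergence argument (the same one behind Proposition~\ref{prop:standard_VI_error}) applied to the upper-level Bellman operator $F_{J_1^*,\tilde{\boldsymbol{\pi}}^*}$ of \eqref{eq.Vupper_operator}. This operator is a $\gamma^T$-contraction in $\|\cdot\|_\infty$ — the $\gamma^T$ factor in front of the continuation value delivers this, and the surrogate reward $\tilde R(\cdot,\cdot,J_1^*)$ and the $T$-step transition $s_T(\cdot,\tilde{\boldsymbol{\pi}}^*)$ do not involve the value being iterated. Since per-step rewards lie in $[0, r_\textnormal{max}]$, the recursion \eqref{eq.Vlower} gives $\|J_1^*\|_\infty \le r_\textnormal{max}\sum_{t=0}^{T-2}\gamma^t$, hence $\|\tilde R(\cdot,\cdot,J_1^*)\|_\infty \le r_\textnormal{max} + \gamma\, r_\textnormal{max}\,\frac{1-\gamma^{T-1}}{1-\gamma} = r_\textnormal{max}\,\frac{1-\gamma^T}{1-\gamma}$. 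Starting from $V^0 \equiv 0$ (the convention of Algorithm~\ref{alg:vi_appr}, matching that underlying Proposition~\ref{prop:standard_VI_error}), the geometric-series bound $\|V^k - V^*(J_1^*,\tilde{\boldsymbol{\pi}}^*)\|_\infty \le \frac{\gamma^{Tk}}{1-\gamma^T}\,\|V^1 - V^0\|_\infty$ together with $\|V^1 - V^0\|_\infty = \|F_{J_1^*,\tilde{\boldsymbol{\pi}}^*}V^0\|_\infty \le \|\tilde R(\cdot,\cdot,J_1^*)\|_\infty$ yields $\|V^k - V^*(J_1^*,\tilde{\boldsymbol{\pi}}^*)\|_\infty \le \frac{r_\textnormal{max}\,\gamma^{Tk}}{1-\gamma}$. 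Multiplying by $\frac{2\gamma^T}{1-\gamma^T}$ gives the last term $\frac{2 r_\textnormal{max}\,\gamma^{(k+1)T}}{(1-\gamma)(1-\gamma^T)}$, and summing the three contributions completes the proof.

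I do not expect a genuine obstacle here: Lemma~\ref{lemma:main} already absorbs all the hard work of relating the frozen-state $T$-periodic policy to the optimal stationary policy $\nu^*$, and everything else is routine bookkeeping. The only points requiring care are (i) confirming that FSVI introduces no error in the lower level, so that the $\epsilon_r$ term appearing in Lemma~\ref{lemma:main} coincides exactly with the bound of Proposition~\ref{prop:cumreward_diff}, and (ii) pinning down the constants in the contraction argument — in particular the bound on $\|\tilde R(\cdot,\cdot,J_1^*)\|_\infty$ and the implicit initialization $V^0 \equiv 0$.
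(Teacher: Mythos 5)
Your proposal is correct and follows essentially the same route as the paper: Lemma~\ref{lemma:main} is applied with $\boldsymbol{\pi} = \tilde{\boldsymbol{\pi}}^*$, $J_1 = J_1^*$, and $V = V^k$, the reward term is bounded by Proposition~\ref{prop:cumreward_diff}, and the remaining $V$-approximation term is bounded by the $\gamma^T$-contraction of the upper-level operator with upper-level rewards in $[0, r_\textnormal{max}(1-\gamma^T)/(1-\gamma)]$, yielding $\|V^k - V^*(J_1^*,\tilde{\boldsymbol{\pi}}^*)\|_\infty \le \gamma^{kT} r_\textnormal{max}/(1-\gamma)$ exactly as in the paper's Lemma~\ref{prop:K2}. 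The only cosmetic difference is that you reach this last bound via $\|V^1 - V^0\|_\infty$ rather than via $\|V^0 - V^*\|_\infty$; the constants agree.
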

\begin{proof}
See Appendix \ref{sec:proof_thm:fsvi_regret}.
\end{proof}

\begin{figure}[bh!]
  \centering
  \begin{subfigure}[b]{0.49\textwidth}
    \includegraphics[width=\textwidth]{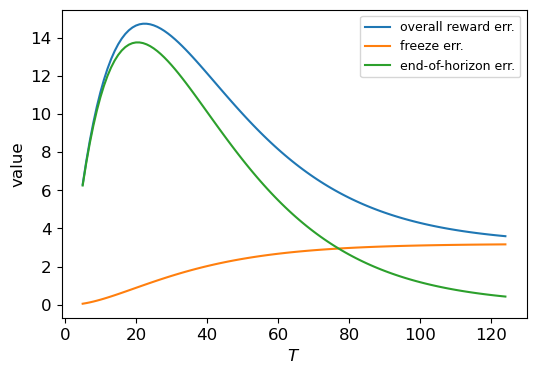}
    \caption{$L_f = 0.99$ (small)}
    \label{fig:freeze_bounded}
  \end{subfigure}
  \hfill
  \begin{subfigure}[b]{0.49\textwidth}
    \includegraphics[width=\textwidth]{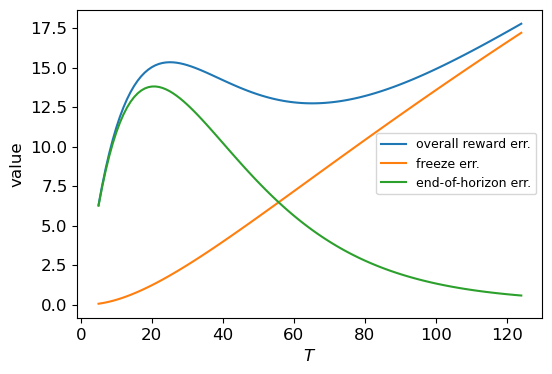}
    \caption{$L_f = 1.05$ (large)}
    \label{fig:freeze_unbounded}
  \end{subfigure}
  \caption{Reward approximation error comparison for two values of $L_f$.}
  \label{fig:reward_approx_comparison}
\end{figure}

\section{Interpreting the Regret Analysis}
\label{sec:interpreting}
It is common for theoretical analyses in reinforcement learning to depend on Lipschitz constants of the underlying MDP (see, e.g., \cite{pirotta2015policy}, \cite{asadi2018lipschitz}, \cite{gelada2019deepmdp}, \cite{sinclair2022adaptive}, or \cite{maran2024no}). Because these Lipschitz constants are typically conservative approximations of the true MDP dynamics, our regret bounds may not always align closely with the empirical regret observed in real-world settings. This is consistent with much of the literature where bounds often serve more as qualitative guides than precise predictors.

Regardless, such regret analysis is valuable, as it can help us understand and clarify the trade-offs involved when selecting various parameters, primarily $T$, the horizon of the frozen lower-level problem. This is particularly true in our setting, where we are balancing several sources of error with computational improvements. In this section, we numerically compute the regret bounds for various values of $T$ in order to obtain insights.

\subsection{Illustrating Reward Error}

First, we examine the reward approximation error analyzed in Proposition \ref{prop:cumreward_diff}. Depending on how the ``freeze error'' and the ``end-of-horizon error'' interact, we see different behaviors of the overall reward approximation error, as illustrated in Figure \ref{fig:reward_approx_comparison}. For both examples, the end-of-horizon error increases at first, but then decreases toward zero as $T$ becomes large. In the example with $L_f < 1$ (left), the freeze error converges as $T$ increases; however, if $L_f > 1$, the freeze error grows without bound. This leads to two distinct patterns in the overall reward approximation error: (1) when the freeze error is well-controlled, the total reward approximation error (blue) generally follows the unimodal shape of the end-of-horizon error (green); and (2) when the freeze error (orange) grows without bound, the reward approximation error (blue) follows a unimodal shape at first, but eventually grows again.

\subsection{Regret of Base VI}
We pause here briefly to state Proposition~\ref{prop:standard_VI_error}, a well-known property that gives the regret of exact VI on the base model. We use this result as a comparison in to the regret of FSVI in the next section.

\begin{proposition}
    \label{prop:standard_VI_error}
    Let $\nu^k$ be the result of running Algorithm \ref{alg:exact_vi} on the base model (\ref{eq.Bellman_original}). Then,
    \begin{equation*}
    \mathcal R(\nu^k) = \|U^{\nu_{k}} - U^*\|_\infty \leq \frac{2 r_\textnormal{max} \gamma^{k+1}}{(1-\gamma)^2},
    \end{equation*}
    we recall that $r_\textnormal{max}$ is an upper bound on the reward.
\end{proposition}
\begin{proof}
See Appendix \ref{proof:standard_VI_error}.
\end{proof}

\begin{figure}[t]
  \centering    \includegraphics[width=0.7\textwidth]{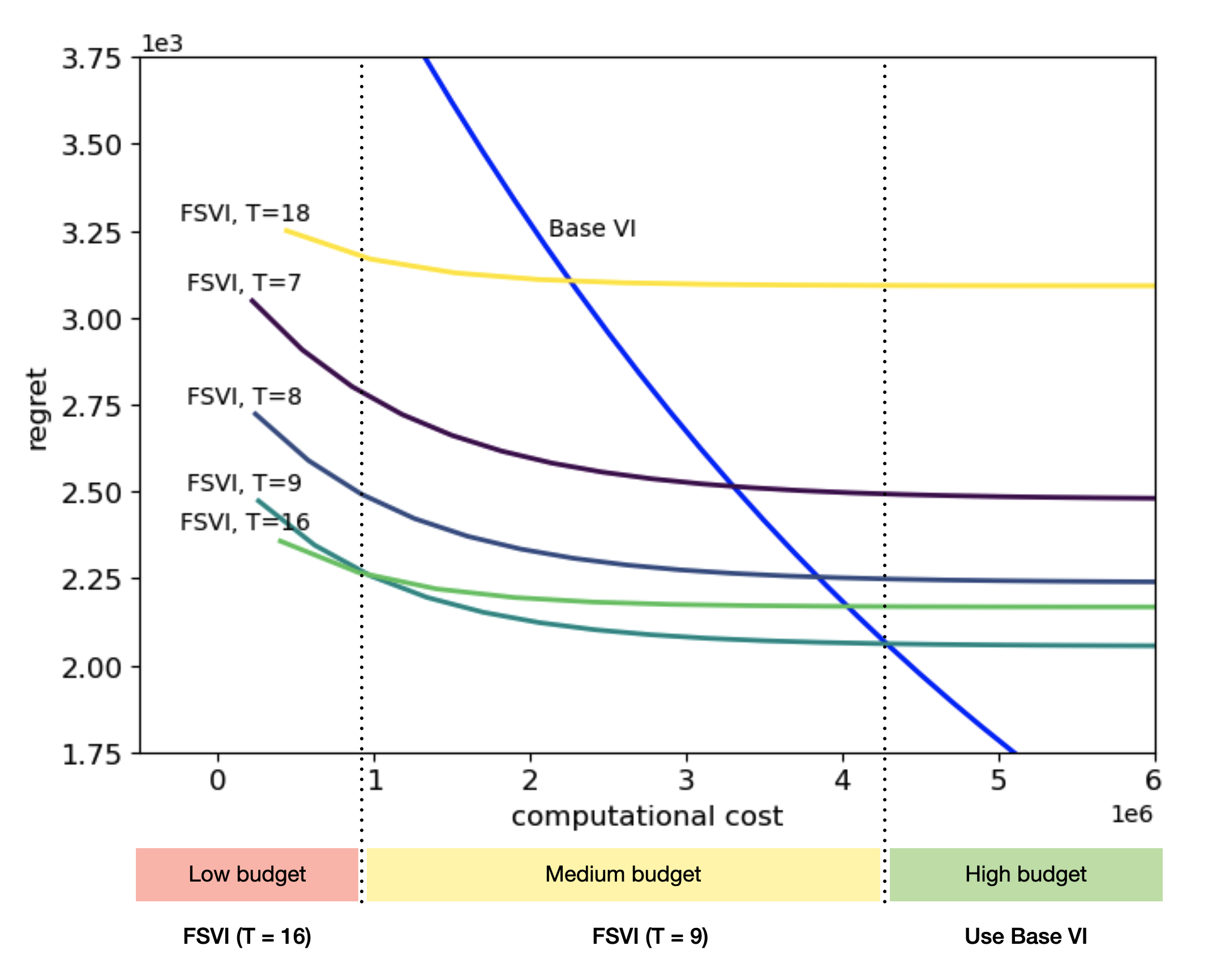}
  \caption{Regret versus computational cost for Base VI and FSVI as the number of value iteration steps increases. In this example, we see three regimes depending on the computational budget: in the low budget regime, the lowest regret algorithm is FSVI with $T=16$; in the medium budget regime, FSVI with $T=9$ achieves lowest regret; and in the high budget regime, not surprisingly, Base VI achieves lowest regret.}
  \label{fig:fsvi_vs_base_vi}
\end{figure}

\subsection{Regret versus Computational Cost for FSVI and VI: Selecting $T$}

Next, we show how Proposition \ref{prop:standard_VI_error} (the regret of Base VI) and Theorem \ref{thm:fsvi_regret} (the regret of FSVI) can be combined to provide intuition about whether to use Base VI or FSVI (and if so, which $T$ and $k$?). We consider the trade-off between regret and computational cost, using the definition of cost given in Section \ref{sec:runningtime}. For FSVI, the concrete problem facing a practitioner is as follows: given a computational cost budget $B$, what combination of $(T, k)$ should be used? We make two observations: first, for a fixed $T$, regret decreases with $k$ (see Theorem \ref{thm:fsvi_regret}), and second, for a fixed $T$, cost increases with $k$ (see Section \ref{sec:runningtime}). Therefore, for a fixed $T$, one should run FSVI for $k_\text{max}(T, B)$ iterations, where $k_\text{max}(T, B)$ is the largest $k$ such that the computational cost of running FSVI with $T$ is less than $B$. It thus follows that to select $T$, one should simply examine the regret for different values of $T$ at $k=k_\text{max}(T, B)$ and select the value of $T$ with the lowest regret.

The above procedure can be converted into a simple visualization. In Figure \ref{fig:fsvi_vs_base_vi}, we plot regret versus computational cost. This is done for both FSVI for different values of $T$ and also for standard value iteration (Base VI).
Each colored curve traces the regret-cost trade‑off generated by varying $k$ for a distinct $T$ (though $k$ is not shown). Imposing the computational budget corresponds to drawing the vertical line $x=B$ and where this line intersects a curve, one can read off the best attainable regret given budget $B$ for the algorithm represented by that curve. By scanning the intersection points and selecting the one with the lowest regret, one can identify the best algorithm to use, whether it is FSVI with a particular $T$ or simply Base VI. Hence, the plot allows us to visually solve the constrained optimization problem of minimizing regret under a given computational budget. This exercise leads us to interesting takeaways illustrated in Figure \ref{fig:fsvi_vs_base_vi}:
\begin{enumerate}
    \item If the available computational budget is low, one should use FSVI with a large value of $T$ (in this case, $T=16$). Despite introducing more error due to freezing, this allows us to get the quickest decrease in regret initially due to the lower discount factor, but as we can see from the plot, the decrease does not sustain as we increase computation.
    \item If the computational budget is high, one should simply use Base VI, as it will eventually achieve zero regret. This is natural and intuitive because FSVI introduces unavoidable error that does not vanish as the number of value iteration steps increases.
    \item With a medium computational budget, FSVI with a moderate value of $T=9$ leads to the best regret guarantee. This regime is where we expect many real-world problems to fall.
\end{enumerate}

\begin{algorithm}[h!]
\SetKwInput{Input}{Input}
\SetKwInput{Output}{Output}
\BlankLine
	\Input{State distribution $\rho$, number of iterations $k$, number of state samples $N$, number of next-state samples $M$, and a regression procedure $\texttt{Regress}$.}
    \medskip
\Output{Approximate value function $U^k$, and policy $\nu^k$.}
\BlankLine
Initialize $U^0$ arbitrarily. \\
\For{$k' = 1, 2, \ldots, k$}{
    \vspace{0.5em}
        Sample a dataset $\mathcal{D} = \{s^{(i)}\}_{i=1}^N \sim \rho$\\
    \vspace{0.5em}
    \For{$s^{(i)} \in \mathcal{D}$}{
        \vspace{0.5em}
        $\displaystyle u^{(i)} = \max_{a}  \,  r(s^{(i)}, a) + \gamma\,  \frac{1}{M} \sum_{j=1}^M \, U^{k'-1}\bigl(s^{(i,a,j)}\bigr) $, where $s^{(i,a,j)}$ is sampled from the generative model, conditional on state $s^{(i)}$ and action $a$.
        \vspace{0.5em}
    }
    \vspace{0.5em}
    $U^{k'} = \texttt{Regress}(\{(s^{(i)}, u^{(i)})\}_{i=1}^N)$
    \vspace{0.5em}
}
\vspace{0.5em}
\For{$s \in \mathcal{S}$}{
    \vspace{0.5em}
    $\displaystyle \nu^k(s) = \argmax_{a}  \,  r(s, a) + \gamma\,  \frac{1}{M} \sum_{j=1}^M \, U^{k}\bigl(s^{(a,j)}\bigr)$, where $s^{(a,j)}$ is sampled from the generative model, conditional on state $s$ and action $a$.
    \vspace{0.5em}
}
\caption{Fitted Value Iteration (FVI) \citep{munos2008finite}}
\label{alg:fvi}
\end{algorithm}

\begin{algorithm}[h!]
	\SetKwInput{Input}{Input}
	\SetKwInput{Output}{Output}
	\BlankLine
	\Input{State distribution $\rho$, number of iterations $k$, number of state samples $N$, number of next-state samples $M_l$ (lower level), $M_u$ (upper level), and a regression procedure $\texttt{Regress}$.}
	\medskip
	\Output{Approximation of the $T$-periodic frozen-state policy $(\tilde{\mu}^k, \tilde{\boldsymbol{\pi}}^*)$ and $\tilde{J}_1$.}
	\BlankLine
	Initialize $\tilde{J}_T \equiv 0$. \\
        \vspace{0.5em}

	\For{$t = T-1, T-2, \ldots, 1$}{
        \vspace{0.5em}
        Sample a dataset $\mathcal{D} = \{(x^{(i)}, y^{(i)})\}_{i=1}^N \sim \rho$\\
        \vspace{0.5em}
		\For{each $(x^{(i)}, y^{(i)}) \in \mathcal{D}$}{
			\vspace{0.5em}
			$\displaystyle J_t^{(i)} = \max_{a \in \mathcal{A}}  \, r\bigl(x^{(i)}, y^{(i)},a\bigr) + \gamma \, \frac{1}{M_l} \sum_{j=1}^{M_l} \tilde{J}_{t+1}\bigl(x^{(i)}, y^{(i,a,j)}\bigr)$
			where $y^{(i,a,j)}$ is sampled from the generative model, conditional on state $(x^{(i)}, y^{(i)})$ and action $a$.
		}
		\vspace{0.5em}
		$\tilde{J}_t = \texttt{Regress}\Bigl(\bigl\{\bigl(  (x^{(i)}, y^{(i)}), J_t^{(i)} \bigr)\bigr\}_{i=1}^N\Bigr)$
	}
        \BlankLine
        Let $\tilde{\boldsymbol{\pi}}^* = (\pi_1^*, \ldots \pi^*_{T-1})$, with $\displaystyle \pi_t^*(s) = \argmax_{a}  \,  r(s, a) + \gamma\,  \frac{1}{M_l} \sum_{j=1}^{M_l} \, \tilde{J}_{t+1}\bigl(s^{(a,j)}\bigr)$, where $s^{(a,j)}$ is sampled from the generative model, conditional on state $s$ and action $a$.    
	\BlankLine
	Initialize $\hat{V}^0$ arbitrarily.
        \BlankLine
	\For{$k' = 1, 2, \ldots, k$}{
		\vspace{0.5em}
		Sample a dataset $\mathcal{D} = \{(x^{(i)}, y^{(i)})\}_{i=1}^N \sim \rho$ \\
		\vspace{0.5em}
		\For{each $(x^{(i)}, y^{(i)}) \in \mathcal{D}$}{
			\vspace{0.5em}
			$ \displaystyle {V}^{(k')} = \max_{a \in \mathcal{A}} \frac{1}{M_u} \sum_{j=1}^{M_u} \left[ \tilde{R}^{(i,a,j)} + \gamma^T {V}^{k'-1}\left(x^{(i,a,j)}, y^{(i,a,j)}\right) \right]$ where $x_T^{(i,a,j)}$, $y_T^{(i,a,j)}$, and $\tilde{R}^{(i,a,j)}$ are sampled using a $T$-step transition conditioned on $x^{(i)}$, $a$, $\tilde{\boldsymbol{\pi}}^*$ and $\tilde{J}_1$.
		}
		\vspace{0.5em}
		$\tilde{V}^{k'} = \texttt{Regress}\Bigl( \bigl \{ \bigl(  (x^{(i)}, y^{(i)}), {V}^{(i)} \bigr) \bigr\}_{i=1}^N\Bigr)$ 
	}
    \BlankLine
	\For{$s_0 = (x_0, y_0) \in \mathcal{S}$}{
	    \vspace{0.5em}
	    $\displaystyle \tilde{\mu}^k(x_0, y_0) = \argmax_{a \in \mathcal{A}} \frac{1}{M_u} \sum_{j=1}^{M_u} \left[ \tilde{R}^{(i,a,j)} + \gamma^T \tilde{V}^k \left(x_T^{(i,a, j)}, y_T^{(i,a, j)}\right)\right]$
	    where $x_T^{(i,a, j)}$, $y_T^{(i,a, j)}$, and $\tilde{R}^{(i,a,j)}$ are sampled from a $T$-step transition as above.
	}
	\caption{Frozen-State Fitted Value Iteration (FSFVI)}
	\label{alg:fsfvi}
\end{algorithm}

\section{Frozen-State Fitted Value Iteration under a Generative Model}
So far, we have operated in the dynamic programming setting, where the model (rewards and transitions) is assumed to be known. We now move on to a more practical and realistic setting where we assume access to a \emph{generative model} or \emph{simulator} of the environment (also referred to as a \emph{restart distribution}), an assumption frequently used in the RL literature \citep{kakade2002approximately,kearns2002sparse,munos2008finite,li2020breaking,agarwal2020model}. This setting is widely studied and captures many real-world use cases where it is possible to simulate environment transitions \citep{bellemare2012investigating,todorov2012mujoco,brockman2016openai,tassa2018deepmind}, without having full access to closed-form model dynamics. This is an important setting that occupies a middle ground between dynamic programming and traditional model-free RL.

Fitted value iteration (FVI) is a simple method that approximates the value iteration backup step using regression \citep{munos2008finite}, which we keep as an abstract subroutine using the notation $\texttt{Regress}\bigl(\{(s^{(i)}, y^{(i)})\}_{i=1}^N\bigr)$, which refers to the problem of regressing $\{y^{(i)}\}$ onto $\{s^{(i)} \}$. The $\texttt{Regress}$ procedure is meant to encode all of the information needed to perform the regression: basis functions, approximation architecture (e.g., linear model or neural network), optimization procedure (e.g., stochastic gradient descent), etc. In Algorithm \ref{alg:fvi}, we show the abstract version of FVI and in Algorithm \ref{alg:fsfvi}, we give the frozen-state extension, \emph{Frozen-state FVI} (FSFVI), which can be viewed as a practical extension of FSVI for the RL setting.

Additionally, in Appendix \ref{sec:avi}, we present an alternative version of FVI that is less practical, but more amenable to theoretical analysis, leading to a regret bound that we provide in Theorem \ref{thm:fsavi_regret}. These results are motivated by the approximate value iteration method first presented in \cite{tsitsiklis1996feature}. Given that both the method and analysis are technical and notation-heavy, we relegated the discussion to the Appendix.

\section{Numerical Experiments}
\label{sec:experiment}
For our numerical experiments, we operate in the simulator-based RL setting described above, where a generative model or simulator is available but the MDP is not explicitly known. Note that both FVI and FSFVI naturally apply in this setting, as expectations are approximated via sample averages using either $M$, $M_l$, or $M_u$ next-state samples. In addition, we introduce empirical counterparts to Base VI (Algorithm~\ref{alg:exact_vi}) and FSVI (Algorithm~\ref{alg:avi_appr}), which we refer to as Base Empirical-VI (Base E-VI) and Empirical-FSVI (E-FSVI), following the naming convention of \citet{haskell2016empirical}. These algorithms are identical to their exact counterparts, except that all expectations are replaced with sample averages. For completeness, we provide full specifications in Appendix~\ref{app:empirical_vi}.

In Section~\ref{sec:runningtime}, we analyzed the computational cost of FSVI under the assumption of a known model with exact expectation computations. In the RL setting, however, this definition requires adjustment. As a natural analogue, we use the number of \emph{value function evaluations} as a proxy for computational effort. This makes sense in the RL context because value-based methods, whether value iteration, Q-iteration, or their fitted variants, all rely on repeatedly evaluating a value function (or Q-function). These evaluations typically dominate the overall computation time, especially when using function approximation. Thus, counting value function evaluations provides a measure of computational cost that aligns closely with RL practice, while being closely related to the exact results from Section \ref{sec:runningtime}.

We compare to a range of algorithms. For purely tabular methods, we consider:
\begin{itemize}
    \item E-FSVI with $T\in \{3, 6, 12\}$: Empirical-FSVI, as described in Algorithm \ref{alg:e-vi_appr}. For each domain, we run E-FSVI for each $T$ value in order to illustrate the trade-offs.
    \item Base E-VI: Empirical-VI, as described in Algorithm \ref{alg:e-vi}, applied to the ``base'', non-hierarchical problem. We also use Base E-VI to compute an approximation of the optimal value.
    \item Base E-QI: Empirical Q-iteration is the analogue of E-VI using a Q-function representation. 
    \item Slow-Agn Base E-VI: Slow-agnostic Base E-VI, a simple baseline that pretends the slow state does not exist and assumes the value function only depends on the fast state.\footnote{To implement this method, we convert the ``full'' state to a ``partial'' state by deleting the slow state, and when needing to interact with the simulator, we convert from a ``partial'' state back to a ``full'' state by injecting a random value for the slow state. See Appendix \ref{sec:app:ignore} for further details.}
\end{itemize}
With function approximation, we test:
\begin{itemize}
    \item \{Lin, Deep\}-FSFVI: Our proposed frozen-state fitted value iteration, as introduced in Algorithm \ref{alg:fsfvi}, using either linear function approximator (Lin-FSFVI) or a three-layer neural network function approximator (Deep-FSVI).
    \item \{Lin, Deep\}-FVI: Fitted value iteration \citep{munos2008finite}  using either linear function approximator (Lin-FVI) or a three-layer neural network function approximator (Deep-FVI). This algorithm is also known as least-squares Monte Carlo, popularized by \cite{longstaff2001valuing} and frequently used in the operations literature (see, e.g., \cite{nadarajah2017comparison}). 
    \item \{Lin, Deep\}-FQI: Fitted Q-iteration \citep{ernst2005tree} using either linear function approximator (Lin-FQI) or a three-layer neural network function approximator (Deep-FQI).
\end{itemize}
For Lin-FSFVI and Lin-FVI, we use the raw state feature vectors as input to a linear function approximator, without applying any feature transformation or basis functions. For Deep-{FSFVI, FVI}, we input the raw state vector into a fully connected feedforward neural network with two hidden layers, each comprising 4 units and ReLU activations, followed by a final linear output layer. For Lin-FQI and Deep-FQI, the input is formed by concatenating the raw state vector with a one-hot encoding of the discrete action (a standard practice in RL for handling discrete action spaces). The architecture of the Deep-FQI network is otherwise identical to that of Deep-FVI. All models are trained using the mean squared error loss and optimized with the Adam optimizer \citep{kingma2014adam}. A detailed open-source implementation is also provided with this paper.

We test these algorithms across three domains, summarized in Table \ref{tab:domains}. Each domain uses a high discount factor of $\gamma = 0.995$, corresponding to an effective planning horizon of approximately $1/(1 - \gamma) = 200$ steps. We describe each experiment below; in each section, we use self-contained notation. Detailed experimental settings are given in Appendix \ref{app:exp_details}.

\renewcommand{\arraystretch}{1.5}  %

\begin{table}[h]
\small
\vspace{10pt}
\centering
\caption{Summary of Experimental Domains}
\label{tab:domains}
\begin{tabular}{@{}lrrrrl@{}}
\toprule
Domain & \multicolumn{1}{c}{$|\mathcal{S}|$} & \multicolumn{1}{c}{$|\mathcal{A}|$} & \multicolumn{1}{c}{$|\mathcal{S} \times \mathcal{A}|$} & Algorithms Tested \\
\midrule
Inventory control        & 561     & 11  & 6{,}171      & Base E-VI, E-FSVI, Base E-QI, Slow-Agn Base E-VI \vspace{5pt}\\
Spatial-task gridworld   & 4{,}356 & 32  & 139{,}392    & Base E-VI, E-FSVI, Base E-QI, Slow-Agn Base E-VI \vspace{5pt}\\ 
Dynamic pricing          & 15{,}150 & 21  & 318{,}150    & \makecell[r]{Lin-FSFVI, Lin-FVI, Lin-FQI,\\Deep-FSFVI, Deep-FVI, Deep-FQI} \\
\bottomrule
\end{tabular}
\end{table}

\subsection{Inventory Control with Fixed Order Costs}
We consider an infinite horizon inventory control problem \citep{porteus1990stochastic} with fixed order costs, lost sales, and limited storage \citep{zhao2007storage}. The state of the system is $s_t = (y_t, d_t)$, where $y_t \in \{0, 1, \ldots, y_\textnormal{max}\}$ (the fast state) is the inventory level and $d_t \in \mathcal D$ is the current demand (the slow state), which evolves exogenously over time. At the beginning of each period $t$, the decision maker observes the current state $s_t$ and chooses an order quantity $a_t \in \mathcal A_\textnormal{order} = \{0, a_1, \ldots, a_\textnormal{max}\}$. The cost incurred includes a fixed cost $K$ if $a_t > 0$, a per-unit ordering cost $c$, and a holding cost $h$ per unit of end-of-period inventory. Unsatisfied demand is lost. The transition equation is:
\[
y_{t+1} = \min\left\{ y_t + a_t - \min\{ y_t, d_{t+1} \},\; y_\textnormal{max} \right\}, \quad
d_{t+1} \sim P_d(\cdot \mid d_t),
\]
where $d_{t+1}$ is drawn from a distribution $P_d$ conditional on $d_t$. The (expected) reward in each period is given by:
\[
r(s_t, a_t) =  p \, \mathbb{E} \bigl(\min\{ y_t, d_{t+1} \} \bigr)- c \, a_t - K \, \mathbf{1}_{\{a_t \, > \, 0\}}, 
\]
which represents the revenue, variable order cost, and fixed order cost. The objective is to maximize infinite horizon discounted reward, with $\gamma = 0.995$. In our experiment, we set $y_\textnormal{max} = 50$, $\mathcal A_\textnormal{order} = \{0, 5, \ldots, 50\}$, $d_{t+1} = \textnormal{clip}_{[0, 50]}(d_t + \epsilon_{t+1})$, and $\{\epsilon_t\}$ are i.i.d. random variables that take value 0 with probability $0.8$ and $\pm 1$ with probability 0.1 each.\footnote{We define $\textnormal{clip}_{[a,b]}(x) = \max\{\min\{x, b\},a\}$.} We use $M = M_u = 50$ for all methods, and $M_l=1$ for E-FSVI because the lower-level problem is deterministic.

\begin{figure}[h!]
  \centering    
  \includegraphics[width=0.7\textwidth]{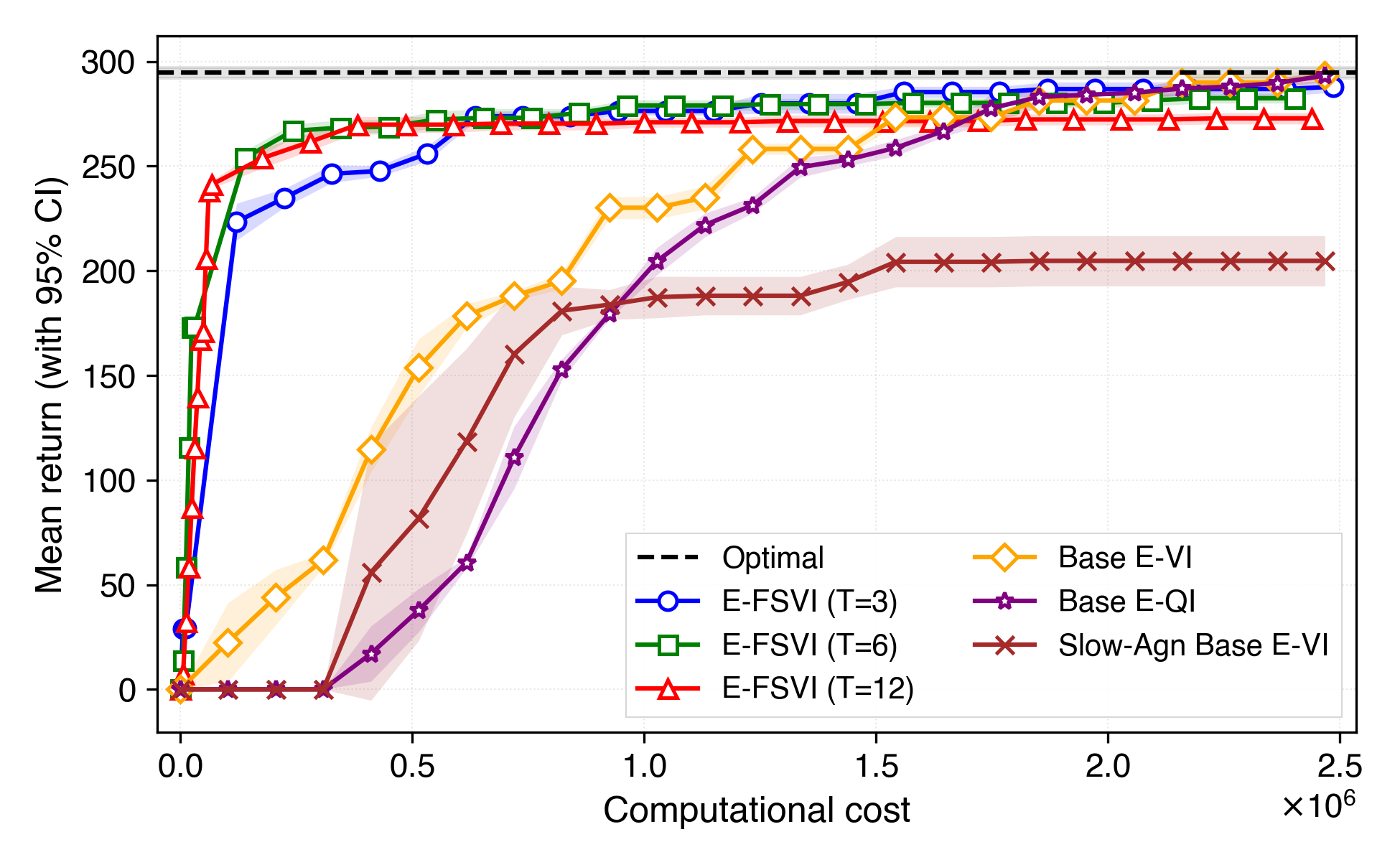}
  \caption{Mean return (with 95\% CI) versus computational cost for the inventory problem across five independent replications. For E-FSVI, we plot the performance of the policy after each lower-level value iteration and 3 times per upper-level value iteration. For the other methods, we plot 3 times per value iteration. To compute the optimal value, we ran Base E-VI for 50 iterations with $M=100$ samples per backup.}
  \label{fig:inventory}
\end{figure}

The results, shown in Figure \ref{fig:inventory}, demonstrate that frozen-state methods substantially outperform standard approaches in terms of computational efficiency. Specifically, E-FSVI with moderate values of \( T \), such as \( T = 6 \) and \( T = 12 \), achieves near-optimal return with significantly fewer value function evaluations compared to Base E-VI and E-QI. This supports the intuition that freezing the slow-moving demand state \( d_t \) over short planning horizons can preserve essential structure while simplifying the lower-level planning problem.

Among the E-FSVI variants, we find that \( T = 6 \) offers the best overall performance when considering both reward and computational cost. Larger values of \( T \), such as \( T = 12 \), initially converge faster but eventually suffer from increased approximation error due to longer freezing horizons. Conversely, if we look at the earlier part of the curves, E-FSVI with \( T = 3 \) seems to benefit less from the improved discount factor.

While Base E-VI and E-QI eventually reach optimal performance, they require significantly more computation. In contrast, the slow-agnostic baseline performs poorly, confirming that ignoring the slow state leads to suboptimal policies. These findings reinforce both our theoretical claims from Section \ref{sec:theory} and the trade-offs we observed in Figure \ref{fig:fsvi_vs_base_vi} from Section \ref{sec:interpreting}.

\subsection{Gridworld with Spatial Tasks}

\begin{figure}[htbp]
\centering
\begin{tikzpicture}[scale=0.7, >=Stealth]

  \draw[step=1cm, gray, very thin] (0,0) grid (11,11);

  \foreach \x in {0,...,10} {
    \node[below] at (\x+0.5, 0) {\x};
  }
  \foreach \y in {0,...,10} {
    \node[left] at (0, \y+0.5) {\y};
  }

  \newcommand{\drawtask}[8]{%
    \draw[->, ultra thick, #6]
      (#1+0.5,#2+0.5) to[bend left=#5] (#3+0.5,#4+0.5);

    \node[font=\small, fill=white, inner sep=1pt] at (#1+0.5, #2+0.5) [#8] {#7};
  }

  \drawtask{1}{7}{4}{10}{20}{black}{Task 1}{below};
  \drawtask{7}{9}{10}{6}{20}{black}{Task 2}{above};
  \drawtask{9}{3}{6}{0}{20}{black}{Task 3}{above};
  \drawtask{3}{1}{0}{4}{20}{black}{Task 4}{below};

  \drawtask{4}{9}{4}{7}{-20}{black}{Task 5}{right};
  \drawtask{6}{1}{6}{3}{-20}{black}{Task 6}{left};

  \drawtask{4}{6}{6}{6}{20}{black}{Task 7}{below left};
  \drawtask{6}{4}{4}{4}{20}{black}{Task 8}{above right};

\end{tikzpicture}
\caption{Spatial tasks on an 11$\times$11 gridworld. Start and end locations of each task are represented by the arrows. In order to receive a reward, the agent must pick up an object from the start location of the task and move it to the end location.}
\label{fig:gridworld-manual-labels}
\end{figure}

The main idea behind our second experiment domain is illustrated in Figure \ref{fig:gridworld-manual-labels} and is straightforward: an RL agent must navigate the gridworld and complete tasks. There are unlimited ``quantities'' of each task, which are labeled using arrows in the figure. The agent first must ``accept'' a task, after which they begin ``working'' on that task. To earn the ``completion'' reward, the agent must pick up an object from the start location of the task (beginning of the arrow) and bring it to the end location of the task. The agent also earns a small reward for picking up the object. The state of the agent is $s_t = (x_t, y_t, i_t, o_t, w_t)$, where $(x_t, y_t)$ are the coordinates of the agent, $i_t \in \{0, 1, \ldots, 8\}$ is the identity of the agent's current task (where $i_t=0$ means no current accepted task), $o_t \in \{0, 1\}$ is a binary state indicating whether the agent has picked up the object associated with the current task, and finally, $w_t \in \{0, 1\}$ is a reward signal indicating whether the high rewards are in the outer ring ($w_t=0$) or the inner ring ($w_t=1$). The agent's action is $a_t = (i'_t, d_t)$, where $i'_t \in \{1, 2, \ldots, 8\}$ is the identity of the task that the agent wishes to accept and $d_t$ is a cardinal direction to move. The rest of the environment works as follows:
\begin{itemize}
    \item If $i_t = 0$ (no current task), then $i_t = i'_t$. Otherwise, $i'_t$ takes no effect.
    \item If the agent reaches the start location of the current task $i_t$, then $o_t$ transitions from $0$ to $1$.
    \item  If the agent reaches the end location of the current task $i_t$ \emph{and} $i_t=1$, then the agent receives a reward $r(i_t, w_t)$.
\end{itemize}
The agent wishes to maximize infinite horizon discounted reward, with $\gamma = 0.995$. Again, we use $M = M_u = 50$ for all methods, and $M_l=1$ for E-FSVI.
We omit a detailed formulation of the MDP for succinctness, but give the full reward specification in Appendix \ref{app:gridworld}.

\begin{figure}[h!]
  \centering    
  \includegraphics[width=0.7\textwidth]{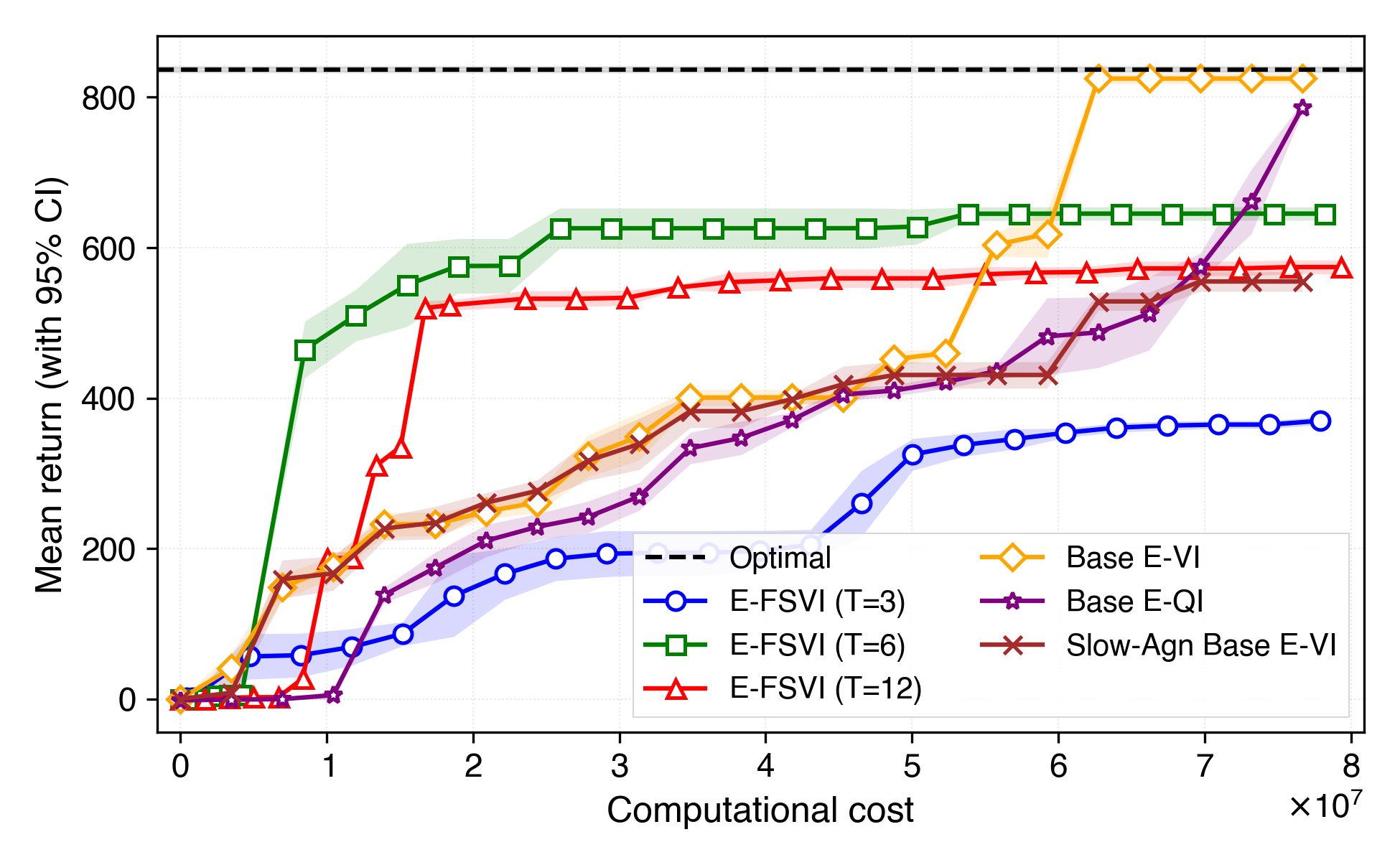}
  \caption{\looseness-1 Mean return (with 95\% CI) versus computational cost for the spatial gridworld problem across five independent replications. For E-FSVI, we plot the performance of the policy after each lower-level value iteration and 2 times per upper-level value iteration. For the other methods, we plot 2 times per value iteration. To compute the optimal value, we ran Base E-VI for 50 iterations with $M=100$ samples per backup.}
  \label{fig:spatial}
\end{figure}

The results are shown in Figure \ref{fig:spatial}. In contrast to the inventory control problem, the performance differences between E-FSVI algorithms with different values of $T$ are far more pronounced in this domain: the smallest $T$ value of $T = 3$ improves gradually and consistently, but is never quite able to achieve good performance. This effect is likely due to the nature of the tasks and the need to coordinate navigation across multiple steps. When $T$ is too small, the frozen lower-level policy is unable to fully capture meaningful navigation plans, leading to suboptimal behavior at the upper level. Interestingly, we observe that an intermediate value such as $T = 6$ provides a strong compromise: it allows the lower-level policy to encode useful subtask behavior while retaining enough flexibility for the upper level to guide high-level decisions. We also notice that Base VI and QI slowly improve and eventually reach optimality, but requires nearly 3x the computation to reach 75\% optimality that E-FSVI with $T=6$ is able to achieve very quickly.

This experiment underscores that in environments with complex dynamics, the choice of $T$ can be quite consequential.

\subsection{Dynamic Pricing with Reference Effects and Stochastic Consumer Behavior}
We consider a dynamic pricing problem where a seller interacts with a market of consumers who exhibit reference effects: that is, their purchase behavior depends not only on the current price but also on a history of past prices.  Reference prices represent consumers' internal price expectations for the product, based on historical prices. When the firm sets prices below (above) the consumer's reference price, there is a positive (negative) impact on demand. This is motivated by real-world e-commerce and retail environments where consumers are sensitive to perceived fairness or price changes over time. The model we consider is a slight extension of the one in \cite{chen2017efficient}. 

The state variable is $s_t = (p^\textnormal{ref}_t, \alpha_t, \eta_t^+, \eta_t^-)$, where $p^\textnormal{ref}_t$ is the reference price, $\alpha_t$ is the market's memory factor, and $(\eta_t^+, \eta_t^-)$ are the marginal reference price effect for gains and losses. The action is the price in the current period $p_t \in \mathcal P_t$, which induces a change in the reference price $p^\textnormal{ref}_{t+1} = \alpha_t \, p^\textnormal{ref}_{t} + (1-\alpha_t) \, p_t$. We model the demand as 
\[
D_t(p_t, p_t^\textnormal{ref}) = b_t - a_t \, p_t + \eta_t^+ \max \{p^\textnormal{ref}_{t} - p_t, 0\} + \eta_t^- \min\{p^\textnormal{ref}_{t} - p_t, 0\}.
\]
Consumer behavior is jointly described by $(\alpha_t, \eta_t^+, \eta_t^-)$, which evolve exogenously according to:
\[
(\alpha_{t+1}, \eta_{t+1}^+, \eta_{t+1}^-)  = \textnormal{clip}_{[(0.4,3,8), (0.9,7,12)]}((\alpha_t, \eta_t^+, \eta_t^-) + \epsilon_{t+1}),
\]
where $\textnormal{clip}$ is applied componentwise and $\epsilon_{t+1} \in \mathbb R^3$ and each component is i.i.d., taking value $0$ with 0.9 probability, $\pm 0.1$ with 0.05 probability each for the $\alpha_t$ dimension, and $\pm 1$ with $0.05$ probability each for the $\eta_t^+, \eta_t^-$ dimensions.
The reward function is $r(s_t, p_t) = p_t \, D_t(p_t, p_t^\textnormal{ref}).$

This problem is discretized as follows. The possible values of the action $p_t$ are  $\{0, 0.5, 1.0, \ldots, 10.0\}$, and the possible values of the reference price are $p_t^\textnormal{ref}$ is $\{0, 0.1, 0.2, \ldots, 10.0\}$. For $\alpha_t, \eta_t^+, \eta_t^-$, the possible values are $\{0.4, 0.5, 0.6, 0.7\}$, $\{3, 4, 5, 6,7\}$, and $\{8, 9, 10, 11, 12\}$.

\begin{figure}[h!]
  \centering    
  \includegraphics[width=0.7\textwidth]{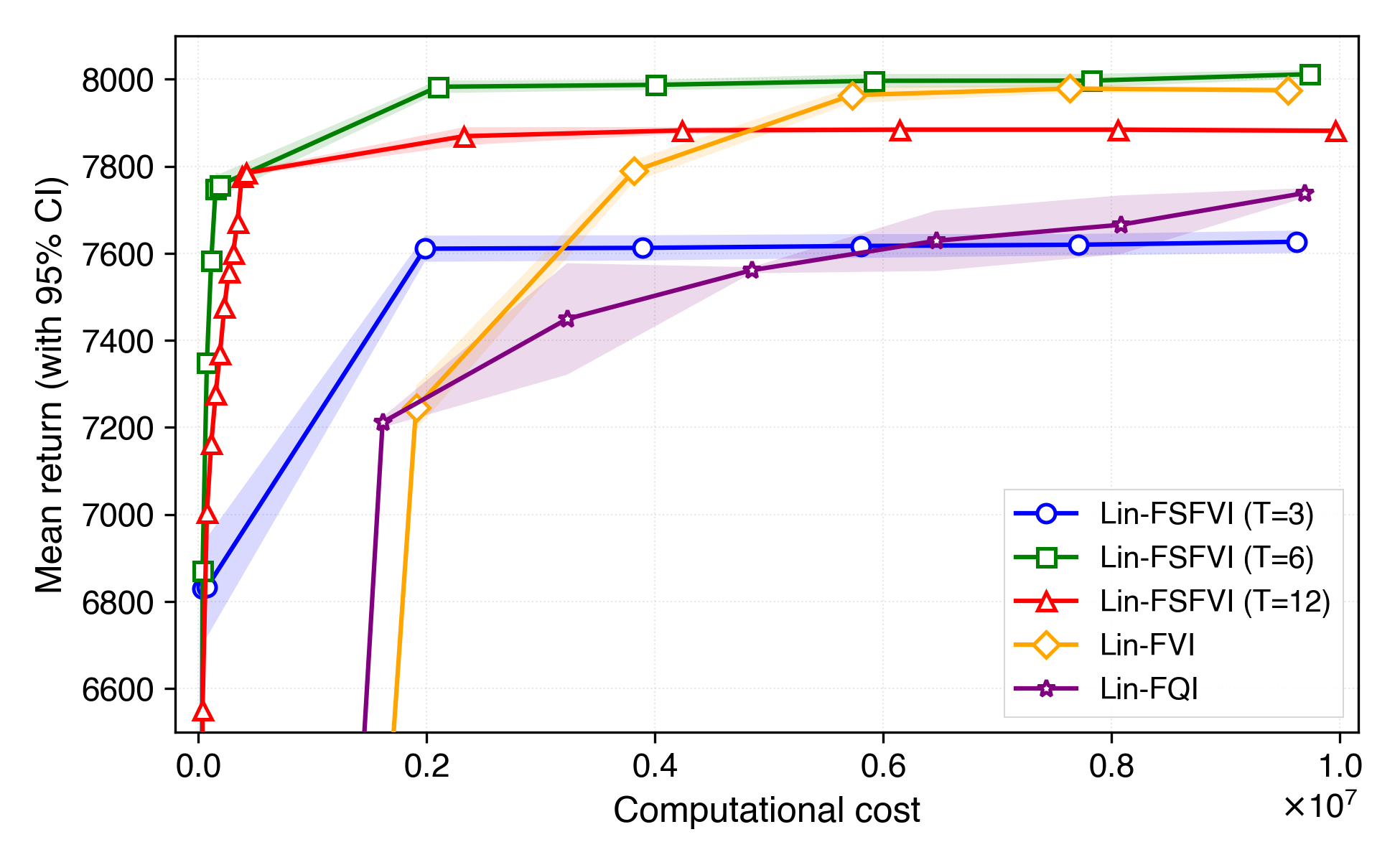}
  \caption{Mean return (with 95\% CI) versus computational cost for the dynamic pricing problem using linear architectures across five independent replications. For E-FSVI, we plot the performance of the policy after each lower-level value iteration and once per upper-level value iteration. For the other methods, we plot once per value iteration.}
  \label{fig:pricing_lin}
\end{figure}

\begin{figure}[h!]
  \centering    
  \includegraphics[width=0.7\textwidth]{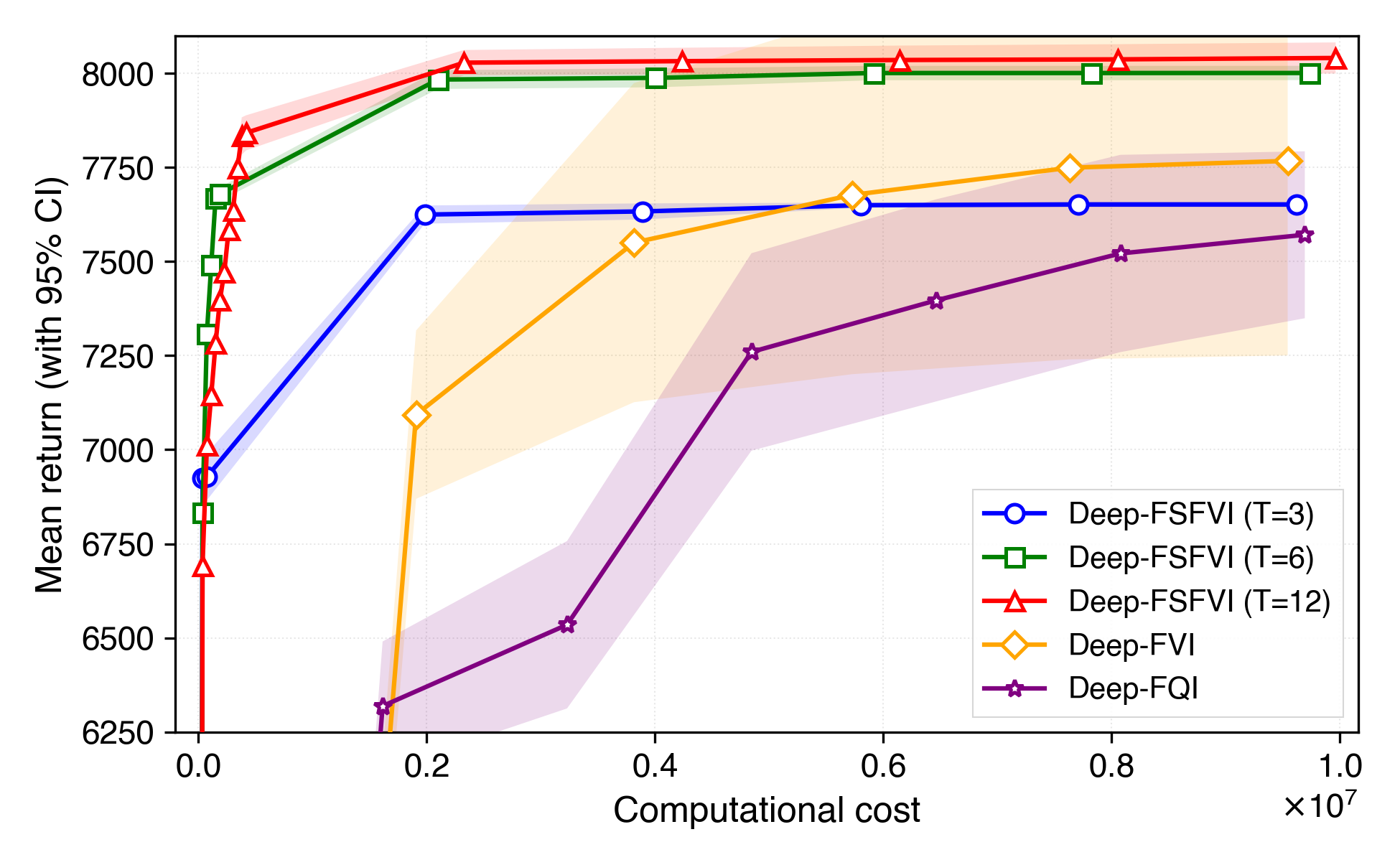}
  \caption{Mean return (with 95\% CI) versus computational cost for the dynamic pricing problem using linear network architectures across five independent replications. For E-FSVI, we plot the performance of the policy after each lower-level value iteration and once per upper-level value iteration. For the other methods, we plot once per value iteration.}
  \label{fig:pricing_deep}
\end{figure}

\looseness-1 As shown in Table \ref{tab:domains}, this is the largest problem out of the three. We therefore use this domain to test the ``fitted'' variants of our algorithms. We set $N$, the number of state samples, to be 10\% of the size of the state space. As before, we use $M = M_u = 50$ for all methods, and $M_l=1$ for E-FSVI.

Figures \ref{fig:pricing_lin} and \ref{fig:pricing_deep} compare the performance of frozen-state and baseline methods under two different architectures: linear function approximation (Figure~\ref{fig:pricing_lin}) and neural network function approximation (Figure~\ref{fig:pricing_deep}). We point out several takeaways:

\begin{enumerate}
    \item \looseness-1 Frozen-state methods improve upon baselines in both settings, but the gains are larger in the setting using neural network function approximation. This is largely due to the relatively poor performance of Deep-FVI and Deep-FQI, which appear to suffer from higher variance and instability during training. We hypothesize that this is due to the difficulty of fitting high-capacity models under long-horizon dynamics, where bootstrapping can amplify noise. In contrast, the structured nature of Deep-FSFVI, specifically the reduced effective discount factor and frozen slow states, likely contributes to improved stability and more reliable learning.

    \item Again, we see that the $T=3$ version of Deep-FSFVI performed poorly under both architectures, but $T=6$ and $T=12$ were adequate.
\end{enumerate}
Overall, the results suggest that frozen-state methods are broadly effective, and surprisingly, seem to enjoy some additional robustness even in the face of neural network architectures that can be difficult to fit.

\section{Conclusion}
In this paper, we studied a new class of MDPs with a type of structure called \emph{fast-slow} structure, motivated by practical applications where some components of the state evolve slowly over time. Based on this structure, we propose a set of new algorithms based on the idea of \emph{freezing the slow state} for several periods at a time to ease the computational burden of value iteration algorithms. We analyze the regret of the resulting policy using a novel analysis of various Bellman operators. Empirically, on three example applications, we show that our new frozen-state methods converge significantly faster to good policies than standard methods, and notably, ignoring the slow state leads to low-performing policies. Therefore, our method can be viewed as a viable compromise between solving the full MDP (often computationally intractable) and completely ignoring states during the modeling process (computationally easy but potentially highly suboptimal in the true model).

\section*{Funding}
This research is based upon work supported by the U.S. National Science Foundation under Grant No. 1807536.

\newpage
\bibliographystyle{abbrvnat}
{\small 
	\bibliography{bibliography}}

\clearpage
\newpage
\appendix

\begin{center}
    {\huge Online Supplement:\\Faster Reinforcement Learning by Freezing Slow States}\\[1.5em]
    {\Large Yijia Wang$^1$ and Daniel R. Jiang$^{2,1}$\\[0.5em]
    $^1$University of Pittsburgh, $^2$AI at Meta}
\end{center}
\bigskip\bigskip

\section{Empirical Versions of VI and FSVI}
\label{app:empirical_vi}

\begin{algorithm}
	\SetKwInput{Input}{Input}
	\SetKwInput{Output}{Output}
	\BlankLine
	\Input{Initial values $U^0$, number of iterations $k$, and number of next state samples $M$.}
		\medskip
	\Output{Approximation to the optimal policy $\nu^k$.}
	\BlankLine
	\For{$k' = 1, 2, \ldots, k $}{
	    \vspace{0.5em}
	    \For{$s$ in the state space $\mathcal{S}$}{
	        \vspace{0.5em}
    $\displaystyle U^{k'}(s) = \max_{a} r(s,a) + \gamma\, \frac{1}{M} \sum_{j=1}^M \, U^{k'-1}(s^{(a,j)})$, where $s^{(a,j)}$ is sampled from the generative model, conditional on state $s$ and action $a$.
        	\vspace{0.5em}
    	}
	}
	\vspace{0.5em}
	\For{$s$ in the state space $\mathcal{S}$}{
        \vspace{0.5em}
    	$\displaystyle \nu^{k}(s) = \argmax_{a} r(s,a) + \gamma\, \frac{1}{M} \sum_{j=1}^M \, U^{k}(s^{(a,j)})$, where $s^{(a,j)}$ is sampled from the generative model, conditional on state $s$ and action $a$. 
    	\vspace{0.5em}
	}
	\caption{Empirical VI for the Base Model (Base E-VI)}
	\label{alg:e-vi}
\end{algorithm}%

\begin{algorithm}
	\SetKwInput{Input}{Input}
	\SetKwInput{Output}{Output}
	\BlankLine
	\Input{Initial values $J_T^* \equiv 0$ and $V^0$, number of iterations $k$, and number of next-state samples $M_l$ (lower level), $M_u$ (upper level).}
		\medskip
	\Output{Approximation of the $T$-periodic frozen-state policy $(\tilde{\mu}^k, \tilde{\boldsymbol{\pi}}^*)$ and $J_1^*$.}
	\BlankLine
    	\For{each slow state $x \in \mathcal{X}$}{
		\vspace{0.5em}
        	\For{$t = T-1, T-2, \ldots, 1$}{
		\vspace{0.5em}
		    \For{each fast state $y \in \mathcal{Y}$}{
        		\vspace{0.5em}
        		$\displaystyle J^*_t(x,y) = \max_a r(x,y,a) + \gamma  \,  \frac{1}{M_l} \sum_{j=1}^{M_l} J^*_{t+1}(x,y^{(a,j)})$, where $y^{(a,j)}$ is sampled from the generative model, conditional on state $(x, y)$ and action $a$. \\

        		\vspace{0.5em}
        		$\displaystyle \tilde{\pi}_t^*(x,y) = \argmax_a \, r(x,y,a) + \gamma \, \frac{1}{M_l} \sum_{j=1}^{M_l} J^*_{t+1}(x,y^{(a,j)})$, with $y^{(a,j)}$ defined the same way as above.
        		\vspace{0.5em}
		    }
		}
	}
	\BlankLine
	\For{$k' = 1, 2, \ldots, k $}{
	    \vspace{0.5em}
	    \For{$s_0=(x_0,y_0)$ in the state space $\mathcal{X}\times \mathcal{Y}$}{
	        \vspace{0.5em}
        	$\displaystyle V^{k'}(x_0,y_0,J^*_1,\tilde{\boldsymbol{\pi}}^*) = \max_{a} \, \frac{1}{M_u} \sum_{j=1}^{M_u} \, \Bigl[\tilde{R}^{(a,j)} + \gamma^T V^{k'-1}(x^{(a,j)}_T, y^{(a,j)}_T,J^*_1,\tilde{\boldsymbol{\pi}}^*)\Bigr]$, where $x_T^{(a,j)}$, $y_T^{(a,j)}$, and $\tilde{R}^{(a,j)}$ are sampled using a $T$-step transition through the generative model, conditioned on $x_0$, $y_0$, $a$, $\tilde{\boldsymbol{\pi}}^*$ and $J^*_1$. 
        	\vspace{0.5em}
           
    	}
	}
	\vspace{0.5em}
	\For{$s_0=(x_0,y_0)$ in the state space $\mathcal{X}\times \mathcal{Y}$}{
        \vspace{0.5em}
    	$\displaystyle \tilde{\mu}^k(x_0,y_0) = \argmax_{a} \frac{1}{M_u} \sum_{j=1}^{M_u} \Bigl[\tilde{R}^{(a,j)} + \gamma^T V^k(x^{(a,j)}_T, y^{(a,j)}_T,J^*_1,\tilde{\boldsymbol{\pi}}^*)\Bigr]$, where $x_T^{(a,j)}$, $y_T^{(a,j)}$, and $\tilde{R}^{(a,j)}$ are sampled as above, conditioned on $x_0$, $y_0$, $a$, $\tilde{\boldsymbol{\pi}}^*$ and $J^*_1$.
    	\vspace{0.5em}
	}

	\caption{Empirical Frozen-State Value Iteration (E-FSVI)}
	\label{alg:e-vi_appr}
\end{algorithm}%

\clearpage

\section{Implementation Details for Experimental Results}
\label{app:exp_details}

\subsection{Ignoring the Slow State}
\label{sec:app:ignore}
For Slow-Agn Base VI, our goal is to simulate the situation where a decision maker neglects to consider the slow state during modeling. In other words, we model the problem as if the state consists only of the fast component, intentionally ignoring the slow state altogether. We implement this in our experiments as follows:
\begin{itemize}
    \item \textbf{Full-to-partial state conversion.} The environment always operates on the full state space, which includes both fast and slow components. However, the Slow-Agn E-VI algorithm only observes the fast component (i.e., a partial state). Whenever the environment returns a state to the algorithm, it extracts and passes only the fast component.
    \item \textbf{Partial-to-full state conversion.} When the algorithm queries the environment (e.g., for rewards or next states) using a partial state and an action, we construct a corresponding full state by randomly sampling a slow state uniformly and appending it to the given fast component. This synthesized full state is then used by the environment to transition forward.
\end{itemize}

\subsection{Gridworld with Spatial Tasks}
\label{app:gridworld}
We describe the rewards used in the environment. The agent receives a reward of 2.0 for picking up the object for every task. We specify the completion reward $r(i,w)$ for task $i$ in reward state $w$ as follows. For $i =1,\ldots,4$, we have $(r(i,0), r(i,1)) = (80,6)$; for $i=5, 6$, we have $(r(i,0), r(i,1)) = (1,1)$; and for $i=7,8$, we have $(r(i,0),r(i,1)) = (2,30)$. Note that the high rewards switch from the outer ring to inner ring depending on the value of $w_t$, which changes state from either $0$ to $1$ or $1$ to $0$ with probability 0.02.

\section{Value Function Approximation with a Linear Architecture} \label{sec:avi}

In this section, we explore the use of a linear architecture for a more compact representation of the value function. We show how \emph{approximate} VI (AVI) can be combined with the frozen-state approximation, resulting in an algorithm that can scale to fast-slow MDPs with much larger state spaces. The form of AVI that we use is based on the technique first proposed in \cite{tsitsiklis1996feature} and later also used in \cite{zanette2019limiting}. A technical contribution we make here is to prove error bounds when this approximation architecture is used in a hierarchical setting that combines finite-horizon and infinite-horizon components (i.e., our frozen-state VI).

\subsection{The Approximation Architecture}
Let $\boldsymbol{\phi}(s) = \bigl(\phi_1(s), \phi_2(s), \ldots, \phi_M(s)\bigr)^\intercal \in \R^M$ be an $M$-dimensional feature vector evaluated at state $s \in \mathcal S$. An approximation $\{\hat{J}_t(\boldsymbol{\omega_t})\}_{t=1}^{T}$ of the lower-level value functions $\{J_t^*\}_{t=1}^{T}$ of the frozen-state approximation is given by a sequence of parameter vectors $\{\boldsymbol{\omega}_t\}_{t=1}^T$ with $\boldsymbol{\omega}_t \in \mathbb R^M$, where the component of $\hat{J}_t(\boldsymbol{\omega_t})$ associated with $s$ is given by
\begin{equation*}
    \hat{J}_t(s,\boldsymbol{\omega}_t) = \boldsymbol{\phi}^\intercal(s) \, \boldsymbol{\omega}_t, \quad \text{for } t = 1, 2, \ldots, T.
\end{equation*}
(Note that since $J_T^* \equiv 0$, we can set $\boldsymbol{\omega}_T = \boldsymbol{0}$.) The approximation $\hat{V}(\boldsymbol{\beta})$ to the upper-level value function (of the frozen-state model) $V^*(J_1^*,\tilde{\boldsymbol{\pi}}^*)$ is given by a parameter vector $\boldsymbol{\beta} \in \mathbb R^M$, where
\begin{equation*}
    \hat{V}(s,\boldsymbol{\beta}) = \boldsymbol{\phi}^\intercal(s) \, \boldsymbol{\beta}.
\end{equation*}
For simplicity, we have used the same features in both the upper and lower levels.

It is well-known that naive specifications of approximate value iteration applied to linear architectures can produce divergent behavior \citep{bertsekas1996neuro}. To circumvent this potential issue, our algorithmic approach depends on a set of \emph{pre-selected} states $\tilde{\mathcal{S}} = \{s_1,s_2,\ldots,s_M\}$, an idea popularized in \cite{tsitsiklis1996feature}, who showed that if certain assumptions on these states and the feature vectors are satisfied, then divergence is avoided. A similar algorithm is also described more recently in \cite{zanette2019limiting}. 

In this section, we need an ordering of the state space, so without loss of generality, we assume that $\mathcal S = \{1, 2, \ldots, N\}$ and that the first $M$ are the pre-selected states, i.e., $s_m = m$ for $m = 1, 2, \ldots, M$. 
We make the following assumption on the feature vectors; this is essentially Assumption 2 of \cite{tsitsiklis1996feature}, adapted to our setting.

\begin{assumption}\label{assumption.basis_function}
Let $\tilde{\mathcal{S}} = \{s_1,s_2,\ldots,s_M\}$ be a set of pre-selected anchor states. Suppose the following conditions on the features $\boldsymbol{\phi}$ are satisfied.
    
    \begin{enumerate}
        \item The vectors $\boldsymbol{\phi}(s_1), \boldsymbol{\phi}(s_2), \ldots, \boldsymbol{\phi}(s_M)$ are linearly independent.
        
        \item There exists some $\gamma' \in [\gamma, 1)$ such that for any state $s\in \mathcal{S}$, there are coefficients $\theta_m(s) \in \R$ for $m=1,2, \ldots, M$  satisfying:
        \[
            \sum_{m=1}^M |\theta_m(s)| \leq 1 \quad \text{and} \quad \boldsymbol{\phi}(s) = \frac{\gamma'}{\gamma} \sum_{m=1}^M \theta_m(s) \, \boldsymbol{\phi}(s_m).
        \]
    \end{enumerate}
The interpretation of this assumption is that the feature space $\{\phi(s) \, | \, s \in \mathcal S\}$ lies in the convex hull of the points defined by the pre-selected states: $\bigl\{\pm (\gamma' / \gamma) \,  \phi(s_m)\bigr\}_{m=1}^M$. To reduce notional clutter, we will define $\kappa = \gamma' / \gamma$ to be the \emph{amplification factor} induced by the features.
\end{assumption}

Following \cite{tsitsiklis1996feature}, we let $\Phi \in \R^{N\times M}$ be a matrix with the $s$-th row equal to $\boldsymbol{\phi}^\intercal (s)$ and let $L\in \R^{M\times M}$ be a matrix with the $m$-th row equal to $\boldsymbol{\phi}^\intercal (s_m)$. If we let $G$ be the remaining rows of $\Phi$, then we see that $\Phi = \bigl[L; G\bigl]$. Next, by Assumption \ref{assumption.basis_function}, the matrix $L$ has a unique matrix inverse $L^{-1} \in \R^{M\times M}$. We define $\Phi^{\dagger} \in \R^{M\times N}$ as follows: for $m \in \{1, 2, \ldots, M\}$, suppose the $m$-th column of $\Phi^{\dagger}$ is equal to the $m$-th column of $L^{-1}$, and let all the other entries of $\Phi^{\dagger}$ be zero. In other words, $\Phi^{\dagger} = [L^{-1} \; 0]$. Therefore, we see that $\Phi^{\dagger}$ is a left inverse of $\Phi$:
\begin{equation*}
    \Phi^{\dagger} \Phi = [L^{-1} \; 0] \begin{bmatrix}L\\ G\end{bmatrix} = L^{-1} L = \mathbf{I},
\end{equation*}
where $\mathbf{I} \in \R^{M\times M}$ is the identity matrix.

\subsection{Frozen-State Approximate Value Iteration}
Recall the lower-level Bellman operator $\bar{H}$ from (\ref{eq.Vlower_operator}) and the upper-level Bellman operator $F_{\hat{J}_1, \hat{\boldsymbol{\pi}}}$ defined in (\ref{eq.Vupper_operator}). The high-level idea behind our new approach, \emph{frozen-state approximate value iteration} (FSAVI) is as follows:
\begin{itemize}
\item \textbf{Lower-level AVI.} We first run approximate value iteration (under basis functions $\Phi$) for the lower-level problem. Letting $\boldsymbol{\omega}_T^* = \mathbf{0}$, the parameter $\boldsymbol{\omega}^*_t$ is estimated by first evaluating $\bar{H} \hat{J}_{t+1}(\boldsymbol{\omega}_{t+1}^*) $ at the pre-selected states, and then computing $\boldsymbol{\omega}_t^*$ so that $\hat{J}_t(s,\boldsymbol{\omega}_t^*) = \bigl(\bar{H} \hat{J}_{t+1}(\boldsymbol{\omega}_{t+1}^*) \bigr) (s)$ for $s\in \tilde{\mathcal{S}}$.
\item \textbf{Upper-level AVI.} Suppose that after solving the lower level, we have parameter vectors $\boldsymbol{\omega}^* = (\boldsymbol{\omega}_1^*, \boldsymbol{\omega}_2^*, \ldots, \boldsymbol{\omega}_T^*)$, implying lower-level value functions $\hat{J}_t(\boldsymbol{\omega}_t^*) = \Phi \boldsymbol{\omega}_t^*$ and an associated greedy policy $\hat{\boldsymbol{\pi}}(\boldsymbol{\omega}^*) = (\hat{\pi}_1(\boldsymbol{\omega}^*), \ldots, \hat{\pi}_{T-1}(\boldsymbol{\omega}^*))$:
\begin{equation}
\hat{\pi}_t(x, y, \boldsymbol{\omega}^*) = \argmax_{a} \; r(x,y,a) + \gamma \, \E \bigl[\hat{J}_{t+1}(x,y', \boldsymbol{\omega}_{t+1}^*)\bigr].
\label{eq.greedy_linear}
\end{equation}
For the upper level, the parameter $\boldsymbol{\beta}_k$ is updated to $\boldsymbol{\beta}_{k+1}$ in iteration $k+1$ by first evaluating $F_{\hat{J}_1(\boldsymbol{\omega}_1^*), \hat{\boldsymbol{\pi}}(\boldsymbol{\omega}^*)} \hat{V} (\boldsymbol{\beta}_{k})$ at the pre-selected states, then computing $\boldsymbol{\beta}_{k+1}$ so that $\hat{V}(s,\boldsymbol{\beta}_{k+1}) = \bigl(F_{\hat{J}_1(\boldsymbol{\omega}_1^*), \hat{\boldsymbol{\pi}}(\boldsymbol{\omega}^*)} \hat{V} (\boldsymbol{\beta}_{k})\bigr)(s)$ for $s\in \tilde{\mathcal{S}}$. Note that, taking $\Phi$ as fixed, the dependence of the upper level on the lower level can be represented succinctly through $\boldsymbol{\omega}^*$. Therefore, we will use the simplified notation $F_{\boldsymbol{\omega}} := F_{\hat{J}_1(\boldsymbol{\omega}_1), \hat{\boldsymbol{\pi}}(\boldsymbol{\omega})}$ going forward.
\end{itemize}
To start, we define two new Bellman operators for the parameter space:
\begin{equation*}
    \bar{H}' = \Phi^{\dagger} \circ \bar{H} \circ \Phi \quad \text{and} \quad F'_{\boldsymbol{\omega}} = \Phi^{\dagger} \circ F_{\boldsymbol{\omega}} \circ \Phi.
\end{equation*}
To understand the definition of $H'$, consider the lower level. Suppose we start with a parameter vector $\boldsymbol{\omega}_{t+1}^*$, representing an approximate value function at time period $t+1$ given by $\hat{J}_{t+1}(\boldsymbol{\omega}_{t+1}^*) = \Phi \boldsymbol{\omega}_{t+1}^*$. The update to the next parameter vector $\boldsymbol{\omega}_t^*$ is obtained by applying $\bar{H}$ to $\hat{J}_{t+1}(\boldsymbol{\omega}_{t+1}^*)$, as we would normally do, and then applying $\Phi^{\dagger}$ to project back to the parameter space. For the upper level, a similar logic holds to go from $\boldsymbol{\beta}_i$ to $\boldsymbol{\beta}_{i+1}$: we first have the approximate upper-level value function $\hat{V}(\boldsymbol{\beta}_{i}) = \Phi \boldsymbol{\beta}_k$ and then apply the normal Bellman update $F_{\boldsymbol{\omega}^*}$, before lastly obtaining the updated parameter $\boldsymbol{\beta}_{i+1}$ using $\Phi^{\dagger}$. Therefore, we have 
\begin{equation*}
\boldsymbol{\omega}_t^* = \bar{H}' (\boldsymbol{\omega}_{t+1}^*) \quad \text{and} \quad \boldsymbol{\beta}_{i+1} = F'_{\boldsymbol{\omega}^*} (\boldsymbol{\beta}_{i}).
\end{equation*}
We show in the Lemma \ref{lemma.upper_Hprime_contraction} of the Appendix that $F_{\boldsymbol{\omega}^*}'$ is an $(\kappa \gamma^{T})$-contraction in the norm $\|\cdot\|_\Phi$ on $\R^M$ defined by $\|\boldsymbol{\beta}\|_\Phi = \| \Phi \boldsymbol{\beta} \|_\infty$ and therefore has a fixed point $\boldsymbol{\beta}^*$. We now define two quantities related to the approximation error of the linear architecture.
\begin{definition}

Define the linear architecture approximation error for the lower level as
\begin{equation}
    \varepsilon_{\textnormal{low}} = \textstyle \max_{t \in \{1, 2, \ldots, T\}} \, \inf_{\boldsymbol{\omega}_t \in \mathbb R^M} \, \bigl \| J^*_t - \hat{J}_t(\boldsymbol{\omega}_t) \bigr \|_\infty.
    \label{eq:eps_low}
\end{equation}
Let $V^*_{\boldsymbol{\omega}}$ be the fixed point of $F_{\boldsymbol{\omega}}$. For the upper level, we define error $\epsilon_{\textnormal{up}}$ as
\begin{equation}
    \varepsilon_{\textnormal{up}} = \textstyle \sup_{\boldsymbol{\omega}} \, \inf_{\boldsymbol{\beta} \in \mathbb{R}^M} \bigl \| V^*_{\boldsymbol{\omega}} - \hat{V}(\boldsymbol{\beta}) \bigr \|_\infty.
    \label{eq:eps_high}
\end{equation}
Both (\ref{eq:eps_low}) and (\ref{eq:eps_high}) are related to the approximation errors defined in \cite{tsitsiklis1996feature}; moreover, taking a uniform bound over quantities that need to be approximated resembles the errors defined in \cite{munos2008finite}.
\end{definition}

\begin{algorithm}
	\SetKwInput{Input}{Input}\SetKwInput{Output}{Output}
	\Input{$\tilde{\mathcal{S}} = \{s_1, s_2, \ldots, s_M\}$, $\boldsymbol{\phi}$, initial weights $\boldsymbol{\omega}_T = \boldsymbol{\beta}_0 = \mathbf{0}$, number of iterations $k$.
	}
	\medskip
	\Output{Approximation of the $T$-periodic frozen-state policy $\bigl(\hat{\mu}_{(\boldsymbol{\beta}^k,\, \boldsymbol{\omega}^*)}, \hat{\boldsymbol{\pi}}_{\boldsymbol{\omega}^*}\bigr)$ and $\hat{J}_1(\boldsymbol{\omega}^*)$} %
	\BlankLine
	\For{$t = T-1, T-2, \ldots, 1$}{
		\vspace{0.5em}
		\For{each pre-selected state $s = (x, y) \in \tilde{\mathcal{S}}$}{
    		\vspace{0.5em}
    		$J_t(x,y) = \max_{a} r(x, y, a) + \gamma \, \E \bigl[ \hat{J}_{t+1}(x, f_\mathcal{Y}(x,y,a,w), \boldsymbol{\omega}_{t+1}) \bigr]$.\\
    		\vspace{0.5em}
		}
		\vspace{0.5em}
		Set remaining entries of $J_t$ to zero. Update parameter vector: $\boldsymbol{\omega}_t^* = \Phi^\dagger J_t$.\\
	}
    	Let $\hat{\boldsymbol{\pi}}_{\boldsymbol{\omega}^*}$ be greedy with respect to $\hat{J}_t(\boldsymbol{\omega}_t^*) = \Phi \boldsymbol{\omega}_t^*$, similar to \eqref{eq.greedy_linear}.

	\BlankLine
	\For{$i=1,2,\ldots,k$}{
		\vspace{0.5em}
		\For{each pre-selected state $s_0 \in \tilde{\mathcal{S}}$}{
			\vspace{0.5em}
			$V^i(s_0) = \max_a \E  \bigl[ \tilde{R}(s, a, \hat{J}_1(\boldsymbol{\omega}_1^*)) + \gamma^T \, \hat{V}(s_T(a,\tilde{\boldsymbol{\pi}}_\text{avi}) , \boldsymbol{\beta}_{i-1} )\bigr].$\\ \vspace{0.5em}
			Set remaining entries of $V^i$ to zero. Update parameter vector: $\boldsymbol{\beta}_i = \Phi^\dagger \, V^{i}$.\\
		}
	}
	\For{$s_0$ in the state space $\mathcal{S}$}{
        \vspace{0.5em}
    	$\hat{\mu}_{(\boldsymbol{\beta}^k,\, \boldsymbol{\omega}^*)}(s_0) = \argmax_{a} \E\bigl[\tilde{R}(s_0, a, \hat{J}_1(\boldsymbol{\omega}_1^*)) + \gamma^T
    	\hat{V}(s_T(a,\tilde{\boldsymbol{\pi}}_{\boldsymbol{\omega}^*}) ,\boldsymbol{\beta}_k)\bigr]$. 
    	\vspace{0.5em}
	}
	\caption{Frozen-State Approximate Value Iteration (FSAVI)}
	\label{alg:avi_appr}
\end{algorithm}\DecMargin{1em}

\begin{theorem} \label{thm:fsavi_regret}
    Let $(\hat{\mu}_{(\boldsymbol{\beta}^k,\, \boldsymbol{\omega}^*)}, \hat{\boldsymbol{\pi}}_{\boldsymbol{\omega}^*})$ be the result after running FSAVI for $k$ iterations for a given $\tilde{\mathcal{S}}$ and $\boldsymbol{\phi}$. The regret incurred when running $(\hat{\mu}_{(\boldsymbol{\beta}^k,\, \boldsymbol{\omega}^*)}, \hat{\boldsymbol{\pi}}_{\boldsymbol{\omega}^*})$ in the base model satisfies
    \begin{align*}
        &\mathcal R\bigl(\hat{\mu}_{(\boldsymbol{\beta}^k,\, \boldsymbol{\omega}^*)}, \hat{\boldsymbol{\pi}}_{\boldsymbol{\omega}^*}\bigr) \leq \biggl(\frac{2\gamma^T}{(1-\gamma^T)^2} + \frac{2}{1-\gamma^T}\biggr) \epsilon_{r, \textnormal{avi}}(\gamma,\alpha,d_\mathcal{Y},\mathbf{L},T, \gamma', \varepsilon_\textnormal{low})\\
&+ \biggl(\frac{2\gamma^{2T}}{(1-\gamma^T)^2} + \frac{2\gamma^T}{1-\gamma^T}\biggr) L_U \, d(\alpha,d_\mathcal{Y},T)+ \biggl(\frac{1 + \kappa}{1- \kappa \gamma^T} \biggr)\,\varepsilon_{\textnormal{up}} + (\kappa \gamma^T)^k \biggl( \frac{\kappa^2 - \kappa^2 (\kappa\gamma)^{T+1}}{(1-\kappa \gamma^T) (1-\kappa\gamma)} \biggr) r_\textnormal{max},
    \end{align*}
    where the reward error is given by
    \begin{align*}
\epsilon_{r, \textnormal{avi}}(\gamma,\alpha,d_\mathcal{Y},\mathbf{L},T, \gamma', \varepsilon_\textnormal{low}) = \epsilon_r(\gamma,\alpha,d_\mathcal{Y},\mathbf{L},T) + \biggl(\frac{1+\kappa}{1-\kappa\gamma} - \frac{(\kappa\gamma)^T(1+\gamma)}{\gamma-\kappa\gamma^2}\biggr) \varepsilon_{\textnormal{low}}.
\end{align*}
\end{theorem}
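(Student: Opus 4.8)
The plan is to invoke Lemma~\ref{lemma:main} with the choices $\boldsymbol{\pi} = \hat{\boldsymbol{\pi}}_{\boldsymbol{\omega}^*}$, $J_1 = \hat{J}_1(\boldsymbol{\omega}_1^*)$, and $V = \hat{V}(\boldsymbol{\beta}^k)$, since by construction the FSAVI policy $\hat{\mu}_{(\boldsymbol{\beta}^k, \boldsymbol{\omega}^*)}$ is exactly the greedy policy in \eqref{eq.mu_approx_opt} with respect to these approximations. This reduces the problem to bounding the three quantities appearing on the right-hand side of Lemma~\ref{lemma:main}: the reward error $\epsilon_r(\boldsymbol{\pi}^*, \hat{J}_1(\boldsymbol{\omega}_1^*))$, the (already explicit) end-of-horizon term, and the $V$-approximation error $\|V^*(\hat{J}_1(\boldsymbol{\omega}_1^*), \hat{\boldsymbol{\pi}}_{\boldsymbol{\omega}^*}) - \hat{V}(\boldsymbol{\beta}^k)\|_\infty$. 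The end-of-horizon term carries over verbatim, so the work is in the other two.

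For the reward error, I would decompose $\epsilon_r(\boldsymbol{\pi}^*, \hat{J}_1(\boldsymbol{\omega}_1^*))$ via the triangle inequality into $\epsilon_r(\boldsymbol{\pi}^*, J_1^*) \le \epsilon_r(\gamma,\alpha,d_\mathcal{Y},\mathbf{L},T)$ (by Proposition~\ref{prop:cumreward_diff}) plus the term $\max_{s,a}|\E[\tilde R(s,a,J_1^*)] - \E[\tilde R(s,a,\hat J_1(\boldsymbol{\omega}_1^*))]|$, which by \eqref{eq:R_tilde_defn} is at most $\gamma \|J_1^* - \hat J_1(\boldsymbol{\omega}_1^*)\|_\infty$. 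The latter requires a lower-level AVI error analysis: since $\boldsymbol{\omega}_t^* = \bar{H}'(\boldsymbol{\omega}_{t+1}^*)$ and $\bar{H}$ (hence, after composing with $\Phi^\dagger \circ \cdot \circ \Phi$, the operator $\bar{H}'$) enjoys a $\kappa\gamma$-contraction in $\|\cdot\|_\Phi$ by the Tsitsiklis--Van Roy machinery, I would run the standard approximate-value-iteration error recursion backward from $\boldsymbol{\omega}_T^* = \mathbf 0$ over $T-1$ steps, accumulating $\varepsilon_\textnormal{low}$ per step amplified by powers of $\kappa\gamma$. Summing the geometric-type series $\sum_{i=0}^{T-1}(\kappa\gamma)^i$ (with the extra factor $(1+\kappa)$ coming from the projection step, as in \cite{tsitsiklis1996feature}) and folding the $\gamma$ prefactor in produces exactly the closed form $\bigl(\tfrac{1+\kappa}{1-\kappa\gamma} - \tfrac{(\kappa\gamma)^T(1+\gamma)}{\gamma - \kappa\gamma^2}\bigr)\varepsilon_\textnormal{low}$ claimed for $\epsilon_{r,\textnormal{avi}}$.

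For the $V$-approximation error, I would split $\|V^*(\hat J_1(\boldsymbol{\omega}_1^*), \hat{\boldsymbol{\pi}}_{\boldsymbol{\omega}^*}) - \hat V(\boldsymbol{\beta}^k)\|_\infty$ — writing $V^*_{\boldsymbol{\omega}^*}$ for $V^*(\hat J_1(\boldsymbol{\omega}_1^*),\hat{\boldsymbol{\pi}}_{\boldsymbol{\omega}^*})$, the fixed point of $F_{\boldsymbol{\omega}^*}$ — into a bias part $\|V^*_{\boldsymbol{\omega}^*} - \Phi\boldsymbol{\beta}^*\|_\infty$ and an iteration part $\|\Phi\boldsymbol{\beta}^* - \Phi\boldsymbol{\beta}^k\|_\infty$. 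By Lemma~\ref{lemma.upper_Hprime_contraction}, $F'_{\boldsymbol{\omega}^*}$ is a $\kappa\gamma^T$-contraction in $\|\cdot\|_\Phi$ with fixed point $\boldsymbol{\beta}^*$, so the iteration part contracts at rate $(\kappa\gamma^T)^k$ starting from $\boldsymbol{\beta}_0 = \mathbf 0$; bounding the initial error $\|\Phi\boldsymbol{\beta}^*\|_\infty$ in terms of $r_\textnormal{max}$ (the upper-level rewards $\tilde R$ are bounded by roughly $r_\textnormal{max}(1+\gamma+\cdots)$, and the fixed point absorbs the $1/(1-\kappa\gamma^T)$ factor) and tracking the $\kappa$-amplification through $\Phi^\dagger$ yields the $(\kappa\gamma^T)^k$ term with its $\bigl(\tfrac{\kappa^2 - \kappa^2(\kappa\gamma)^{T+1}}{(1-\kappa\gamma^T)(1-\kappa\gamma)}\bigr)r_\textnormal{max}$ coefficient. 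The bias part is controlled by $\varepsilon_\textnormal{up}$ via the standard projected-fixed-point argument (the $(1+\kappa)/(1-\kappa\gamma^T)$ factor), giving the $\bigl(\tfrac{1+\kappa}{1-\kappa\gamma^T}\bigr)\varepsilon_\textnormal{up}$ term.

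The main obstacle I anticipate is the careful bookkeeping in the lower-level AVI recursion: unlike the infinite-horizon upper level, the lower level is a $(T-1)$-step finite-horizon backward recursion, so the error does not simply equal a geometric series but a truncated one, and each step mixes a contraction factor $\kappa\gamma$ with an additive $\varepsilon_\textnormal{low}$ and a projection-induced $(1+\kappa)$ factor; getting the exact truncated-geometric constant to match the stated $\bigl(\tfrac{1+\kappa}{1-\kappa\gamma} - \tfrac{(\kappa\gamma)^T(1+\gamma)}{\gamma-\kappa\gamma^2}\bigr)$ form — rather than a looser bound — requires precise tracking of how $\hat{\boldsymbol{\pi}}_{\boldsymbol{\omega}^*}$ being greedy with respect to a \emph{perturbed} value function propagates into the upper-level reward $\tilde R$ and into $V^*_{\boldsymbol{\omega}^*}$ itself. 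Everything else is an application of Lemma~\ref{lemma:main}, Lemma~\ref{lemma.upper_Hprime_contraction}, and the contraction estimates of \cite{tsitsiklis1996feature}.
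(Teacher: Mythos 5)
Your proposal is correct and follows essentially the same route as the paper's proof: it invokes Lemma~\ref{lemma:main} with exactly the substitutions $\boldsymbol{\pi} = \hat{\boldsymbol{\pi}}_{\boldsymbol{\omega}^*}$, $J_1 = \hat{J}_1(\boldsymbol{\omega}_1^*)$, $V = \hat{V}(\boldsymbol{\beta}_k)$, decomposes the reward error by the triangle inequality into $\epsilon_r(\gamma,\alpha,d_\mathcal{Y},\mathbf{L},T)$ plus $\gamma\|J_1^* - \hat{J}_1(\boldsymbol{\omega}_1^*)\|_\infty$ controlled by the truncated-geometric lower-level AVI recursion (the content of Lemma~\ref{lemma.lower_w_bellman}), and splits the $V$-approximation error into the projected-fixed-point bias term and the $(\kappa\gamma^T)^k$ iteration term (the content of Lemmas~\ref{lemma.upper_w_bellman} and \ref{lemma:value_iteration_avi}). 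The "main obstacle" you flag — the finite-horizon bookkeeping in the lower level — is precisely what the paper isolates into Lemma~\ref{lemma.lower_w_bellman}, so your sketch matches the paper's argument in both structure and detail.
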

\begin{proof}
See Appendix \ref{sec:proof_fsavi_regret}.
\end{proof}

\section{Proofs from Sections \ref{sec:GeneralModel} and \ref{sec:hierarchical_approx}}
\subsection{Technical Lemmas}
\begin{lemma}
    \label{lemma:xt_yt}
Consider a $(\alpha, d_\mathcal{Y})$-fast-slow MDP. For any states $(x_0,y_0)$ and $(\tilde{x}_0,\tilde{y}_0)$, let $(x_t, y_t)$ and $(\tilde{x}_t,\tilde{y}_t)$ be the states reached after $t$ transitions under a policy $\boldsymbol{\pi} = (\pi_0,\ldots,\pi_{t-1})$, i.e., $(x_t, y_t) = f^{\boldsymbol{\pi}}(x_{t-1}, y_{t-1}, w_t)$ and $(\tilde{x}_t, \tilde{y}_t) = f^{\boldsymbol{\pi}}(\tilde{x}_{t-1}, \tilde{y}_{t-1}, \tilde{w}_t)$. Then, for any policy $\boldsymbol{\pi}$, we have
\begin{enumerate}[(i)]
    \item $\|x_t - \tilde{x}_0\|_2 \leq t \alpha  d_\mathcal{Y} + \|x_0-\tilde{x}_0\|_2$,
    \item $ \|x_t - \tilde{x}_t\|_2 \leq 2 t \alpha d_\mathcal{Y} + \|x_0-\tilde{x}_0\|_2$,
    \item  $\|y_t - \tilde{y}_t\|_2 \leq 2 t d_\mathcal{Y} + \|y_0-\tilde{y}_0\|_2$.
\end{enumerate}
\end{lemma}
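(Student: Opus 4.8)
The plan is to prove all three inequalities by a direct application of the triangle inequality combined with the one-step transition bounds from Assumption~\ref{assumption.fast_slow_transition}(iii). The key observation is that along \emph{any} trajectory, consecutive slow states differ by at most $\alpha d_\mathcal{Y}$ and consecutive fast states by at most $d_\mathcal{Y}$, so over $t$ steps the total displacement of the slow (resp.\ fast) component is at most $t \alpha d_\mathcal{Y}$ (resp.\ $t d_\mathcal{Y}$).

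First I would establish the ``displacement bounds'' for a single trajectory. Since $x_i = f_\mathcal{X}(x_{i-1}, y_{i-1}, \pi_{i-1}(x_{i-1},y_{i-1}), w_i)$, Assumption~\ref{assumption.fast_slow_transition}(iii) gives $\|x_i - x_{i-1}\|_2 \le \alpha d_\mathcal{Y}$ for each $i$, and a telescoping triangle inequality yields $\|x_t - x_0\|_2 \le \sum_{i=1}^t \|x_i - x_{i-1}\|_2 \le t\alpha d_\mathcal{Y}$. The identical argument on the $\tilde{x}$-trajectory gives $\|\tilde{x}_t - \tilde{x}_0\|_2 \le t\alpha d_\mathcal{Y}$, and analogously $\|y_t - y_0\|_2 \le t d_\mathcal{Y}$ and $\|\tilde{y}_t - \tilde{y}_0\|_2 \le t d_\mathcal{Y}$ using the fast-state bound.

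Then the three claims follow immediately. For (i), write $\|x_t - \tilde{x}_0\|_2 \le \|x_t - x_0\|_2 + \|x_0 - \tilde{x}_0\|_2 \le t\alpha d_\mathcal{Y} + \|x_0 - \tilde{x}_0\|_2$; note only the $x$-trajectory's displacement is needed here since the reference point is the \emph{initial} state $\tilde{x}_0$. For (ii), insert both $x_0$ and $\tilde{x}_0$: $\|x_t - \tilde{x}_t\|_2 \le \|x_t - x_0\|_2 + \|x_0 - \tilde{x}_0\|_2 + \|\tilde{x}_0 - \tilde{x}_t\|_2 \le 2t\alpha d_\mathcal{Y} + \|x_0 - \tilde{x}_0\|_2$. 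For (iii), the same splitting with $y_0$ and $\tilde{y}_0$ and the fast-state displacement bound gives $\|y_t - \tilde{y}_t\|_2 \le 2t d_\mathcal{Y} + \|y_0 - \tilde{y}_0\|_2$. (Equivalently, one could run a one-line induction on $t$ for each part.) There is no real obstacle here: the only thing to be careful about is correctly tracking which trajectory's displacement enters each bound — in particular that (i) uses a single trajectory's movement while (ii) and (iii) pick up a factor of two from moving along both trajectories.
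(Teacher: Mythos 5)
Your proof is correct and is exactly the argument the paper intends: the paper's own proof is a one-line remark that the lemma "is a consequence of Assumption~\ref{assumption.fast_slow_transition}," and your telescoping-plus-triangle-inequality argument simply fills in those details, correctly tracking that (i) uses one trajectory's displacement while (ii) and (iii) accumulate displacement along both.
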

\begin{proof}
    Lemma~\ref{lemma:xt_yt} is a consequence of Assumption~\ref{assumption.fast_slow_transition}.
\end{proof}

The following two lemmas are about properties of the Bellman operators $H$ and $\tilde{H}$ (recall that $\tilde{H}$ is the frozen-state version).
\begin{lemma}
    \label{lemma:HV_HV'}
    For any state $(x,y)$ and any two value functions $V, \; V': \mathcal{X}\times \mathcal{Y} \rightarrow \R$, we have
    \[
    |(\tilde{H}^t V) (x,y) - (\tilde{H}^t V') (x,y)| \leq \gamma^t \max_{y \in \mathcal Y_x^t} \, \bigl| V(x,y) - V'(x,y) \bigr|,
    \]
    where $\mathcal Y_s^t$ is the set of fast states reachable from $s=(x,y)$ after $t$ transitions of $f_\mathcal{Y}(x, \cdot, \cdot, \cdot)$.
\end{lemma}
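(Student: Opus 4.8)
The plan is to prove this by induction on $t$, peeling off one application of $\tilde H$ at a time. The base case $t=0$ is trivial since $\tilde H^0$ is the identity and $\mathcal Y_s^0 = \{y\}$, so both sides reduce to $|V(x,y) - V'(x,y)|$.

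For the inductive step, write $\tilde H^t V = \tilde H (\tilde H^{t-1} V)$ and apply the standard one-step $\gamma$-contraction argument for the frozen-state operator $\tilde H$ defined in \eqref{eq.Vlower_operator}. Using the elementary bound $|\max_a g(a) - \max_a g'(a)| \le \max_a |g(a) - g'(a)|$ applied to $g(a) = r(x,y,a) + \gamma\,\E[(\tilde H^{t-1}V)(x, f_\mathcal{Y}(x,y,a,w))]$ and its primed analogue, the reward terms cancel, leaving
\[
\bigl| (\tilde H^t V)(x,y) - (\tilde H^t V')(x,y)\bigr| \le \gamma \max_a \bigl| \E\bigl[(\tilde H^{t-1}V - \tilde H^{t-1}V')(x, f_\mathcal{Y}(x,y,a,w))\bigr]\bigr|.
\]
The key observation, which is special to the frozen-state operator, is that the slow coordinate inside the expectation is still $x$ (it has not moved), so the successor states are of the form $(x, y')$ with $y' = f_\mathcal{Y}(x,y,a,w)$; every such $y'$ lies in $\mathcal Y_s^1$, the set of fast states reachable in one step. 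Bounding the expectation by its sup over the support and then invoking the induction hypothesis at the state $(x, y')$ gives
\[
\bigl| (\tilde H^t V)(x,y) - (\tilde H^t V')(x,y)\bigr| \le \gamma \max_{y' \in \mathcal Y_s^1} \gamma^{t-1} \max_{y'' \in \mathcal Y_{(x,y')}^{t-1}} \bigl|V(x,y'') - V'(x,y'')\bigr|.
\]
It remains only to note that $\bigcup_{y' \in \mathcal Y_s^1} \mathcal Y_{(x,y')}^{t-1} \subseteq \mathcal Y_s^t$ — i.e., a fast state reachable in one step from $(x,y)$ followed by $t-1$ further steps (all with slow coordinate frozen at $x$) is reachable in $t$ steps — which collapses the nested maxima into a single $\max_{y \in \mathcal Y_s^t}$ and yields the claimed $\gamma^t$ factor.

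The only mildly delicate point is bookkeeping the reachable-set inclusion $\bigcup_{y'\in\mathcal Y_s^1}\mathcal Y_{(x,y')}^{t-1}\subseteq\mathcal Y_s^t$ and making sure the induction hypothesis is applied at the correct state with the correct reachable set; this is really a matter of careful notation rather than a genuine obstacle. One subtlety worth stating explicitly: the definition of $\mathcal Y_s^t$ should be read as encompassing all $(a,w)$-trajectories, so that the $\max_a$ introduced by the Bellman operator at each stage is absorbed into the reachable set — if instead $\mathcal Y_s^t$ were defined relative to a fixed policy, the argument would need a small adjustment, but as the operator-based statement is what is needed downstream, the union-over-actions reading is the intended one. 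No Lipschitz or fast-slow structural assumptions beyond the frozen transition are used here; this lemma is purely about the contraction property of $\tilde H$.
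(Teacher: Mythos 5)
Your proof is correct and is exactly the argument the paper has in mind: its own proof of this lemma is the one-line remark that ``the result follows by the contraction property of the Bellman operators,'' and your induction with the reachable-set bookkeeping (including the union-over-actions reading of $\mathcal Y_s^t$, which matches the paper's definition) is the fully spelled-out version of that contraction argument.
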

\begin{proof}
The result follows by the contraction property of the Bellman operators.
\end{proof}

\begin{lemma}[Discrepancy between $H$ and $\tilde{H}$]
    \label{lemma:HV_H'V}
    Consider a value function $V:\mathcal X \times \mathcal Y \rightarrow \mathbb R$. Suppose there exists $L_V>0$ such that for any states $(x,y)$ and $(\tilde{x},\tilde{y})$, it holds that $
    |V(x,y) - V(\tilde{x},\tilde{y})| \leq L_V \|(x,y) - (\tilde{x},\tilde{y})\|_2$.
    Then,
    \begin{align*}
       \bigl | (H^t &V)(x,y) - (\tilde{H}^t V) (\tilde{x},\tilde{y}) \bigr | \\
       &\leq \bigl\|(x,y)-(\tilde{x}, \tilde{y}) \bigl\|_2 \biggl(L_r \sum_{i=0}^{t-1} (\gamma L_f)^i + L_V (\gamma L_f)^t\biggr) + \alpha d_\mathcal{Y} \biggl( L_r \sum_{i=1}^{t-1} \gamma^i \, \sum_{j=0}^{i-1} L_f^j  + L_V \gamma^t \sum_{j=0}^{t-1} L_f^j \biggr).
    \end{align*}
\end{lemma}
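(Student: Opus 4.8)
The plan is to induct on $t$, with the inductive hypothesis being exactly the claimed inequality. The base case $t=0$ is immediate: $H^0 V = \tilde H^0 V = V$, all the sums in the stated bound are empty except the $L_V(\gamma L_f)^0 = L_V$ term, and the inequality reduces to the assumed Lipschitz property of $V$. For the inductive step it is convenient to abbreviate the two coefficients appearing in the bound as $A_t = L_r \sum_{i=0}^{t-1}(\gamma L_f)^i + L_V (\gamma L_f)^t$ and $B_t = L_r \sum_{i=1}^{t-1}\gamma^i \sum_{j=0}^{i-1}L_f^j + L_V \gamma^t \sum_{j=0}^{t-1}L_f^j$, so that the goal reads $|(H^t V)(x,y) - (\tilde H^t V)(\tilde x,\tilde y)| \le A_t \,\|(x,y)-(\tilde x,\tilde y)\|_2 + \alpha d_\mathcal{Y}\, B_t$.

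First I would peel off one application of each operator and use the elementary bound $|\max_a g_1(a) - \max_a g_2(a)| \le \max_a |g_1(a) - g_2(a)|$ to reduce, for a fixed action $a$, to controlling $|r(x,y,a) - r(\tilde x,\tilde y,a)|$ plus $\gamma$ times $\bigl|\E[(H^{t-1}V)(f(x,y,a,w))] - \E[(\tilde H^{t-1}V)(\tilde x, f_\mathcal{Y}(\tilde x,\tilde y,a,w))]\bigr|$. The reward difference is at most $L_r\,\|(x,y)-(\tilde x,\tilde y)\|_2$ since the action components agree (Assumption~\ref{assumption:lipschitz}). For the continuation term I would couple the two expectations with a common noise draw $w$ and bound the integrand pointwise by applying the inductive hypothesis at the points $f(x,y,a,w)$ (the true successor) and $(\tilde x, f_\mathcal{Y}(\tilde x,\tilde y,a,w))$ (the frozen-state successor).

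The one delicate point — and the conceptual heart of the argument — is estimating $\|f(x,y,a,w) - (\tilde x, f_\mathcal{Y}(\tilde x,\tilde y,a,w))\|_2$; a crude coordinatewise split costs a spurious factor and produces $(1+L_f)$ rather than $L_f$ in the recursion. The right move is to insert the \emph{full} true transition $f(\tilde x,\tilde y,a,w)$ as an intermediate point: then $\|f(x,y,a,w) - f(\tilde x,\tilde y,a,w)\|_2 \le L_f\,\|(x,y)-(\tilde x,\tilde y)\|_2$ by the Lipschitz property of $f$ (same action and noise), while $\|f(\tilde x,\tilde y,a,w) - (\tilde x, f_\mathcal{Y}(\tilde x,\tilde y,a,w))\|_2 = \|f_\mathcal{X}(\tilde x,\tilde y,a,w) - \tilde x\|_2 \le \alpha d_\mathcal{Y}$ by Assumption~\ref{assumption.fast_slow_transition}(iii). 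Hence the inductive hypothesis bounds the integrand by $\bigl(L_f\,\|(x,y)-(\tilde x,\tilde y)\|_2 + \alpha d_\mathcal{Y}\bigr)A_{t-1} + \alpha d_\mathcal{Y}\,B_{t-1}$, which is deterministic, so the expectation and the common-noise coupling cause no loss.

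Assembling the pieces gives $|(H^t V)(x,y) - (\tilde H^t V)(\tilde x,\tilde y)| \le (L_r + \gamma L_f A_{t-1})\,\|(x,y)-(\tilde x,\tilde y)\|_2 + \alpha d_\mathcal{Y}\,(\gamma A_{t-1} + \gamma B_{t-1})$, so it only remains to verify the algebraic identities $A_t = L_r + \gamma L_f A_{t-1}$ and $B_t = \gamma(A_{t-1} + B_{t-1})$. Both follow by shifting summation indices in the closed forms; for $B_t$ one observes that the $j=i-1$ term of $\gamma^i\sum_{j=0}^{i-1}L_f^j$ is supplied by $\gamma A_{t-1}$ and the remaining terms by $\gamma B_{t-1}$. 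I expect this index bookkeeping for $B_t$ to be the most error-prone part of writing the full proof, but the only substantive idea is the intermediate-point trick in the distance estimate; everything else is the standard contraction-plus-Lipschitz machinery.
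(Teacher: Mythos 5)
Your proposal is correct and follows essentially the same argument as the paper's proof: induction on $t$, peeling off one Bellman application, using the triangle inequality through the intermediate point $f(\tilde{x},\tilde{y},a,w)$ to get the $L_f\|(x,y)-(\tilde{x},\tilde{y})\|_2 + \alpha d_\mathcal{Y}$ estimate, and verifying the recursions $A_t = L_r + \gamma L_f A_{t-1}$, $B_t = \gamma(A_{t-1}+B_{t-1})$, which match the paper's $\phi_{t,1},\phi_{t,2}$. The only cosmetic difference is that you start the induction at $t=0$ while the paper's base case is $t=1$; this changes nothing of substance.
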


\begin{proof}
We need to show that for each $t \ge 1$,
\begin{equation}
    \bigl | (H^t V)(x,y) - (\tilde{H}^t V) (\tilde{x},\tilde{y}) \bigr | \le \phi_{t, 1} \, \bigl\|(x,y)-(\tilde{x}, \tilde{y}) \bigr \|_2 + \phi_{t, 2} \, (\alpha d_\mathcal{Y}),
    \label{eq:induction_hyp_discrep}
\end{equation}
for coefficients $\phi_{t,1}$ and $\phi_{t,2}$ defined as
\[
\phi_{t,1} = \biggl(L_r \sum_{i=0}^{t-1} (\gamma L_f)^i + L_V (\gamma L_f)^t\biggr) \quad \text{and} \quad \phi_{t,2} = \biggl( L_r \sum_{i=1}^{t-1} \gamma^i \, \sum_{j=0}^{i-1} L_f^j  + L_V \gamma^t \sum_{j=0}^{t-1} L_f^j \biggr).
\]
Let $(x', y') = (f_\mathcal{X}(x,y,a,w), f_\mathcal{Y}(x,y,a,w))$ and $(\tilde{x}', \tilde{y}') = (f_\mathcal{X}(\tilde{x},\tilde{y},a,w), f_\mathcal{Y}(\tilde{x},\tilde{y},a,w))$ be one-step transitions starting from $(x,y)$ and $(\tilde{x},\tilde{y})$, according to the true system dynamics. For $t=1$:
\begin{align*}
    \bigl | (H &V)(x,y) - (\tilde{H} V) (\tilde{x},\tilde{y}) \bigr |  \\
    &= \Bigl| \max_{a} \Bigl \{ r(x,y, a) + \gamma \, \E [V (x',y')] \Bigr \} - \max_{\tilde{a}} \Bigl \{ r(\tilde{x},\tilde{y},\tilde{a}) + \gamma \, \E [V (\tilde{x}, \tilde{y}')]\Bigr \} \Bigr| \nonumber\\
    &\le  L_r \bigl \|(x, y) - (\tilde{x}, \tilde{y}) \bigr\|_2 + \gamma \max_{a}  \mathbb{E} \bigl | V(x', y') - V(\tilde{x}, \tilde{y}')  \bigr|\\
    &\le L_r \bigl \|(x, y) - (\tilde{x}, \tilde{y}) \bigr\|_2 + L_V\gamma \max_{a}  \mathbb{E} \, \| (x', y') - (\tilde{x}, \tilde{y}')  \|_2 \\
    &= L_r \bigl \|(x, y) - (\tilde{x}, \tilde{y}) \bigr\|_2 + L_V\gamma \max_{a}  \mathbb{E} \bigl [  \| (x', y') - (\tilde{x}', \tilde{y}')\|_2 + \|(\tilde{x}', \tilde{y}')- (\tilde{x}, \tilde{y}') \|_2  \bigr]\\
    &\le L_r \bigl \|(x, y) - (\tilde{x}, \tilde{y}) \bigr\|_2 + L_V\gamma \bigl [  L_f \| (x, y) - (\tilde{x}, \tilde{y})\|_2 + \alpha d_\mathcal{Y}  \bigr]\\
    &=\phi_{1, 1} \, \bigl\|(x,y)-(\tilde{x}, \tilde{y}) \bigr \|_2 + \phi_{1, 2} \, (\alpha d_\mathcal{Y}),
\end{align*}
which verifies the base case. Let us now assume that (\ref{eq:induction_hyp_discrep}) holds for $t-1$.
\begin{align*}
    \bigl | (H^t &V)(x,y) - (\tilde{H}^t V) (\tilde{x},\tilde{y}) \bigr |  \\
    &= \Bigl| \max_{a} \Bigl \{ r(x,y, a) + \gamma \, \E [ (HV)^{t-1} (x',y')] \Bigr \} - \max_{\tilde{a}} \Bigl \{ r(\tilde{x},\tilde{y},\tilde{a}) + \gamma \, \E [ (\tilde{H}V)^{t-1} (\tilde{x}, \tilde{y}')]\Bigr \} \Bigr| \nonumber\\
    &\le  L_r \bigl \|(x, y) - (\tilde{x}, \tilde{y}) \bigr\|_2 + \gamma \max_{a}  \mathbb{E} \bigl | (HV)^{t-1}(x', y') - (\tilde{H}V)^{t-1}(\tilde{x}, \tilde{y}')  \bigr|\\
    &\le  L_r \bigl \|(x, y) - (\tilde{x}, \tilde{y}) \bigr\|_2 + \gamma 
    \Bigl[ \phi_{t-1, 1} \, \bigl\|(x',y')-(\tilde{x}, \tilde{y}') \bigr \|_2 + \phi_{t-1, 2} \, (\alpha d_\mathcal{Y}) \Bigr]\\
    &\le  L_r \bigl \|(x, y) - (\tilde{x}, \tilde{y}) \bigr\|_2 + \gamma 
    \Bigl[ \phi_{t-1, 1} \, \bigl(L_f \bigl \| (x, y) - (\tilde{x}, \tilde{y}) \bigr\|_2 + \alpha d_\mathcal{Y}  \bigr) + \phi_{t-1, 2} \, (\alpha d_\mathcal{Y}) \Bigr]\\
    &\le \bigl(L_r + \gamma L_f \phi_{t-1, 1} \bigr) \bigl \|(x, y) - (\tilde{x}, \tilde{y}) \bigr\|_2 +  \bigl(\gamma\phi_{t-1, 1} + \gamma \phi_{t-1, 2}\bigr)(\alpha d_\mathcal{Y}),
\end{align*}
where the second inequality follows by the induction hypothesis and the third inequality follows by the same steps as in the case of $t=1$. It is straightforward to verify that
\[
\phi_{t, 1} = L_r + \gamma L_f \phi_{t-1, 1} \quad \text{and} \quad \phi_{t, 2} = \gamma\phi_{t-1, 1} + \gamma \phi_{t-1, 2},
\]
which completes the induction step and the proof.
\end{proof}

\subsection{Proof of Proposition~\ref{thm:stationary_opt_policy}}
\label{sec:stationary_opt_policy_proof}
    We consider an MDP $\langle \mathcal{S}, \mathcal{A}, \mathcal{W}, f, r, \gamma \rangle$ and note that $U^*$ is the unique optimal solution of the base model \eqref{eq.Bellman_original}, and there exists a stationary optimal policy $\nu^*(x,y)=\argmax \, U^*(x,y)$ that attains this optimal value \citep[Proposition 4.3]{bertsekas2004stochastic}. Fix a state $s_0 \in \mathcal S$ and for $t > 0$ and a sequence of policies $\pi_0, \ldots, \pi_{t-1}$, define the notation:
    \[
    s_1(\pi_0) = f^{\pi_0}(s_0, w_1) \quad \text{and} \quad s_{t'+1}(\pi_0, \ldots, \pi_{t'}) = f^{\pi_{t'}}(s_{t'}(\pi_0, \ldots, \pi_{t'-1}), w_{t'+1})
    \]
    for $t' \ge 1$. Therefore, we have
    \begin{align} 
        U^*(s_0) &= \max_{\pi_0} \, r(s_0,\pi_0) + \gamma \, \E \bigl[ U^*(s_1(\pi_0)) \bigr] =  r(s_0,\nu^*) + \gamma \, \E\bigl[ U^*(s_1(\nu^*)) \bigr] \label{eq.Ustar1}.
    \end{align}
    By expanding the $U^*(s_1(\pi_0))$ and $U^*(s_1(\nu^*))$ terms in \eqref{eq.Ustar1}, we have the following:
    \begin{equation*}
        \begin{aligned}
        U^*(s_0) &= \max_{\pi_0,\pi_1} \, \E \Bigl[ r(s_0,\pi_0) + \gamma \, r(s_1(\pi_0),\pi_1) + \gamma^2 \, U^*(s_2(\pi_0, \pi_1)) \Bigr]\\
        &= \E \Bigl[r(s_0,\nu) + \gamma \, r(s_1(\nu^*),\nu^*) + \gamma^2 \, U^*(s_2(\nu^*, \nu^*))\Bigr].
        \end{aligned}
    \end{equation*}
    Let $\boldsymbol{\pi} = (\pi_0, \pi_1, \ldots, \pi_{T-1})$. Repeating the expansion, we obtain:
    \begin{align}
        U^*(s_0) &= \max_{\boldsymbol{\pi}} \, \E \left[ \sum_{t=0}^{T-1} \gamma^t \, r\bigl(s_t(\pi_0, \ldots, \pi_{t-1}), \pi_t\bigr) + \gamma^T U^*\bigl(s_T(\pi_0, \ldots, \pi_{T-1})\bigr) \right] \label{eq.UstarT1}\\
        &= \E \left[ \sum_{t=0}^{T-1} \gamma^t \, r\bigl(s_t(\nu^*, \ldots, \nu^*),\nu^*\bigr) + \gamma^T \, U^*\bigl(s_T(\nu^*, \ldots, \nu^*)\bigr)\right]. \label{eq.UstarT2}
    \end{align}
    Observe that \eqref{eq.UstarT1} is in same form as the Bellman equation \eqref{eq.Bellman_T_original} for the hierarchical reformulation (with $T$-horizon reward function $R$ and value function $\bar{U}$), which has a unique optimal solution $\bar{U}^*$. Therefore $U^*(s_0) = \bar{U}^*(s_0)$ and (i) is proved when we recall that $s_0$ was chosen arbitrarily. Part (ii) follows because by \eqref{eq.UstarT2}, it is clear that $(\nu^*, \ldots, \nu^*)$ solves \eqref{eq.UstarT1} and hence also \eqref{eq.Bellman_T_original}.

\section{Proofs for Section~\ref{sec:theory} and Section~\ref{sec:interpreting}}
\subsection{Technical Lemmas}

\begin{lemma}
    \label{lemma:U_V}
    Consider two MDPs, $\mathcal M_1$ and $\mathcal M_2$, who differ in their transition and reward functions: $\mathcal M_i = \langle \mathcal{S}, \mathcal{A}, \mathcal{W}, f_i, r_i, \gamma \rangle$. Let $U_i^*$ be the optimal value function of $\mathcal M_i$. Suppose that
    \begin{enumerate}[(a)]
        \item $|r_1(s,a) - r_2(s,a)| \leq \epsilon_r$ for all $s\in\mathcal{S}$ and $a\in\mathcal{A}$;
        \item $\|f_1(s,a,w) - f_2(s,a,w)\|_2 \leq d$ for all $s \in \mathcal S$, $a \in \mathcal A$, and $w \in \mathcal W$; and
        \item there exists $L_1 > 0$ such that $|U_1^*(s) - U_1^*(s')| \leq L_1 \|s-s'\|_2$ for all $s, s'\in\mathcal{S}$.
    \end{enumerate}
    Then, the difference in optimal values of the two MDPs can be bounded as follows:
    \[
        \bigl|U_1^*(s)-U_2^*(s)\bigr| \leq \frac{\epsilon_r + \gamma L_1 d}{1-\gamma}
    \]
    for all $s \in \mathcal S$.
\end{lemma}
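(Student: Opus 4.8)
The plan is to bound $\|U_1^* - U_2^*\|_\infty$ by comparing the two Bellman operators $H_1$ and $H_2$ (associated with $\mathcal{M}_1$ and $\mathcal{M}_2$) acting on the common value function $U_1^*$. The key observation is that $U_1^*$ is the fixed point of $H_1$, so $\|U_1^* - U_2^*\|_\infty = \|H_1 U_1^* - U_2^*\|_\infty \le \|H_1 U_1^* - H_2 U_1^*\|_\infty + \|H_2 U_1^* - H_2 U_2^*\|_\infty$. The second term is at most $\gamma \|U_1^* - U_2^*\|_\infty$ by the contraction property of $H_2$, which lets us solve for $\|U_1^* - U_2^*\|_\infty$ once we have bounded the first term. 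So the crux is bounding $\|H_1 U_1^* - H_2 U_1^*\|_\infty$.

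First I would write, for a fixed state $s$,
\[
\bigl| (H_1 U_1^*)(s) - (H_2 U_1^*)(s) \bigr| \le \max_a \Bigl| r_1(s,a) - r_2(s,a) + \gamma \, \E\bigl[ U_1^*(f_1(s,a,w)) - U_1^*(f_2(s,a,w)) \bigr] \Bigr|,
\]
using the standard fact that $|\max_a g_1(a) - \max_a g_2(a)| \le \max_a |g_1(a) - g_2(a)|$. Then I would apply assumption (a) to bound the reward discrepancy by $\epsilon_r$, and assumptions (b) and (c) together to bound $|U_1^*(f_1(s,a,w)) - U_1^*(f_2(s,a,w))| \le L_1 \|f_1(s,a,w) - f_2(s,a,w)\|_2 \le L_1 d$ pointwise in $w$, hence in expectation. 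This gives $\|H_1 U_1^* - H_2 U_1^*\|_\infty \le \epsilon_r + \gamma L_1 d$.

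Combining, $\|U_1^* - U_2^*\|_\infty \le \epsilon_r + \gamma L_1 d + \gamma \|U_1^* - U_2^*\|_\infty$, and rearranging yields $\|U_1^* - U_2^*\|_\infty \le (\epsilon_r + \gamma L_1 d)/(1-\gamma)$, which is exactly the claimed bound (and it holds uniformly in $s$). I do not expect any serious obstacle here; the only point requiring a little care is making sure the Lipschitz hypothesis is used on $U_1^*$ (not $U_2^*$) — the asymmetry is important because we chose to expand around $U_1^*$'s fixed-point equation and contract with $H_2$. An entirely symmetric argument expanding around $U_2^*$ would instead require a Lipschitz constant for $U_2^*$; since only (c) is assumed, the expansion must be done in the order described.
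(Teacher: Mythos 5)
Your proposal is correct and follows essentially the same route as the paper's proof: the paper also expands both fixed-point equations at the maximizing state, splits the value term as $U_1^*(f_1)-U_1^*(f_2)$ plus $U_1^*(f_2)-U_2^*(f_2)$, bounds the first piece by $L_1 d$ via the Lipschitz hypothesis on $U_1^*$, absorbs the second via the contraction (written explicitly as $\gamma \max_s|U_1^*(s)-U_2^*(s)|$), and rearranges. Your observation about the asymmetry — that the Lipschitz constant must be used on $U_1^*$ because of the order of the decomposition — matches the paper's argument exactly.
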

\begin{proof}
    Let $\hat{s} = \argmax_{s\in\mathcal{S}} | U_1^*(s)-U_2^*(s)|$. We will analyze $\bigl| U_1^*(\hat{s})-U_2^*(\hat{s}) \bigr|$. 
    \begin{align*}
        \bigl| U_1^*(\hat{s}) - U_2^*(\hat{s}) \bigr| &= \Bigl| \max_{a_1\in\mathcal{A}} \, \Bigl \{ r_1(\hat{s},a_1) + \gamma\, \E \bigl [U_1^* \bigl( f_1(\hat{s},a_1,w) \bigr) \bigr] \Bigr\} - \max_{a_2\in\mathcal{A}} \, \Bigl\{ r_2(\hat{s},a_2) + \gamma \, \E \bigl [U_2^*(f_2(\hat{s},a_2,w)) \bigr] \Bigr \} \Bigr|\\
        &\leq \max_{a\in\mathcal{A}} \,  \bigl| r_1(\hat{s},a) + \gamma\, \E \bigl [U_1^* \bigl( f_1(\hat{s},a,w) \bigr) \bigr] - r_2(\hat{s},a) - \gamma \, \E \bigl [U_2^*(f_2(\hat{s},a,w)) \bigr] \bigr| \\
        &\leq \max_{a\in\mathcal{A}} \, \bigl| r_1(\hat{s},a) - r_2(\hat{s},a)\bigr| + \gamma \max_{a\in\mathcal{A}}  \, \bigl|\E \bigl [U_1^* \bigl( f_1(\hat{s},a,w) \bigr) \bigr] - \E \bigl [U_2^* \bigl( f_2(\hat{s},a,w) \bigr) \bigr]\bigr| \\
        &\leq \begin{aligned}[t]\epsilon_r &+ \gamma \max_{a\in\mathcal{A}}  \, \bigl|\E \bigl [U_1^* \bigl( f_1(\hat{s},a,w) \bigr) - U_1^* \bigl( f_2(\hat{s},a,w) \bigr) \bigr]\bigr|\\
                                &+ \gamma \max_{a\in\mathcal{A}}  \, \bigl|\E \bigl [U_1^* \bigl( f_2(\hat{s},a,w) \bigr) - U_2^* \bigl( f_2(\hat{s},a,w) \bigr) \bigr]\bigr|\end{aligned} \\
        &\leq \epsilon_r + \gamma L_1 \max_{a, w}  \|f_1(\hat{s},a,w) - f_2(\hat{s},a,w)\|_2 + \gamma \max_{s\in\mathcal{S}} \, \bigl|U_1^*(s)-U_2^*(s)\bigr| \\
        &\leq \epsilon_r + \gamma L_1 d + \gamma \bigl|U_1^*(\hat{s})-U_2^*(\hat{s})\bigr|.
    \end{align*}
    Rearranging, we have
    \begin{equation*}
        \bigl| U_1^*(\hat{s})-U_2^*(\hat{s}) \bigr| \leq \frac{\epsilon_r + \gamma L_1 d}{1-\gamma},
    \end{equation*}
    which completes the proof if we recall how $\hat{s}$ was chosen.
\end{proof}

\begin{lemma}
    \label{lemma:loss_bound}
    Consider two MDPs, $\mathcal M_1$ and $\mathcal M_2$, who differ in their transition and reward functions: $\mathcal M_i = \langle \mathcal{S}, \mathcal{A}, \mathcal{W}, f_i, r_i, \gamma \rangle$. Let $U_i^*$ be the optimal value function of $\mathcal M_i$. Suppose that
    \begin{enumerate}[(a)]
        \item $|r_1(s,a) - r_2(s,a)| \leq \epsilon_r$ for all $s\in\mathcal{S}$ and $a\in\mathcal{A}$;
        \item $\|f_1(s,a,w) - f_2(s,a,w)\|_2 \leq d$ for all $s \in \mathcal S$, $a \in \mathcal A$ and $w \in \mathcal W$;
        \item there exists $L_1 > 0$ such that $|U_1^*(s) - U_1^*(s')| \leq L_1 \|s-s'\|_2$ for any $s, s'\in\mathcal{S}$; and
        \item $|U_1^*(s)-U_2^*(s)|\leq \epsilon_U$ for all $s\in\mathcal S$.
    \end{enumerate}
    Let $\tilde{\pi}_2$ be a policy that is an approximation of the optimal policy for $\mathcal M_2$, in the sense that:
    \begin{equation}
    \label{eq.pi_2_greedy}
    \tilde{\pi}_2(s) = \argmax_{a \in \mathcal A} \, \Bigl\{ \tilde{r}_2(s, a) + \gamma \, \E \bigl [\tilde{U}_2\bigl(\tilde{f}_2(s,a,w)\bigr) \bigr] \Bigr\},
    \end{equation}
    where $|r_2(s,a) - \tilde{r}_2(s,a)| \leq \tilde{\epsilon}_r$, $\|f_2(s,a,w) - \tilde{f}_2(s,a,w)\|_2 \leq \tilde{d}$, and $|U_2^*(s)-\tilde{U}_2(s)|\leq \tilde{\epsilon}_U$ for all $s \in \mathcal S$, $a \in \mathcal A$, and $w \in \mathcal W$.
    Then, the value of $\tilde{\pi}_2$ when implemented in $\mathcal M_1$ has regret bounded by:
    \begin{equation*}
         \bigl \|U_1^* - U_1^{\tilde{\pi}_2}\bigr\|_\infty \leq \frac{2(\epsilon_r + \tilde{\epsilon}_r) + 2\gamma (\epsilon_U + \tilde{\epsilon}_U) + 2\gamma L_1 (d + \tilde{d})}{1-\gamma}.
    \end{equation*}
    This lemma is a generalization and extension of Corollary 1 of \cite{singh1994upper}.
\end{lemma}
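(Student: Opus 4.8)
The plan is to adapt the classical argument of \cite{singh1994upper} (their Corollary 1), generalized to account for the fact that $\tilde{\pi}_2$ is greedy not with respect to $U_1^*$ in $\mathcal M_1$, but with respect to a perturbed state–action value function built from $\tilde{r}_2$, $\tilde{f}_2$, and $\tilde{U}_2$. Note that hypothesis (d) already plays the role of the output of Lemma~\ref{lemma:U_V}, so that lemma is not itself invoked here; it is used separately to instantiate $\epsilon_U$ when applying this result.

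First I would set up the standard fixed-point decomposition. Let $T_1$ denote the Bellman optimality operator of $\mathcal M_1$ and $T_1^{\tilde{\pi}_2}$ its policy-evaluation operator under $\tilde{\pi}_2$, so that $U_1^* = T_1 U_1^*$ and $U_1^{\tilde{\pi}_2} = T_1^{\tilde{\pi}_2} U_1^{\tilde{\pi}_2}$. Then
\[
\|U_1^* - U_1^{\tilde{\pi}_2}\|_\infty \le \|T_1 U_1^* - T_1^{\tilde{\pi}_2} U_1^*\|_\infty + \|T_1^{\tilde{\pi}_2} U_1^* - T_1^{\tilde{\pi}_2} U_1^{\tilde{\pi}_2}\|_\infty \le \|U_1^* - T_1^{\tilde{\pi}_2} U_1^*\|_\infty + \gamma\,\|U_1^* - U_1^{\tilde{\pi}_2}\|_\infty,
\]
using $T_1 U_1^* = U_1^*$ and the $\gamma$-contraction of $T_1^{\tilde{\pi}_2}$; rearranging gives $\|U_1^* - U_1^{\tilde{\pi}_2}\|_\infty \le (1-\gamma)^{-1}\,\|U_1^* - T_1^{\tilde{\pi}_2}U_1^*\|_\infty$. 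It thus suffices to bound, for each $s$, the one-step greedy gap $U_1^*(s) - (T_1^{\tilde{\pi}_2}U_1^*)(s) = \max_a Q_1(s,a) - Q_1(s,\tilde{\pi}_2(s))$, where $Q_1(s,a) = r_1(s,a) + \gamma\,\E[U_1^*(f_1(s,a,w))]$.

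The key step is the greedy-gap observation: $\tilde{\pi}_2(s)$ maximizes the surrogate $\tilde{Q}_2(s,a) = \tilde{r}_2(s,a) + \gamma\,\E[\tilde{U}_2(\tilde{f}_2(s,a,w))]$ over $a$ by \eqref{eq.pi_2_greedy}. Hence, if $|Q_1(s,a) - \tilde{Q}_2(s,a)| \le \Delta$ uniformly in $(s,a)$, then for every $s$,
\[
\max_a Q_1(s,a) - Q_1(s,\tilde{\pi}_2(s)) \le \bigl(\max_a \tilde{Q}_2(s,a) + \Delta\bigr) - \bigl(\tilde{Q}_2(s,\tilde{\pi}_2(s)) - \Delta\bigr) = 2\Delta,
\]
since $\max_a \tilde{Q}_2(s,a) = \tilde{Q}_2(s,\tilde{\pi}_2(s))$. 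Combined with the previous paragraph this yields regret at most $2\Delta/(1-\gamma)$. It remains to compute $\Delta$: split $|Q_1(s,a) - \tilde{Q}_2(s,a)|$ into a reward part, bounded by $|r_1 - r_2| + |r_2 - \tilde{r}_2| \le \epsilon_r + \tilde{\epsilon}_r$ via (a) and the definition of $\tilde{r}_2$, and a value part $\gamma\,\bigl|\E[U_1^*(f_1(s,a,w))] - \E[\tilde{U}_2(\tilde{f}_2(s,a,w))]\bigr|$. For the value part I would chain through the intermediate quantities $U_1^*(\tilde{f}_2(s,a,w))$ and $U_2^*(\tilde{f}_2(s,a,w))$: apply the $L_1$-Lipschitz property (c) of $U_1^*$ to $\|f_1 - \tilde{f}_2\|_2 \le \|f_1 - f_2\|_2 + \|f_2 - \tilde{f}_2\|_2 \le d + \tilde{d}$ (by (b) and the definition of $\tilde{f}_2$), then use (d) for $\|U_1^* - U_2^*\|_\infty \le \epsilon_U$ and the definition of $\tilde{U}_2$ for $\|U_2^* - \tilde{U}_2\|_\infty \le \tilde{\epsilon}_U$. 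This gives $\Delta \le (\epsilon_r + \tilde{\epsilon}_r) + \gamma\bigl(\epsilon_U + \tilde{\epsilon}_U + L_1(d + \tilde{d})\bigr)$, and $2\Delta/(1-\gamma)$ is exactly the claimed bound.

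I do not anticipate a conceptual obstacle: the argument is routine once the greedy-gap lemma is in place. The only point requiring care is the bookkeeping in the final step — there are six perturbation parameters plus the Lipschitz constant, and the triangle-inequality chains must pass through the correct intermediate functions (in particular, using the Lipschitz property of $U_1^*$ rather than $U_2^*$, and applying it to the discrepancy $f_1$ versus $\tilde{f}_2$ rather than $f_1$ versus $f_2$) so that each of hypotheses (a)–(d) and each of the $\tilde{\epsilon}_r,\tilde{d},\tilde{\epsilon}_U$ bounds is invoked exactly once and no error term is double-counted.
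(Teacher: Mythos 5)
Your proof is correct and rests on the same idea as the paper's: $\tilde{\pi}_2$ is greedy with respect to a $Q$-function that is uniformly within $\Delta = (\epsilon_r+\tilde\epsilon_r) + \gamma(\epsilon_U+\tilde\epsilon_U) + \gamma L_1(d+\tilde d)$ of $Q_1$, so the one-step loss is at most $2\Delta$ and the $1/(1-\gamma)$ amplification gives the stated bound. The paper reaches the same conclusion by a pointwise argument at the worst-case state $\hat s$ with an explicit add-and-subtract chain through $\pi_1^*$; your operator-contraction packaging of the $1/(1-\gamma)$ factor is a cleaner but mathematically equivalent organization of the identical estimate.
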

\begin{proof}
    Let $\pi_1^*$ be an optimal policy for $\mathcal M_1$. Using (\ref{eq.pi_2_greedy}), it follows that
    \begin{equation}
        \tilde{r}_2(s,\pi_1^*(s)) + \gamma \, \E \bigl [\tilde{U}_2(\tilde{f}_2(s,\pi_1^*(s),w)) \bigr]
        \leq \tilde{r}_2(s,\tilde{\pi}_2(s)) + \gamma \, \E \bigl [\tilde{U}_2(\tilde{f}_2(s,\tilde{\pi}_2(s),w) ) \bigr].
    \label{eq:lemma11:1}
    \end{equation}
    Set $\boldsymbol{\epsilon}_U = \epsilon_U + \tilde{\epsilon}_U$, $\boldsymbol{\epsilon}_r = \epsilon_r + \tilde{\epsilon}_r$. Combining parts (a) and (d) in the statement of the lemma with the approximation errors of $\tilde{U}_2$ and $\tilde{r}_2$, we know that $U_1^*(s) - \boldsymbol{\epsilon}_U \leq \tilde{U}_2(s) \leq U_1^*(s) +\boldsymbol{\epsilon}_U$ and $r_1(s,a) - \boldsymbol{\epsilon}_r \leq \tilde{r}_2(s,a) \leq r_1(s,a) + \boldsymbol{\epsilon}_r$ for any $s$ and $a$. Using these, we can lower bound both terms on the left-hand-side of (\ref{eq:lemma11:1}), upper bound both terms on the right-hand-side of (\ref{eq:lemma11:1}), and then rearrange to obtain
    \begin{align} 
        r_1(s,\pi_1^*(s)) - r_1(s,\tilde{\pi}_2(s))
        &\leq 2\boldsymbol{\epsilon}_r + 2\gamma\boldsymbol{\epsilon}_U + \gamma\, \E \bigl [ U_1^*(\tilde{f}_2(s,\tilde{\pi}_2(s),w)) - U_1^*(\tilde{f}_2(s,\pi_1^*(s),w)) \bigr ].\label{eq.rdot_diff}
    \end{align}
    Let state $\hat{s} = \argmax_{s\in\mathcal{S}} \, U_1^*(\hat{s}) - U_1^{\tilde{\pi}_2}(\hat{s}) $ be the state that achieves the largest regret (when using $\tilde{\pi}_2$ in $\mathcal M_1$). Substituting from \eqref{eq.rdot_diff} gives
    \begin{equation*}
        \begin{aligned}
        U_1^*(\hat{s}) - U_1^{\tilde{\pi}_2}(\hat{s})
        &= r_1(\hat{s},\pi_1^*(\hat{s})) - r_1(\hat{s},\tilde{\pi}_2(\hat{s})) + \gamma \, \E \bigl[ U_1^*(f_1(\hat{s},\pi_1^*(\hat{s}),w)) - U_1^{\tilde{\pi}_2}(f_1(\hat{s},\tilde{\pi}_2(\hat{s}),w)) \bigr]\\
        &\leq \begin{aligned}[t] 2\boldsymbol{\epsilon}_r &+ 2\gamma \boldsymbol{\epsilon}_U + \gamma\, \E \bigl [ U_1^*(\tilde{f}_2(\hat{s},\tilde{\pi}_2(\hat{s}),w)) - U_1^*(\tilde{f}_2(\hat{s},\pi_1^*(\hat{s}),w)) \bigr ]\\
        &+ \gamma \, \E \bigl[ U_1^*(f_1(\hat{s},\pi_1^*(\hat{s}),w)) - U_1^{\tilde{\pi}_2}(f_1(\hat{s},\tilde{\pi}_2(\hat{s}),w)) \bigr]
        \end{aligned}\\
        &= \begin{aligned}[t] 2 \boldsymbol{\epsilon}_r &+ 2\gamma \boldsymbol{\epsilon}_U + \gamma\, \E \bigl [ U_1^*(\tilde{f}_2(\hat{s},\tilde{\pi}_2(\hat{s}),w)) - U_1^*(f_1(\hat{s},\tilde{\pi}_2(\hat{s}),w)) \bigr ]\\
        &+ \gamma \, \E \bigl[ U_1^*(f_1(\hat{s},\pi_1^*(\hat{s}),w)) -U_1^*(\tilde{f}_2(\hat{s},\pi_1^*(\hat{s}),w)) \bigr]\\
        &+ \gamma \, \E \bigl[ U_1^*(f_1(\hat{s},\tilde{\pi}_2(\hat{s}),w)) - U_1^{\tilde{\pi}_2}(f_1(\hat{s},\tilde{\pi}_2(\hat{s}),w)) \bigr]
        \end{aligned}\\
        &\leq 2\boldsymbol{\epsilon}_r + 2\gamma\boldsymbol{\epsilon}_U + 2\gamma L_1 (d+\tilde{d}) + \gamma \bigl(U_1^*(\hat{s}) - U_1^{\tilde{\pi}_2}(\hat{s}) \bigr),
        \end{aligned}
    \end{equation*}
    where we have used property (c) and that $\|f_1(s, a, w) - \tilde{f}_2(s,a,w)\|_2 \le d + \tilde{d}$.
    Therefore, we rearrange to see that
    \begin{equation*}
        U_1^*(\hat{s}) - U_1^{\tilde{\pi}_2}(\hat{s}) \leq \frac{2\boldsymbol{\epsilon}_r + 2\gamma\boldsymbol{\epsilon}_U + 2\gamma L_1 (d+\tilde{d})}{1-\gamma},
    \end{equation*}
    completing the proof.
\end{proof}

\begin{lemma}
    \label{lemma:Uk_Uopt}
    Consider an MDP $\langle \mathcal{S}, \mathcal{A}, \mathcal{W}, f, r, \gamma \rangle$ with reward function $r$ taking values in $[0, r_\textnormal{max}]$. Suppose the optimal value function is $U^*$ and the associated Bellman operator is $F$. Fix any initial value function such that $0\le U_0(s) \le r_\textnormal{max}/(1-\gamma)$ for all $s$ and let $U^k = F^k \, U_0$ be the result after iteration $k$ of value iteration. Then, it holds that
    \begin{equation*}
        \|U^k - U^*\|_\infty \leq \frac{ \gamma^k \, r_\textnormal{max}}{1-\gamma}.
    \end{equation*}
\end{lemma}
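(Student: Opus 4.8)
## Proof Proposal for Lemma~\ref{lemma:Uk_Uopt}

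The plan is to exploit the standard $\gamma$-contraction property of the Bellman operator $F$ in the sup-norm, combined with the hypothesis that the initial value function $U_0$ already lies within the ``natural'' bounded range $[0, r_\textnormal{max}/(1-\gamma)]$ that also contains $U^*$. First I would recall that since $r$ takes values in $[0, r_\textnormal{max}]$, the optimal value function satisfies $0 \le U^*(s) \le r_\textnormal{max}/(1-\gamma)$ for every $s$ (sum a geometric series of per-step rewards bounded by $r_\textnormal{max}$). This places both $U_0$ and $U^*$ inside an interval of length $r_\textnormal{max}/(1-\gamma)$, so $\|U_0 - U^*\|_\infty \le r_\textnormal{max}/(1-\gamma)$.

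Next I would invoke that $F$ is a $\gamma$-contraction with respect to $\|\cdot\|_\infty$ (a standard fact; see \cite{puterman2014markov} or \cite{bertsekas1996neuro-dynamic}), and that $U^*$ is its unique fixed point, i.e., $F U^* = U^*$. Then for each $k$,
\[
\|U^k - U^*\|_\infty = \|F^k U_0 - F^k U^*\|_\infty \le \gamma^k \, \|U_0 - U^*\|_\infty \le \gamma^k \cdot \frac{r_\textnormal{max}}{1-\gamma},
\]
which is exactly the claimed bound. This is essentially a one-line argument once the two ingredients (contraction and the a priori bound on $\|U_0 - U^*\|_\infty$) are in place.

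There is no real obstacle here; the only point requiring a moment of care is justifying $\|U_0 - U^*\|_\infty \le r_\textnormal{max}/(1-\gamma)$, which follows because both functions take values in the same interval $[0, r_\textnormal{max}/(1-\gamma)]$, so their pointwise difference is bounded in absolute value by the interval's length. (One could alternatively track that $F$ preserves the interval $[0, r_\textnormal{max}/(1-\gamma)]$, so every iterate $U^k$ stays in it, but this is not even needed for the bound as stated.) The lemma will subsequently be applied to the upper-level MDP with discount factor $\gamma^T$ and reward $\tilde R$ bounded by $r_\textnormal{max}/(1-\gamma)$, yielding the $k$-dependent term in Theorem~\ref{thm:fsvi_regret}.
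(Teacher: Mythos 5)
Your argument is correct and is essentially identical to the paper's proof: both iterate the $\gamma$-contraction of the Bellman operator against its fixed point $U^*$ and then bound $\|U_0 - U^*\|_\infty$ by $r_\textnormal{max}/(1-\gamma)$ using the fact that both functions take values in $[0, r_\textnormal{max}/(1-\gamma)]$. No differences worth noting.
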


\begin{proof}
    This is a standard result that follows from the contraction property of $F$ and the fact that $U^k = F \, U^{k-1}$. Therefore,
    \begin{align*}
        \|U^k - U^*\|_\infty 
        = \|HU^{k-1} - HU^*\|_\infty &\leq \gamma\|U^{k-1} - U^*\|_\infty \leq \gamma^k \|U^0 - U^*\|_\infty \leq \frac{\gamma^k \, r_\textnormal{max}}{1-\gamma},
    \end{align*}
    where in the last step, we used $0 \le U^*(s) \le r_\textnormal{max}/(1-\gamma)$ for all $s$.
\end{proof}

\begin{lemma}[Proposition 6.1 of \cite{bertsekas1996neuro}]
    \label{lemma:Ugreedy_Uopt}
    Consider an MDP $\langle \mathcal{S}, \mathcal{A}, \mathcal{W}, f, r, \gamma \rangle$ with optimal value function $U^*$. Suppose that $\nu$ is a policy that is greedy with respect to another value function $U$:
    \[
    \nu(s) = \argmax_{a} \, \bigl \{ r(s,a) + \E\bigl[ U(f(s,a,w)) \bigr] \bigr\}.
    \]
    If $\|U - U^*\|_\infty \leq \varepsilon$, then the performance of $\nu$ is bounded as follows:
    \[
    \|U^{\nu} - U^*\|_\infty \leq \frac{2 \gamma \varepsilon}{1-\gamma}.
    \]
\end{lemma}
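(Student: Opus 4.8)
The plan is to prove the classical bound on the suboptimality of a greedy policy via a telescoping argument that combines the contraction property of the Bellman operators with the hypothesis $\|U - U^*\|_\infty \le \varepsilon$. First I would introduce the greedy Bellman operator associated with the fixed policy $\nu$, call it $H^\nu$, defined by $(H^\nu W)(s) = r(s,\nu(s)) + \gamma\,\E[W(f(s,\nu(s),w))]$; its unique fixed point is $U^\nu$, and it is a $\gamma$-contraction in $\|\cdot\|_\infty$. The key observation is that because $\nu$ is greedy with respect to $U$, we have $H^\nu U = H U$, where $H$ is the optimal Bellman operator from \eqref{eq.Vbase_operator}.

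The main chain of inequalities I would write is
\begin{align*}
\|U^\nu - U^*\|_\infty &\le \|U^\nu - H^\nu U\|_\infty + \|H^\nu U - U^*\|_\infty \\
&= \|H^\nu U^\nu - H^\nu U\|_\infty + \|H U - H U^*\|_\infty \\
&\le \gamma \|U^\nu - U\|_\infty + \gamma \|U - U^*\|_\infty \\
&\le \gamma \|U^\nu - U^*\|_\infty + 2\gamma \|U - U^*\|_\infty,
\end{align*}
where in the second line I used $U^\nu = H^\nu U^\nu$ (fixed point), $H^\nu U = H U$ (greediness), and $U^* = H U^*$ (Bellman optimality); the third line uses that both $H^\nu$ and $H$ are $\gamma$-contractions; and the last line uses the triangle inequality $\|U^\nu - U\|_\infty \le \|U^\nu - U^*\|_\infty + \|U^* - U\|_\infty$. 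Rearranging gives $(1-\gamma)\|U^\nu - U^*\|_\infty \le 2\gamma\|U - U^*\|_\infty \le 2\gamma\varepsilon$, which is the claimed bound after dividing by $1-\gamma$.

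Since this is a textbook result (cited as Proposition 6.1 of \cite{bertsekas1996neuro-dynamic}), there is no genuine obstacle; the only thing to be careful about is keeping the roles of $H$, $H^\nu$, $U^*$, $U^\nu$, and $U$ straight and invoking greediness at exactly the right place (the identity $H^\nu U = H U$). One could alternatively cite the reference and omit the proof entirely, but the telescoping argument above is short enough to include for completeness. I would present it essentially verbatim as above.
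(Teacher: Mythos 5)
Your proof is correct and is exactly the standard contraction/telescoping argument from the cited source (Bertsekas and Tsitsiklis, Proposition 6.1); the paper itself omits the proof and simply cites the reference, so there is nothing to compare beyond noting that your argument is the canonical one. One small remark: the lemma as printed drops the factor $\gamma$ inside the $\argmax$ defining $\nu$, which is evidently a typo, and your proof correctly uses the standard greedy definition $\nu(s) = \argmax_a \{ r(s,a) + \gamma\,\E[ U(f(s,a,w))]\}$ so that the identity $H^\nu U = HU$ holds.
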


\begin{lemma}
    \label{prop:K2}
    Let $V^k(J_1^*,\boldsymbol{\pi}^*)$ be the value function approximation obtained from running FSVI for $k$ iterations. Then, the ``value iteration error'' is given by
    \begin{align*}
    \bigl \|V^{k}(J_1^*,\boldsymbol{\pi}^*) - V^*(J_1^*,\boldsymbol{\pi}^*) \bigr\|_\infty 
    &\leq \frac{\gamma^{kT} r_\textnormal{max}}{1-\gamma}.
    \end{align*}
\end{lemma}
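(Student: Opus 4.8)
The plan is to observe that the upper-level value iteration inside FSVI is just ordinary value iteration applied to a \emph{single fixed} MDP --- the one on the $T$-period timescale whose Bellman operator is $F_{J_1^*,\boldsymbol{\pi}^*}$ from \eqref{eq.Vupper_operator} with the inputs $J_1 = J_1^*$ and the lower-level policy held fixed --- and then to invoke the standard geometric-convergence estimate of Lemma~\ref{lemma:Uk_Uopt}. Indeed, the iterates of Algorithm~\ref{alg:vi_appr} satisfy $V^i(J_1^*,\boldsymbol{\pi}^*) = F_{J_1^*,\boldsymbol{\pi}^*} V^{i-1}(J_1^*,\boldsymbol{\pi}^*)$, and $V^*(J_1^*,\boldsymbol{\pi}^*)$ is the (unique) fixed point of $F_{J_1^*,\boldsymbol{\pi}^*}$; this operator is a $\gamma^T$-contraction in $\|\cdot\|_\infty$ since $V \mapsto \max_a \E[\,\tilde R(s_0,a,J_1^*) + \gamma^T V(s_T(a,\boldsymbol{\pi}^*))\,]$ is the Bellman operator of a genuine MDP (state space $\mathcal S$, reward $s,a\mapsto \E[\tilde R(s,a,J_1^*)]$, transition kernel ``take $a$, then follow $\boldsymbol{\pi}^*$ for $T-1$ steps,'' discount $\gamma^T$ --- this is exactly the MDP $\mathcal M_2$ from the proof of Lemma~\ref{lemma:main}).

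To apply Lemma~\ref{lemma:Uk_Uopt} I need the effective reward range and discount of this MDP. The discount is $\gamma^T$. For the reward, recall $\tilde R(s_0,a,J_1^*) = r(s_0,a) + \gamma\, J_1^*\bigl(f(s_0,a,w)\bigr)$; since $r \in [0,r_\textnormal{max}]$ and $J_1^* = \tilde H^{T-1}\boldsymbol 0$ is the optimal value of a $(T-1)$-horizon problem with per-step rewards in $[0,r_\textnormal{max}]$ and zero terminal value, we get $0 \le J_1^*(x,y) \le r_\textnormal{max}\sum_{t=0}^{T-2}\gamma^t = r_\textnormal{max}\,\tfrac{1-\gamma^{T-1}}{1-\gamma}$, and therefore
\[
0 \le \tilde R(s_0,a,J_1^*) \le r_\textnormal{max} + \gamma\, r_\textnormal{max}\,\frac{1-\gamma^{T-1}}{1-\gamma} = r_\textnormal{max}\,\frac{1-\gamma^{T}}{1-\gamma} =: \tilde r_\textnormal{max}.
\]
Hence the optimal value of this MDP lies in $\bigl[0,\ \tilde r_\textnormal{max}/(1-\gamma^T)\bigr] = \bigl[0,\ r_\textnormal{max}/(1-\gamma)\bigr]$, and taking the initialization $V^0$ in this same range (as in Algorithm~\ref{alg:vi_appr}; e.g.\ $V^0 \equiv 0$), Lemma~\ref{lemma:Uk_Uopt}, applied with ``$\gamma$'' replaced by $\gamma^T$ and ``$r_\textnormal{max}$'' replaced by $\tilde r_\textnormal{max}$, yields
\[
\bigl\|V^k(J_1^*,\boldsymbol{\pi}^*) - V^*(J_1^*,\boldsymbol{\pi}^*)\bigr\|_\infty \le \frac{(\gamma^T)^k\,\tilde r_\textnormal{max}}{1-\gamma^T} = \frac{\gamma^{kT}\, r_\textnormal{max}}{1-\gamma},
\]
which is the claimed bound.

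The argument is essentially bookkeeping, so I do not expect a real obstacle; the only points requiring care are (i) confirming that $F_{J_1^*,\boldsymbol{\pi}^*}$ is a $\gamma^T$-sup-norm contraction with fixed point $V^*(J_1^*,\boldsymbol{\pi}^*)$ --- either taken as established when \eqref{eq.Vupper} was introduced or checked in one line --- and (ii) keeping the reward bound tight enough that the factors $\tfrac{1-\gamma^T}{1-\gamma}$ and $1-\gamma^T$ cancel; the telescoping identity $1 + \gamma\,\tfrac{1-\gamma^{T-1}}{1-\gamma} = \tfrac{1-\gamma^T}{1-\gamma}$ is exactly what collapses the final constant to $r_\textnormal{max}/(1-\gamma)$.
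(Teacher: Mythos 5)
Your proposal is correct and follows essentially the same route as the paper: the paper's proof likewise observes that the upper level is standard value iteration with discount $\gamma^T$ and reward $\tilde R(s_0,a,J_1^*) \in \bigl[0, \tfrac{1-\gamma^T}{1-\gamma}r_\textnormal{max}\bigr]$, then invokes Lemma~\ref{lemma:Uk_Uopt}. Your write-up just makes explicit the contraction/fixed-point bookkeeping and the telescoping cancellation that the paper leaves implicit.
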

\begin{proof}
Consider the upper-level MDP. Note that the discount factor is $\gamma^T$ and the $T$-horizon reward function
\[
\tilde{R}(s_0, a, J_1^*) \in \biggl[0, \frac{1-\gamma^T}{1-\gamma}r_\textnormal{max}\biggr].
\]
The result follows by Lemma~\ref{lemma:Uk_Uopt}.
\end{proof}

\subsection{Proof of Proposition~\ref{prop:cumreward_diff}} 
\label{sec:proof_cumreward_diff}
Using \eqref{eq.R1_a} and \eqref{eq.R2_a}, the difference between the two reward functions can be expanded as follows:
\begin{align*}
    \bigl|\E &[R(s_0, a, \boldsymbol{\pi}^*)] - \E [\tilde{R}(s_0, a, J_1^*)]\bigr| \\
    &= \bigl|r(x_0,y_0,a) + \gamma \, \E [(H^{T-1} U^*) (x_1,y_1)] - \gamma^T \, \E [U^*(x_T,y_T)] - r(x_0,y_0,a) - \gamma \, \E [(\tilde{H}^{T-1} \mathbf{0}) (x_1,y_1)]\bigr|\\
    &= \gamma \, \bigl|\E [(H^{T-1} U^*) (x_1,y_1)] - \E [(\tilde{H}^{T-1} 0) (x_1,y_1)] - \gamma^{T-1} \E [U^*(x_T,y_T)]\bigr|\\
    &\leq \begin{aligned}[t]&\underbrace{\gamma \, \E \bigl| (H^{T-1} U^*) (x_1,y_1) - (\tilde{H}^{T-1} U^*)(x_1,y_1)\bigr|}_{\text{Term A}} \\
    &+ \underbrace{\gamma \, \E  \bigl|(\tilde{H}^{T-1} U^*)(x_1,y_1) - (\tilde{H}^{T-1} 0) (x_1,y_1) - \gamma^{T-1} \, U^*(x_T,y_T)\bigr|}_{\text{Term B}},\end{aligned}
\end{align*}
where $(x_t, y_t)$ is the state obtained after transitioning from $(x_0, y_0)$ according to the true dynamics $f = (f_\mathcal{X}, f_\mathcal{Y})$ for $t$ steps. We now work on Terms A and B separately. 

Noting that $U^*$ has Lipschitz constant $L_U$ by Assumption \ref{assumption:lipschitz}, we can apply Lemma \ref{lemma:HV_H'V} to Term A to obtain
\begin{equation}
\text{Term A} \; \le \; \alpha d_\mathcal{Y} \biggl( L_r \sum_{i=1}^{T-2} \gamma^i \, \sum_{j=0}^{i-1} L_f^j  +\gamma^{T-1} L_U  \sum_{j=0}^{T-2} L_f^j \biggr).
\label{eq:termA}
\end{equation}

Moving on to Term B, since the reward function $r \ge 0$, it follows that $U^*(s) \ge 0$ for all $s$. Also, the monotonicity of $\tilde{H}$ implies that $(\tilde{H}^{T-1} U^*) \ge (\tilde{H}^{T-1} 0)$. Therefore, applying Lemma \ref{lemma:HV_HV'},
\begin{align*}
(\tilde{H}^{T-1} U^*)(x_1,y_1) - (\tilde{H}^{T-1} 0) (x_1,y_1) &= \bigl|(\tilde{H}^{T-1} U^*)(x_1,y_1) - (\tilde{H}^{T-1} 0) (x_1,y_1)\bigr|\\
&\le \gamma^{T-1} \textstyle \max_{y \in \mathcal Y_{s_1}^{T-1}} \bigl| U^*(x_1,y) - 0 \bigr| \\
&= \gamma^{T-1} \textstyle \max_{y \in \mathcal Y_{s_1}^{T-1}} U^*(x_1,y),
\end{align*}
where $\mathcal Y_{s_1}^{T-1}$ is the set of fast states reachable from $s_1=(x_1,y_1)$ after $T-1$ transitions of $f_\mathcal{Y}(x_1, \cdot, \cdot, \cdot)$. Let $\tilde{y}_{s_1} = \argmax_{y \in \mathcal Y_{s_1}^{T-1}} U^*(x_1,y)$ be the fast state that attains the maximum. Note that $\tilde{y}_{s_1}$ depends on $s_1$, which is random. Combining with the rest of Term B, we have
\begin{align}
\text{Term B} \; &\le \; \gamma \, \E  \bigl|\gamma^{T-1} U^*(x_1,\tilde{y}_{s_1}) - \gamma^{T-1} \, U^*(x_T,y_T)\bigr|\nonumber\\
&\le \; \max_{\omega \in \Omega} \; \gamma^T\, \bigl| U^*\bigl(x_1(\omega), \tilde{y}_{s_1}(\omega)\bigr) - U^*\bigl(x_T(\omega),y_T(\omega)\bigr) \bigr|\nonumber\\
&\le \max_{\omega \in \Omega} \; \gamma^T L_U \Bigl(\|x_1(\omega) - x_T(\omega)\|_2 + \|\tilde{y}_{s_1}(\omega) - y_T(\omega)\|_2\Bigr)\label{eq:termB_1}\\
&\le \gamma^T L_U d_\mathcal{Y} (\alpha + 2) (T-1),\label{eq:termB_2}
\end{align}
where \eqref{eq:termB_1} follows by Assumption \ref{assumption:lipschitz} and \eqref{eq:termB_2} comes from Lemma~\ref{lemma:xt_yt} (we use that $x_T(\omega)$ is $T-1$ transitions from $x_1(\omega)$ and both $\tilde{y}_{s_1}(\omega)$ and $y_T(\omega)$ are both $T-1$ transitions from $y_1(\omega)$). Finally, we have
\begin{align*}
\text{Terms A} + \text{B} \; &\le \; \alpha d_\mathcal{Y} \biggl( L_r \sum_{i=1}^{T-2} \gamma^i \, \sum_{j=0}^{i-1} L_f^j  + \gamma^{T-1} L_U \sum_{j=0}^{T-2} L_f^j \biggr) +  \gamma^T L_U d_\mathcal{Y} (\alpha + 2) (T-1)\\
&= \alpha d_\mathcal{Y} \biggl( L_r \sum_{i=1}^{T-2} \gamma^i \, \sum_{j=0}^{i-1} L_f^j \biggr) + \gamma^{T-1} L_U \Biggl[ \alpha d_\mathcal{Y} \sum_{j=0}^{T-2} L_f^j + \gamma d_\mathcal{Y} (\alpha + 2) (T-1)\Biggr]
\end{align*}
which completes the proof.

\subsection{Proof of Lemma \ref{lemma:main}}
\label{sec:proof_lemma_main}
To analyze $\mathcal R(\mu, \boldsymbol{\pi}) = \|\bar{U}^* - \bar{U}^{\mu}(\boldsymbol{\pi})\|_\infty$, we will consider two MDPs that operate on the $T$-period timescale, one with optimal value $\bar{U}^*$ and the other with optimal value $V^*(J_1, \boldsymbol{\pi})$. The reason to study an MDP with optimal value $V^*(J_1, \boldsymbol{\pi})$ is because $\mu$ can be viewed as an \emph{approximation} to the optimal policy for the second MDP, as suggested in (\ref{eq.mu_approx_opt}). Since both MDPs are defined on the $T$-period timescale, the transition functions are defined using $T$-period noise sequences $\boldsymbol{w} = (w_1,w_2,\ldots,w_{T})$. 
\begin{itemize}
    \item For $\bar{U}^*$, let $\mathcal M_1 = \langle \mathcal{S}, \mathcal{A}, \mathcal{W}, f_1, r_1, \gamma^T \rangle$ be the MDP associated with the base model reformulation \eqref{eq.Bellman_T_original}, but with the lower-level policy fixed to be $\boldsymbol{\pi}^*$. The reward function $r_1$ is $r_1(s,a) = \E [R(s, a, \boldsymbol{\pi}^*)]$. Given a $T$-period noise sequence $\boldsymbol{w}$, an initial state $s$, and action $a$, the ``next'' state $f_1(s,a,\boldsymbol{w})=s_T(a, \boldsymbol{\pi}^*)$ is the state obtained by first taking action $a$ in state $s$ and then following policy $\boldsymbol{\pi}^*$ for the next $T-1$ steps.
    \item For $V^*(J_1, \boldsymbol{\pi})$, let $\mathcal M_2 = \langle \mathcal{S}, \mathcal{A}, \mathcal{W}, f_2, r_2, \gamma^T \rangle$ be the MDP associated with the frozen-state hierarchical approximation \eqref{eq.Vupper}, where $r_2$ is defined as $r_2(s,a) = \E [\tilde{R}(s, a, J_1)]$. The transition function $f_2$ is defined in the same way as $f_1$ except we replace $\boldsymbol{\pi}^*$ by $\boldsymbol{\pi}$.
\end{itemize}
Let $\epsilon_r(\boldsymbol{\pi}^*, J_1) = \max_{s,a} \, |\E [R(s, a, \boldsymbol{\pi}^*)] - \E [\tilde{R}(s, a, J_1)]|$, so that we have $|r_1(s,a) - r_2(s,a)| \leq \epsilon_r(\boldsymbol{\pi}^*, J_1)$.
Noting that the first steps of $f_1$ and $f_2$ are the same (action $a$ in state $s$ with $w_1$ revealed), applying parts (ii) and (iii) of Lemma~\ref{lemma:xt_yt}, the maximum discrepancy between $f_1$ and $f_2$ is:
    \begin{align*}
        \|f_1(s,a,\boldsymbol{w}) - f_2(s,a,\boldsymbol{w})\|_2 \; \leq \; d(\alpha,d_\mathcal{Y},T) := 2 (\alpha+1) d_\mathcal{Y} (T-1).
    \end{align*}
    Applying Lemma~\ref{lemma:U_V}, we see that
\begin{align*}
    \bigl\|\bar{U}^* - V^*(J_1, \boldsymbol{\pi})\bigr\|_\infty \leq \frac{1}{1-\gamma^T} \Bigl( \epsilon_r(\boldsymbol{\pi}^*, J_1) +   \gamma^T L_{U} d(\alpha,d_\mathcal{Y},T)  \Bigr).
\end{align*}
We also need to account for the fact that $\mu$ is greedy with respect to $V$, an approximation of the optimal value of $\mathcal M_2$. More precisely, $\mu$ is greedy with respect to $r_2(s,a) = \E\bigl[\tilde{R}(s_0, a, J_1)\bigr]$, $f_2(s,a,\boldsymbol{w}) = s_T(a, \boldsymbol{\pi})$, and value function $V$. We can thus apply Lemma \ref{lemma:loss_bound} with $\epsilon_r = \epsilon_r(\boldsymbol{\pi}^*, J_1)$, $d=d(\alpha, d_\mathcal{Y}, T)$, $L_1 = L_U$, $\epsilon_U = \bigl\|\bar{U}^* - V^*(J_1, \boldsymbol{\pi})\bigr\|_\infty$, and $\tilde{\epsilon}_U = \bigl\|V - V^*(J_1,\tilde{\boldsymbol{\pi}})\bigr\|_\infty$. Collecting terms completes the proof.

\subsection{Proof of Theorem~\ref{thm:fsvi_regret}}
\label{sec:proof_thm:fsvi_regret}

We apply Lemma \ref{lemma:main} with $\boldsymbol{\pi} = \tilde{\boldsymbol{\pi}}^*$, $J_1 = J_1^*$, and $V = V^{k}(J_1^*,\boldsymbol{\pi}^*)$. The result follows by combining it with the result of Lemma \ref{prop:K2} and noting that by Proposition \ref{prop:cumreward_diff}, $\epsilon_r(\boldsymbol{\pi}^*, J^*_{1}) \le \epsilon_r(\gamma,\alpha,d_\mathcal{Y},\mathbf{L},T)$.

\subsection{Proof of Proposition~\ref{prop:standard_VI_error}}
\label{proof:standard_VI_error}
Since $\nu_k$ is greedy with respect to $U_k$, we can apply Lemmas~\ref{lemma:Uk_Uopt} and \ref{lemma:Ugreedy_Uopt} to obtain
\begin{equation*}
    \|U^{\nu_{k}} - U^*\|_\infty \leq \frac{2\gamma}{1-\gamma} \frac{ \gamma^k r_\textnormal{max}}{1-\gamma} = \frac{2 r_\textnormal{max} \gamma^{k+1}}{(1-\gamma)^2}.
\end{equation*}

\section{Bounds on $L_U$ in Terms of $L_r$ and $L_f$}
\label{appendix:bounds_on_LU}

We start with an assumption that, if true, leads to a simple bound on the Lipschitz constant $L_U$. The main result is in Proposition \ref{prop.lipschitz_value}.
\begin{assumption}
\label{assumption:smallLf}
Suppose that $\gamma L_f < 1$, where the constant $L_f$, as defined in \eqref{eq.lipschitz_f}, is the sensitivity of the transition function to small changes in $(s,a)$.
\end{assumption}

\begin{lemma} \label{lemma.lipschitz_U_to_Q}
    Consider an $(\alpha, d_\mathcal{Y})$-fast-slow MDP $\langle \mathcal{S}, \mathcal{A}, \mathcal{W}, f, r, \gamma \rangle$ and let $U:\mathcal{S} \rightarrow \R$ be a value function such that there exists $L_U > 0$ where for any states $s$ and $\tilde{s}$,
    \begin{equation}
        |U(s) - U(\tilde{s})| \leq L_U \, \|s - \tilde{s}\|_2.
        \label{eq:lipschitz_U_to_Q_assumption}
    \end{equation}
    Define the state-action value function $Q(s,a) = r(s,a) + \gamma \, \E\bigl[U(f(s,a,w))\bigr]$. Then, for any state-action pairs $(s,a)$ and $(\tilde{s},\tilde{a})$, the state-action value function $Q$ satisfies
    \begin{align*}
        \bigl| Q(s,a) - Q(\tilde{s},\tilde{a}) \bigr|
        \leq (L_r + \gamma L_U L_f) \Bigl(\|s-\tilde{s}\|_2 + \|a-\tilde{a}\|_2 \Bigr).
    \end{align*}
\end{lemma}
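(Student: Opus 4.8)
# Proof Proposal for Lemma~\ref{lemma.lipschitz_U_to_Q}

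The plan is a direct estimate using the triangle inequality and the structural assumptions in play. Write
\[
Q(s,a) - Q(\tilde{s},\tilde{a}) = \bigl[r(s,a) - r(\tilde{s},\tilde{a})\bigr] + \gamma\, \E\bigl[U(f(s,a,w)) - U(f(\tilde{s},\tilde{a},w))\bigr],
\]
and bound the two pieces separately. For the first piece, the Lipschitz property of the reward function \eqref{assumption.lipschitz_reward} gives $|r(s,a)-r(\tilde{s},\tilde{a})| \le L_r \|(s,a)-(\tilde{s},\tilde{a})\|_2$. For the second piece, I would pass the absolute value inside the expectation, apply the hypothesized Lipschitz bound \eqref{eq:lipschitz_U_to_Q_assumption} on $U$ pointwise in $w$ to get $|U(f(s,a,w)) - U(f(\tilde{s},\tilde{a},w))| \le L_U \|f(s,a,w)-f(\tilde{s},\tilde{a},w)\|_2$, and then invoke the Lipschitz property of the transition function \eqref{eq.lipschitz_f}, which yields $\|f(s,a,w)-f(\tilde{s},\tilde{a},w)\|_2 \le L_f \|(s,a)-(\tilde{s},\tilde{a})\|_2$ uniformly over $w$. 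Since the resulting bound is deterministic, the expectation drops out.

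Combining, we obtain $|Q(s,a)-Q(\tilde{s},\tilde{a})| \le (L_r + \gamma L_U L_f)\,\|(s,a)-(\tilde{s},\tilde{a})\|_2$. The last step is to pass from the joint Euclidean norm on $\mathcal{S}\times\mathcal{A}$ to the sum of the component norms, using the elementary inequality $\|(s,a)-(\tilde{s},\tilde{a})\|_2 \le \|s-\tilde{s}\|_2 + \|a-\tilde{a}\|_2$ (this is just $\sqrt{u^2+v^2}\le u+v$ for nonnegative $u,v$, valid because the product metric on the concatenated Euclidean spaces is the Euclidean metric on the direct sum). This gives exactly the stated bound.

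There is essentially no obstacle here; the only point requiring minor care is that the max-over-actions structure of $Q$ is \emph{not} present in this lemma (unlike the Bellman operator estimates elsewhere in the paper, $Q$ is defined for a fixed $(s,a)$), so no $\max_a$ manipulation is needed — the estimate is a straight triangle-inequality argument. The mild subtlety worth a sentence in the write-up is justifying that the expectation over $w$ can be bounded termwise by a constant because the bound on $\|f(s,a,w)-f(\tilde s,\tilde a,w)\|_2$ from \eqref{eq.lipschitz_f} holds for every $w\in\mathcal{W}$, so $\E[\,\cdot\,]$ of a constant is that constant.
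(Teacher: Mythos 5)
Your proposal is correct and follows essentially the same route as the paper's proof: a direct triangle-inequality split of $Q(s,a)-Q(\tilde{s},\tilde{a})$ into the reward difference (bounded via $L_r$) and the discounted expectation term (absolute value pushed inside, then $L_U$ followed by $L_f$ applied uniformly over $w$), with the joint Euclidean norm relaxed to the sum of component norms at the end. No gaps.
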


\begin{proof}
    For any state-action pairs $(s,a),(\tilde{s},\tilde{a}) \in \mathcal S \times \mathcal A$, we have
    \begin{align}
        \bigl| Q(s,a) - Q(\tilde{s},\tilde{a}) \bigr| %
        &\leq | r(s,a) - r(\tilde{s},\tilde{a})| + \gamma \, \bigl| \E\bigl[U(f(s,a,w)) - U(f(\tilde{s},\tilde{a}, w))\bigr] \bigr| \nonumber\\
        &\leq L_r \bigl(\|s-\tilde{s}\|_2 + \|a-\tilde{a}\|_2\bigr) + \gamma L_U \max_{w} \bigl\|f(s,a, w) - f(\tilde{s},\tilde{a}, w))\bigr\|_2  \label{eq.lipschitz_value_1}\\
        &\leq L_r \bigl(\|s-\tilde{s}\|_2 +\|a-\tilde{a}\|_2\bigr) + \gamma L_U L_f \bigl( \|s - \tilde{s}\|_2 + \|a-\tilde{a}\|_2\bigr)  \label{eq.lipschitz_value_2}\\
        &\leq (L_r + \gamma L_U L_f) \bigl( \|s-\tilde{s}\|_2 + \|a-\tilde{a}\|_2\bigr), \nonumber
    \end{align}
    where \eqref{eq.lipschitz_value_1} follows by \eqref{eq:lipschitz_U_to_Q_assumption} and \eqref{eq.lipschitz_value_2} follows by the definition of $L_f$ in \eqref{eq.lipschitz_f}.
\end{proof}

\begin{lemma} \label{lemma.lipschitz_Q_to_U}
    Consider an $(\alpha, d_\mathcal{Y})$-fast-slow MDP $\langle \mathcal{S}, \mathcal{A}, \mathcal{W}, f, r, \gamma \rangle$. Let $Q:\mathcal{S} \times \mathcal A \rightarrow \R$ be a state-action value function and assume there exists $L_Q > 0$ where for any states $(s, a)$ and $(\tilde{s}, \tilde{a})$,
    \begin{align}
        \bigl| Q(s,a) - Q(\tilde{s},\tilde{a}) \bigr|
        \leq L_Q \Bigl(\|s-\tilde{s}\|_2 +\|a-\tilde{a}\|_2 \Bigr).
        \label{eq:lipschitz_Q_to_U_assumption}
    \end{align}
    Define $U(s) = \max_a Q(s,a)$. Then, for any states $s$ and $\tilde{s}$, the value function $U$ satisfies
    \begin{align*}
        \bigl| U(s) - U(\tilde{s}) \bigr|
        \leq L_Q \, \|s-\tilde{s}\|_2.
    \end{align*}
\end{lemma}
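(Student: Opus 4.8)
The plan is to invoke the standard fact that the pointwise maximum, over a fixed index set $\mathcal A$, of a family of functions that are uniformly Lipschitz in $s$ is itself Lipschitz in $s$ with the same constant. First I would fix two states $s, \tilde s \in \mathcal S$ and, since $\mathcal A$ is finite, choose maximizers $a^* \in \argmax_{a} Q(s,a)$ and $\tilde a^* \in \argmax_{a} Q(\tilde s, a)$, so that $U(s) = Q(s, a^*)$ and $U(\tilde s) = Q(\tilde s, \tilde a^*)$.

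Next I would bound $U(s) - U(\tilde s)$ from above. Since $\tilde a^*$ is optimal at $\tilde s$, we have $Q(\tilde s, \tilde a^*) \ge Q(\tilde s, a^*)$, so
\[
U(s) - U(\tilde s) = Q(s, a^*) - Q(\tilde s, \tilde a^*) \;\le\; Q(s, a^*) - Q(\tilde s, a^*) \;\le\; L_Q\bigl(\|s - \tilde s\|_2 + \|a^* - a^*\|_2\bigr) = L_Q \|s - \tilde s\|_2,
\]
where the last inequality is \eqref{eq:lipschitz_Q_to_U_assumption} applied to the pairs $(s, a^*)$ and $(\tilde s, a^*)$. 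The key point is that evaluating $Q$ at the \emph{same} action in both arguments makes the action term vanish. By the symmetric argument (exchanging the roles of $s$ and $\tilde s$ and using the optimality of $a^*$ at $s$), we get $U(\tilde s) - U(s) \le L_Q \|s - \tilde s\|_2$; combining the two inequalities yields $|U(s) - U(\tilde s)| \le L_Q \|s - \tilde s\|_2$, as claimed.

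There is essentially no obstacle here: the only points requiring care are that the maximizers exist (guaranteed by finiteness of $\mathcal A$) and that one applies the joint-Lipschitz hypothesis along the ``diagonal'' in the action coordinate so that the $\|a - \tilde a\|_2$ contribution is exactly zero. I would then remark that this lemma, together with Lemma \ref{lemma.lipschitz_U_to_Q}, can be combined under Assumption \ref{assumption:smallLf} (i.e., $\gamma L_f < 1$) in a contraction/fixed-point argument to obtain the explicit bound on $L_U$ asserted in Proposition \ref{prop.lipschitz_value} — but that combination is the content of the subsequent result rather than of this lemma.
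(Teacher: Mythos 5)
Your proof is correct and is essentially the paper's argument: the paper compresses your maximizer-comparison step into the standard inequality $|\max_a Q(s,a) - \max_a Q(\tilde s,a)| \le \max_a |Q(s,a) - Q(\tilde s,a)|$ and then applies \eqref{eq:lipschitz_Q_to_U_assumption} with the same action in both arguments, exactly as you do.
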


\begin{proof}
Note that:
    \begin{align*}
        \bigl| U(s) - U(\tilde{s}) \bigr| &= \bigl| \max_a \, Q(s,a) - \max_{\tilde{a}}\, Q(\tilde{s},\tilde{a}) \bigr| \\
        &\leq \max_a \, \bigl| Q(s,a) - Q(\tilde{s},a) \bigr| \\
        &\leq L_Q \|s-\tilde{s}\|_2,
    \end{align*}
    where the last inequality is by \eqref{eq:lipschitz_Q_to_U_assumption}.
\end{proof}

\begin{lemma} \label{lemma.lipschitz_Q_U_recursive}
Consider an $(\alpha, d_\mathcal{Y})$-fast-slow MDP $\langle \mathcal{S}, \mathcal{A}, \mathcal{W}, f, r, \gamma \rangle$. Starting with $U_0=0$, recursively define $Q_{k+1}$ and $U_{k+1}$ as follows:
    \begin{equation*}
        Q_{k+1}(s,a) = r(s,a) + \gamma \,\E\bigl[U_k(f(s,a,w)\bigr] \quad \text{and} \quad U_{k+1}(s) = \max_a \, Q_{k+1}(s,a).
    \end{equation*}
    Then $U_k$ is Lipschitz continuous and its Lipschitz constant $L_{U_k}$ satisfies
    \begin{equation}
        L_{U_k} = L_r + \gamma L_f L_{U_{k-1}}.
        \label{eq:lipschitz_Q_U_recursive_result}
    \end{equation}
\end{lemma}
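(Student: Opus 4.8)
The plan is to prove \eqref{eq:lipschitz_Q_U_recursive_result} by an easy induction on $k$ using the two preceding lemmas as the inductive machinery. The base case is $k=0$: since $U_0 \equiv 0$, it is trivially Lipschitz with constant $L_{U_0} = 0$. For the inductive step, suppose $U_{k-1}$ is Lipschitz with constant $L_{U_{k-1}}$, i.e. $|U_{k-1}(s) - U_{k-1}(\tilde s)| \le L_{U_{k-1}}\,\|s-\tilde s\|_2$ for all states $s,\tilde s$.

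First I would apply Lemma~\ref{lemma.lipschitz_U_to_Q} with $U = U_{k-1}$ and $L_U = L_{U_{k-1}}$. Since $Q_k(s,a) = r(s,a) + \gamma\,\E[U_{k-1}(f(s,a,w))]$ matches exactly the definition of the state-action value function in that lemma, we conclude that $Q_k$ is Lipschitz in $(s,a)$ with constant $L_r + \gamma L_{U_{k-1}} L_f$, i.e.
\[
|Q_k(s,a) - Q_k(\tilde s,\tilde a)| \le (L_r + \gamma L_{U_{k-1}} L_f)\,\bigl(\|s-\tilde s\|_2 + \|a-\tilde a\|_2\bigr).
\]
Then I would apply Lemma~\ref{lemma.lipschitz_Q_to_U} with $Q = Q_k$ and $L_Q = L_r + \gamma L_{U_{k-1}} L_f$. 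Since $U_k(s) = \max_a Q_k(s,a)$, that lemma gives $|U_k(s) - U_k(\tilde s)| \le L_Q\,\|s-\tilde s\|_2 = (L_r + \gamma L_f L_{U_{k-1}})\,\|s-\tilde s\|_2$. Hence $U_k$ is Lipschitz with constant $L_{U_k} = L_r + \gamma L_f L_{U_{k-1}}$, which is exactly \eqref{eq:lipschitz_Q_U_recursive_result}, completing the induction.

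There is essentially no obstacle here — the lemma is a bookkeeping composition of Lemmas~\ref{lemma.lipschitz_U_to_Q} and~\ref{lemma.lipschitz_Q_to_U}, and the only mild point of care is making sure the metric conventions line up: Lemma~\ref{lemma.lipschitz_U_to_Q} produces a bound in terms of $\|s-\tilde s\|_2 + \|a-\tilde a\|_2$ (treating the product space with the $\ell_1$-style combination), and Lemma~\ref{lemma.lipschitz_Q_to_U} consumes exactly that form, so the constants propagate cleanly. One should also note that the lemma only asserts the recursion for the constant; it does not claim the $L_{U_k}$ are the tightest possible, so no extra argument about optimality is needed. If desired, the explicit closed form $L_{U_k} = L_r \sum_{i=0}^{k-1}(\gamma L_f)^i$ (and, under Assumption~\ref{assumption:smallLf}, its limit $L_r/(1-\gamma L_f)$) follows immediately by unrolling the recursion, but that belongs to Proposition~\ref{prop.lipschitz_value} rather than to this lemma.
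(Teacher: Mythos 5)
Your proof is correct and follows essentially the same route as the paper: an induction whose inductive step composes Lemma~\ref{lemma.lipschitz_U_to_Q} (to pass from $U_{k-1}$ to $Q_k$) and Lemma~\ref{lemma.lipschitz_Q_to_U} (to pass from $Q_k$ to $U_k$). The only difference is cosmetic — you anchor the induction at $k=0$ via $U_0\equiv 0$, whereas the paper starts at $k=1$ using $Q_1=r$ — and both yield the recursion \eqref{eq:lipschitz_Q_U_recursive_result}.
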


\begin{proof}
    The proof is by induction. For $k=1$, note that $Q_1(s,a) = r(s,a)$ and therefore has Lipschitz constant $L_r$
    by \eqref{assumption.lipschitz_reward}. By Lemma~\ref{lemma.lipschitz_Q_to_U}, it follows that $U_1$ also has Lipschitz constant $L_r$. Since $L_{U_0} = 0$, we see that $L_{U_1} = L_r$ satisfies \eqref{eq:lipschitz_Q_U_recursive_result}.
    Now, assume that $L_{U_k}$ satisfies \eqref{eq:lipschitz_Q_U_recursive_result} for $k \ge 1$.
    Then, by Lemma~\ref{lemma.lipschitz_U_to_Q}, $Q_{k+1}$ is $(L_r + \gamma L_f L_{U_{k}})$-Lipschitz continuous and by Lemma~\ref{lemma.lipschitz_Q_to_U}, $U_{k+1}$ is $(L_r + \gamma L_f L_{U_{k}})$-Lipschitz continuous.
\end{proof}

\begin{proposition} \label{prop.lipschitz_value}
    Consider an $(\alpha, d_\mathcal{Y})$-fast-slow MDP $\langle \mathcal{S}, \mathcal{A}, \mathcal{W}, f, r, \gamma \rangle$ and suppose Assumption \ref{assumption:smallLf} holds. Then, the optimal value $U^*$, as defined in \eqref{eq.Bellman_original}, satisfies:
    \begin{align*}
        \bigl |U^*(s) - U^*(\tilde{s}) \bigr| 
        \leq \frac{L_r}{1 - \gamma L_f} \, \bigl\|s-\tilde{s}\bigr\|_2
    \end{align*}
    for any states $s, \tilde{s} \in \mathcal S$.
\end{proposition}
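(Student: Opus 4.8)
The plan is to realize $U^*$ as the limit of the value iteration sequence $\{U_k\}$ from Lemma~\ref{lemma.lipschitz_Q_U_recursive}, and then show that the Lipschitz constants $L_{U_k}$ of that sequence converge to the claimed bound. Concretely, start with $U_0 = 0$ and recursively define $Q_{k+1}(s,a) = r(s,a) + \gamma\,\E[U_k(f(s,a,w))]$ and $U_{k+1}(s) = \max_a Q_{k+1}(s,a)$, which is exactly the setup of Lemma~\ref{lemma.lipschitz_Q_U_recursive}. By standard value iteration theory (the Bellman operator $H$ is a $\gamma$-contraction with unique fixed point $U^*$), we have $\|U_k - U^*\|_\infty \to 0$ as $k \to \infty$; in particular $U_k(s) \to U^*(s)$ pointwise for every $s \in \mathcal S$.

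Next, I would analyze the recursion $L_{U_k} = L_r + \gamma L_f L_{U_{k-1}}$ with $L_{U_0} = 0$ (this is exactly \eqref{eq:lipschitz_Q_U_recursive_result}). Under Assumption~\ref{assumption:smallLf}, we have $\gamma L_f < 1$, so this is a contractive affine recursion: solving it gives
\[
L_{U_k} = L_r \sum_{i=0}^{k-1} (\gamma L_f)^i = L_r \cdot \frac{1 - (\gamma L_f)^k}{1 - \gamma L_f} \;\longrightarrow\; \frac{L_r}{1 - \gamma L_f}
\]
as $k \to \infty$. In particular, $L_{U_k} \le L_r/(1-\gamma L_f)$ for every $k$, so each $U_k$ is Lipschitz with constant at most $L_r/(1-\gamma L_f)$, uniformly in $k$.

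Finally, I would pass to the limit. Fix any two states $s, \tilde s \in \mathcal S$. For each $k$ we have $|U_k(s) - U_k(\tilde s)| \le L_{U_k}\,\|s - \tilde s\|_2 \le \frac{L_r}{1-\gamma L_f}\|s - \tilde s\|_2$. Since $U_k(s) \to U^*(s)$ and $U_k(\tilde s) \to U^*(\tilde s)$, taking $k \to \infty$ on the left-hand side yields $|U^*(s) - U^*(\tilde s)| \le \frac{L_r}{1-\gamma L_f}\|s - \tilde s\|_2$, which is the claim. The only mild subtlety — and the place I'd be most careful — is making sure the uniform (in $k$) Lipschitz bound is genuinely uniform, so that the pointwise limit inherits it; this is immediate once the closed form for $L_{U_k}$ is in hand, since the partial geometric sum is bounded by the full sum. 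There is no real obstacle here beyond bookkeeping: the substantive content is entirely packaged in Lemma~\ref{lemma.lipschitz_Q_U_recursive} (which chains Lemmas~\ref{lemma.lipschitz_U_to_Q} and~\ref{lemma.lipschitz_Q_to_U}) plus the elementary geometric-series computation and the contraction property of value iteration.
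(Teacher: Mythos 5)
Your proposal is correct and follows essentially the same route as the paper: both realize $U^*$ as the limit of the value iteration sequence from Lemma~\ref{lemma.lipschitz_Q_U_recursive}, solve the affine recursion $L_{U_k} = L_r + \gamma L_f L_{U_{k-1}}$ as a geometric sum, and pass to the limit using $\gamma L_f < 1$. Your explicit remark that the uniform-in-$k$ Lipschitz bound is what lets the pointwise limit inherit the constant is a welcome bit of extra care that the paper leaves implicit.
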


\begin{proof} According to Proposition~7.3.1 of \citet{bertsekas2012dynamic}, the value $U_k$ in Lemma~\ref{lemma.lipschitz_Q_U_recursive} converges to the optimal value $U^*$ (value iteration). The recursion \eqref{eq:lipschitz_Q_U_recursive_result} can be written as:
\[
L_{U_k} = L_r + \gamma L_f L_r + \cdots +  (\gamma L_f)^{k-1} L_r =  \sum_{i=0}^{k-1} (\gamma L_f)^i L_r,
\]
a convergent sequence since Assumption \ref{assumption:smallLf} is satisfied. Letting $k\rightarrow \infty$, we see that $U^*$ has Lipschitz constant 
\[
\lim_{k\rightarrow\infty} L_{U_k} = \sum_{i=0}^{\infty} (\gamma L_f)^i L_r = \frac{L_r}{1 - \gamma L_f},
\]
completing the proof.
\end{proof}

\section{Proofs for Appendix~\ref{sec:avi}}

\subsection{Technical Lemmas}
\begin{lemma} \label{lemma.Phi_Phi_dagger}
    For any vectors $J \in \mathbb{R}^N$ and $J' \in \mathbb{R}^N$, it holds that
    \begin{equation*}
        \bigl \| (\Phi \Phi^\dagger)(J) - (\Phi \Phi^\dagger)(J') \bigr \|_\infty \le \kappa \, \| J - J' \|_\infty %
    \end{equation*}
\end{lemma}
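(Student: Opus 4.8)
## Proof proposal for Lemma \ref{lemma.Phi_Phi_dagger}

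The plan is to unwind the definitions of $\Phi$ and $\Phi^\dagger$ and exploit Assumption \ref{assumption.basis_function}, which says that every feature vector $\boldsymbol{\phi}(s)$ lies in the scaled convex hull of the anchor feature vectors $\{\pm(\gamma'/\gamma)\boldsymbol{\phi}(s_m)\}_{m=1}^M$. First, I would note that for any $J \in \R^N$, the vector $\Phi^\dagger J \in \R^M$ depends only on the first $M$ coordinates of $J$ (the pre-selected states), since $\Phi^\dagger = [L^{-1}\ 0]$; write $\boldsymbol{\omega} = \Phi^\dagger J = L^{-1} J_{\tilde{\mathcal S}}$ where $J_{\tilde{\mathcal S}} = (J(s_1),\ldots,J(s_M))^\intercal$. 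Then $(\Phi\Phi^\dagger J)(s) = \boldsymbol{\phi}^\intercal(s)\,\boldsymbol{\omega}$. Crucially, at an anchor state $s_m$ we get $(\Phi\Phi^\dagger J)(s_m) = \boldsymbol{\phi}^\intercal(s_m) L^{-1} J_{\tilde{\mathcal S}}$, and since the $m$-th row of $L$ is $\boldsymbol{\phi}^\intercal(s_m)$, this equals the $m$-th coordinate of $L L^{-1} J_{\tilde{\mathcal S}} = J_{\tilde{\mathcal S}}$; that is, $\Phi\Phi^\dagger$ fixes the pre-selected coordinates: $(\Phi\Phi^\dagger J)(s_m) = J(s_m)$.

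Next, by linearity it suffices to bound $\|\Phi\Phi^\dagger(J - J')\|_\infty$; set $K = J - J'$ and $\boldsymbol{\nu} = \Phi^\dagger K$, so $(\Phi\Phi^\dagger K)(s) = \boldsymbol{\phi}^\intercal(s)\boldsymbol{\nu}$. Fix an arbitrary state $s$. Using condition 2 of Assumption \ref{assumption.basis_function}, write $\boldsymbol{\phi}(s) = \frac{\gamma'}{\gamma}\sum_{m=1}^M \theta_m(s)\,\boldsymbol{\phi}(s_m)$ with $\sum_m |\theta_m(s)| \le 1$. Then
\begin{align*}
\bigl|(\Phi\Phi^\dagger K)(s)\bigr| = \bigl|\boldsymbol{\phi}^\intercal(s)\boldsymbol{\nu}\bigr|
= \biggl| \frac{\gamma'}{\gamma}\sum_{m=1}^M \theta_m(s)\,\boldsymbol{\phi}^\intercal(s_m)\boldsymbol{\nu}\biggr|
\le \frac{\gamma'}{\gamma} \sum_{m=1}^M |\theta_m(s)|\,\bigl|(\Phi\Phi^\dagger K)(s_m)\bigr|,
\end{align*}
and by the fixed-point observation above, $|(\Phi\Phi^\dagger K)(s_m)| = |K(s_m)| \le \|K\|_\infty$. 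Hence $|(\Phi\Phi^\dagger K)(s)| \le \frac{\gamma'}{\gamma}\|K\|_\infty\sum_m|\theta_m(s)| \le \frac{\gamma'}{\gamma}\|K\|_\infty = \kappa\,\|K\|_\infty$. Taking the supremum over $s$ gives $\|\Phi\Phi^\dagger(J-J')\|_\infty \le \kappa\|J-J'\|_\infty$, which is the claim.

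I do not anticipate a serious obstacle here; the only mildly delicate point is being careful that $\Phi^\dagger$ reads off exactly the anchor coordinates and that $\Phi\Phi^\dagger$ acts as the identity on those coordinates — this is what lets the convex-combination bound close without any loss beyond the amplification factor $\kappa = \gamma'/\gamma$. One should also remark that $\kappa \ge 1$ (since $\gamma' \ge \gamma$), so this is a genuine Lipschitz/near-contraction statement in the sense that it does not blow up, and it is exactly the ingredient needed downstream to show that $F'_{\boldsymbol{\omega}^*}$ is a $(\kappa\gamma^T)$-contraction.
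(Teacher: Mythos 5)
Your proof is correct and follows essentially the same route as the paper: represent $\boldsymbol{\phi}(s)$ via the convex-combination condition in Assumption \ref{assumption.basis_function} and use the fact that $\Phi\Phi^\dagger$ fixes the anchor coordinates, so $|(\Phi\Phi^\dagger(J-J'))(s_m)| = |J(s_m)-J'(s_m)|$, yielding the factor $\kappa$. The only difference is that you make the identity-on-anchor-states step explicit via $\Phi^\dagger = [L^{-1}\;0]$, which the paper leaves implicit in its final equality.
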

\begin{proof}
For simplicity, let $D = \Phi \bigl[ \Phi^\dagger(J) - \Phi^\dagger(J')\bigr]$ be the term inside the norm on the left hand side. Then, for any state $s$, we have $|D(s)| = \boldsymbol{\phi}^\intercal(s) \bigl[ \Phi^\dagger(J) - \Phi^\dagger(J')\bigr]$.
    We select $\theta_1(s), \theta_1(s), \ldots, \theta_M(s) \in \R$ that satisfy Assumption~\ref{assumption.basis_function}, obtaining
    \begin{align*}
        |D(s)| &= \Biggl| \kappa \Biggl( \sum_{m=1}^M \theta_m(s) \boldsymbol{\phi}^\intercal(s_m) \Biggr) \bigl(\Phi^\dagger(J) - \Phi^\dagger(J')\bigr) \Biggr| \\
        &\leq \kappa \max_m \, \bigl| \boldsymbol{\phi}^\intercal(s_m) \bigl(\Phi^\dagger(J)) - \Phi^\dagger(J')\bigr) \bigr| \\
        &= \kappa \max_m  \, |D(s_m)| \\
        &= \kappa \max_m \, |J(s_m) - J'(s_m)| \\
        &\leq \kappa \, \| J - J' \|_\infty %
    \end{align*}
    where the third equality uses the fact that $s_m$ is a pre-selected state.
\end{proof}

\begin{lemma}
\label{lemma.expansion_lower}
Given a lower-level value function $\hat{J}(\boldsymbol{\omega}_{t+1})$, recall that one approximate Bellman step in the lower level of FSAVI yields $\hat{J}(\boldsymbol{\omega}_{t}) = \Phi \Phi^{\dagger} \bar{H} \hat{J}(\boldsymbol{\omega}_{t+1})$ in the value function space.
If $\boldsymbol{\omega}_{T} = \mathbf{0}$,
\[
\bigl \|\hat{J}(\boldsymbol{\omega}_{1}) \bigr\|_\infty \le \kappa r_\textnormal{max} \sum_{i=0}^{T-1} (\kappa \gamma)^i = \frac{(\kappa\gamma)^T - 1}{\kappa\gamma - 1} \, \kappa r_\textnormal{max}.
\]
Moreover, the upper-level reward function can be bounded as follows:
\[
\bigl|\E\bigl[\tilde{R}(s_0, a, \hat{J}_1(\boldsymbol{\omega}_1))\bigr]\bigr| \le \frac{(\kappa\gamma)^{T+1} - 1}{\kappa\gamma - 1} \, \kappa r_\textnormal{max}.
\]
\end{lemma}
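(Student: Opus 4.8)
The plan is to prove the first bound by backward induction on $t$, unrolling the recursion $\hat{J}(\boldsymbol{\omega}_t) = \Phi\Phi^\dagger \bar{H} \hat{J}(\boldsymbol{\omega}_{t+1})$ from $t=T$ down to $t=1$, and then derive the second bound as an easy corollary. For the induction, let $a_t = \|\hat{J}(\boldsymbol{\omega}_t)\|_\infty$. The base case is $a_T = 0$ since $\boldsymbol{\omega}_T = \mathbf{0}$. For the inductive step, I would first observe that $\bar{H}\,\mathbf{0}$ is the all-zero function plus the reward, so $\|\bar{H} J\|_\infty \le r_\textnormal{max} + \gamma\|J\|_\infty$ for any $J$ (using $r\in[0,r_\textnormal{max}]$ and that $\bar{H}$ contracts at rate $\gamma$ around the point $\mathbf 0$). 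Then applying Lemma \ref{lemma.Phi_Phi_dagger} with $J' = \mathbf{0}$ — noting $(\Phi\Phi^\dagger)(\mathbf 0) = \mathbf 0$ — gives $\|\Phi\Phi^\dagger(\bar{H}\hat{J}(\boldsymbol{\omega}_{t+1}))\|_\infty \le \kappa\|\bar{H}\hat{J}(\boldsymbol{\omega}_{t+1})\|_\infty \le \kappa(r_\textnormal{max} + \gamma a_{t+1})$. So $a_t \le \kappa r_\textnormal{max} + \kappa\gamma\, a_{t+1}$. Iterating this linear recursion from $a_T = 0$ down to $a_1$ yields $a_1 \le \kappa r_\textnormal{max} \sum_{i=0}^{T-1}(\kappa\gamma)^i$, which is the geometric sum $\frac{(\kappa\gamma)^T - 1}{\kappa\gamma - 1}\kappa r_\textnormal{max}$ (valid whether or not $\kappa\gamma \ne 1$, interpreted as $T$ when $\kappa\gamma = 1$).

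For the second bound, I would use the definition \eqref{eq:R_tilde_defn}: $\tilde{R}(s_0, a, \hat{J}_1(\boldsymbol{\omega}_1)) = r(s_0,a) + \gamma\, \hat{J}_1(f(s_0,a,w),\boldsymbol{\omega}_1)$, so taking expectations and absolute values, $|\E[\tilde{R}(s_0,a,\hat{J}_1(\boldsymbol{\omega}_1))]| \le r_\textnormal{max} + \gamma\,\|\hat{J}_1(\boldsymbol{\omega}_1)\|_\infty \le r_\textnormal{max} + \gamma\cdot\frac{(\kappa\gamma)^T - 1}{\kappa\gamma - 1}\kappa r_\textnormal{max}$. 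A short algebraic simplification — writing $r_\textnormal{max} = \frac{\kappa\gamma - 1}{\kappa\gamma - 1}r_\textnormal{max}$ and combining, or more cleanly bounding $r_\textnormal{max} \le \kappa r_\textnormal{max}$ (since $\kappa = \gamma'/\gamma \ge 1$) and folding it into the geometric series as the $i=0$-shifted term — gives $\frac{(\kappa\gamma)^{T+1} - 1}{\kappa\gamma - 1}\kappa r_\textnormal{max}$. Concretely: $r_\textnormal{max} + \kappa\gamma r_\textnormal{max}\sum_{i=0}^{T-1}(\kappa\gamma)^i \le \kappa r_\textnormal{max} + \kappa r_\textnormal{max}\sum_{i=1}^{T}(\kappa\gamma)^i = \kappa r_\textnormal{max}\sum_{i=0}^{T}(\kappa\gamma)^i$, which is the claimed expression.

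The only mild subtlety — not really an obstacle — is justifying $\|\bar{H}J\|_\infty \le r_\textnormal{max} + \gamma\|J\|_\infty$ and the use of Lemma \ref{lemma.Phi_Phi_dagger} with the anchor point $\mathbf 0$; both are immediate from the definitions of $\bar{H}$ in \eqref{eq.Vlower_operator}, the reward bound $r\in[0,r_\textnormal{max}]$, and the fact that $\Phi^\dagger \mathbf 0 = \mathbf 0$ so $\Phi\Phi^\dagger\mathbf 0 = \mathbf 0$. One should also be slightly careful that $\kappa \ge 1$ (which holds since $\gamma' \in [\gamma,1)$ implies $\kappa = \gamma'/\gamma \ge 1$) to make the step $r_\textnormal{max} \le \kappa r_\textnormal{max}$ legitimate. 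Everything else is routine geometric-series bookkeeping.
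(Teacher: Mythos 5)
Your proof is correct and follows essentially the same route as the paper: the same recursion $\|\hat{J}(\boldsymbol{\omega}_t)\|_\infty \le \kappa r_\textnormal{max} + \kappa\gamma\,\|\hat{J}(\boldsymbol{\omega}_{t+1})\|_\infty$ unrolled from $\boldsymbol{\omega}_T=\mathbf{0}$, followed by the same $\kappa\ge 1$ trick and geometric-series bookkeeping for the bound on $\tilde{R}$. The only cosmetic difference is that you obtain the amplification factor $\kappa$ by invoking Lemma \ref{lemma.Phi_Phi_dagger} with anchor $\mathbf{0}$, whereas the paper re-expands $\boldsymbol{\phi}^\intercal(s)$ via Assumption \ref{assumption.basis_function} directly — these are the same argument.
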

\begin{proof}
The proof follows by Assumption \ref{assumption.basis_function} and some manipulation:
\begin{align*}
\bigl|\hat{J}(\boldsymbol{\omega}_{t})(s) \bigr|
&= \Biggl| \kappa \Biggl( \sum_{m=1}^M \theta_m(s) \boldsymbol{\phi}^\intercal(s_m) \Biggr) \bigl(\Phi^{\dagger} \bar{H} \hat{J}(\boldsymbol{\omega}_{t+1})\bigr) \Biggr| \\
&\le \kappa \max_m \,  \bigl| \boldsymbol{\phi}^\intercal(s_m) \bigl(\Phi^{\dagger} \bar{H} \hat{J}(\boldsymbol{\omega}_{t+1}) \bigr) \bigr|\\
&= \kappa \, \max_m \, \bigl|
 \bigl(\bar{H}\hat{J}(\boldsymbol{\omega}_{t})\bigr)(s_m) \bigr|\\
&= \kappa \, r_\textnormal{max} + \kappa \gamma \, \bigl \| \hat{J}(\boldsymbol{\omega}_{t+1}) \bigr \|_\infty.
\end{align*}
Applying the above inequality $T-1$ times yields the first result. Next, we see that
\begin{align*}
\bigl|\E\bigl[\tilde{R}(s_0, a, \hat{J}_1(\boldsymbol{\omega}_1))\bigr]\bigr| &\le r_\textnormal{max} + \gamma \bigl \|\hat{J}(\boldsymbol{\omega}_{1}) \bigr\|_\infty\\
&\le \kappa r_\textnormal{max} + \kappa \gamma \bigl \|\hat{J}(\boldsymbol{\omega}_{1}) \bigr\|_\infty\\
&\le \kappa r_\textnormal{max} \sum_{i=0}^{T} (\kappa \gamma)^i,
\end{align*}
where we used the fact that $\kappa \ge 1$ in the second inequality.
\end{proof}

\begin{lemma} \label{lemma.lower_w_bellman}
    Suppose $(\boldsymbol{\omega}_1^*, \boldsymbol{\omega}_2^*, \ldots, \boldsymbol{\omega}_{T}^*)$ satisfies $\boldsymbol{\omega}_t^* = \bar{H}' \boldsymbol{\omega}_{t+1}^*$ for all $t$. Then, we have
    \begin{equation*}
        \bigl \|J_t^* - \hat{J}_t(\boldsymbol{\omega}_t^*) \bigr\|_\infty
        \leq \biggl(1 + \frac{\gamma+1}{\gamma} \sum_{i=1}^{T-t}(\kappa\gamma)^i\biggr) \varepsilon_{\textnormal{low}} = \biggl(\frac{1+\kappa}{1-\kappa\gamma} - \frac{(\kappa\gamma)^T(1+\gamma)}{\gamma-\kappa\gamma^2}\biggr) \varepsilon_{\textnormal{low}}
    \end{equation*}
    a bound on the error of the value function approximation.
\end{lemma}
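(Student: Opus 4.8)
\textbf{Proof plan for Lemma~\ref{lemma.lower_w_bellman}.}
The plan is to run a backward induction on $t$ from $T$ down to $1$, tracking the quantity $e_t := \bigl\|J_t^* - \hat{J}_t(\boldsymbol{\omega}_t^*)\bigr\|_\infty$. The base case is $t=T$: since $J_T^* \equiv 0$ and $\boldsymbol{\omega}_T^* = \mathbf{0}$, we have $e_T = 0$, which is consistent with the claimed bound (the sum $\sum_{i=1}^{0}$ is empty). For the inductive step, I would write the one-step update in the value-function space as $\hat{J}_t(\boldsymbol{\omega}_t^*) = \Phi\Phi^{\dagger}\bar{H}\hat{J}_{t+1}(\boldsymbol{\omega}_{t+1}^*)$ and insert the intermediate quantity $\Phi\Phi^{\dagger}\bar{H}J_{t+1}^*$ (and, if needed, the best linear approximant to $J_t^*$). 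A triangle inequality then splits $e_t$ into: (i) the approximation-architecture error $\bigl\|J_t^* - \Phi\Phi^{\dagger}J_t^*\bigr\|_\infty$, which I want to bound by a constant times $\varepsilon_{\textnormal{low}}$ using Assumption~\ref{assumption.basis_function} together with \eqref{eq:eps_low} and Lemma~\ref{lemma.Phi_Phi_dagger}; and (ii) the propagated error $\bigl\|\Phi\Phi^{\dagger}\bar{H}J_{t+1}^* - \Phi\Phi^{\dagger}\bar{H}\hat{J}_{t+1}(\boldsymbol{\omega}_{t+1}^*)\bigr\|_\infty$, which by Lemma~\ref{lemma.Phi_Phi_dagger} is at most $\kappa$ times $\bigl\|\bar{H}J_{t+1}^* - \bar{H}\hat{J}_{t+1}(\boldsymbol{\omega}_{t+1}^*)\bigr\|_\infty \le \kappa\gamma\, e_{t+1}$, invoking the $\gamma$-contraction of $\bar{H}$ in $\|\cdot\|_\infty$.

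Carrying this out, I expect a recursion of the shape $e_t \le c\,\varepsilon_{\textnormal{low}} + \kappa\gamma\, e_{t+1}$ for an explicit constant $c$ coming from the architecture-error term; the slightly delicate bookkeeping is getting $c$ exactly right so that the telescoped sum matches $(1+\frac{\gamma+1}{\gamma}\sum_{i=1}^{T-t}(\kappa\gamma)^i)\varepsilon_{\textnormal{low}}$. The factor $\frac{\gamma+1}{\gamma}$ strongly suggests that the architecture error should be accounted for as: one ``$\varepsilon_{\textnormal{low}}$'' for $\bigl\|J_t^* - \hat{J}_t(\boldsymbol{\omega}_t^*)\bigr\|_\infty$ approximated directly, plus an additional contribution of the form $(1+\kappa)\varepsilon_{\textnormal{low}}$ or $\frac{1+\gamma}{\gamma}\cdot(\kappa\gamma)\varepsilon_{\textnormal{low}}$ per level, picked up when we compare $\bar{H}$ applied to $J_{t+1}^*$ versus to its best approximant $\hat{J}_{t+1}$. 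Concretely, I would bound $\bigl\|J_{t}^* - \Phi\Phi^{\dagger}\bar{H}\hat{J}_{t+1}(\boldsymbol{\omega}_{t+1}^*)\bigr\|_\infty$ by first passing to $\Phi\Phi^{\dagger}J_t^*$ (cost $\le (1+\kappa)\varepsilon_{\textnormal{low}}$ via Lemma~\ref{lemma.Phi_Phi_dagger} and the definition of $\varepsilon_{\textnormal{low}}$, since $\|J_t^* - \Phi\Phi^{\dagger}J_t^*\|_\infty \le \|J_t^*-\hat J_t\|_\infty + \|\Phi\Phi^{\dagger}(\hat J_t - J_t^*)\|_\infty \le (1+\kappa)\varepsilon_{\textnormal{low}}$), then noting $\Phi\Phi^{\dagger}J_t^* = \Phi\Phi^{\dagger}\bar{H}J_{t+1}^*$ only up to the same kind of error, and finally absorbing the contraction factor.

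The main obstacle I anticipate is \emph{not} any single inequality but the reconciliation of the two forms of the answer: the ``sum'' form $1 + \frac{\gamma+1}{\gamma}\sum_{i=1}^{T-t}(\kappa\gamma)^i$ and the ``closed'' form $\frac{1+\kappa}{1-\kappa\gamma} - \frac{(\kappa\gamma)^T(1+\gamma)}{\gamma - \kappa\gamma^2}$. I would verify these agree by evaluating the geometric sum $\sum_{i=1}^{T-t}(\kappa\gamma)^i = \frac{\kappa\gamma - (\kappa\gamma)^{T-t+1}}{1-\kappa\gamma}$, multiplying by $\frac{\gamma+1}{\gamma}$, adding $1$, and simplifying using $\frac{\gamma+1}{\gamma}\cdot\frac{\kappa\gamma}{1-\kappa\gamma} + 1 = \frac{\kappa(\gamma+1) + 1 - \kappa\gamma}{1-\kappa\gamma} = \frac{1+\kappa}{1-\kappa\gamma}$ for the leading term, and $\frac{\gamma+1}{\gamma}\cdot\frac{(\kappa\gamma)^{T-t+1}}{1-\kappa\gamma}$ for the correction term, which at $t=1$ gives $\frac{(1+\gamma)(\kappa\gamma)^T}{\gamma(1-\kappa\gamma)} = \frac{(\kappa\gamma)^T(1+\gamma)}{\gamma - \kappa\gamma^2}$ — exactly the second stated form. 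The rest is routine contraction bookkeeping, so once the constant $c = \frac{\gamma+1}{\gamma}\cdot\kappa\gamma = \kappa(\gamma+1)$ (equivalently the per-level increment) is pinned down from the $\Phi\Phi^{\dagger}$-error analysis, the induction closes cleanly.
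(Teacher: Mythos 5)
Your plan is sound and, carried out as written, it proves the lemma — but via a slightly different decomposition than the paper. The paper fixes, for each $t$, a near-best approximant $\hat{J}_t(\bar{\boldsymbol{\omega}}_t)$ with $\|J_t^*-\hat{J}_t(\bar{\boldsymbol{\omega}}_t)\|_\infty<\varepsilon_{\textnormal{low}}+\delta$, bounds the one-step deviation $\|\hat{J}_t(\bar{\boldsymbol{\omega}}_t)-\Phi\bar{H}'(\bar{\boldsymbol{\omega}}_{t+1})\|_\infty\le\kappa(1+\gamma)\varepsilon'$, telescopes $\|\hat{J}_t(\bar{\boldsymbol{\omega}}_t)-\hat{J}_t(\boldsymbol{\omega}_t^*)\|_\infty$ with factor $\kappa\gamma$, and only at the end adds the extra $\varepsilon'$ from $\|J_t^*-\hat{J}_t(\bar{\boldsymbol{\omega}}_t)\|_\infty$ — this is where the leading ``$1+$'' and the per-level increment $\kappa(1+\gamma)$ in the stated bound come from. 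You instead induct directly on $e_t=\|J_t^*-\hat{J}_t(\boldsymbol{\omega}_t^*)\|_\infty$, splitting through $\Phi\Phi^{\dagger}J_t^*=\Phi\Phi^{\dagger}\bar{H}J_{t+1}^*$ and using $\Phi\Phi^{\dagger}\hat{J}_t=\hat{J}_t$ together with Lemma~\ref{lemma.Phi_Phi_dagger} to get the recursion $e_t\le(1+\kappa)\varepsilon_{\textnormal{low}}+\kappa\gamma\,e_{t+1}$, $e_T=0$. Note your own derivation pins the per-level constant to $1+\kappa$, not the $\kappa(\gamma+1)$ you mention in your last paragraph; since $\kappa\gamma<1$ these differ, and your unrolled bound $(1+\kappa)\varepsilon_{\textnormal{low}}\sum_{i=0}^{T-t-1}(\kappa\gamma)^i$ does \emph{not} literally match the stated expression — it undershoots it by exactly $(\kappa\gamma)^{T-t}\varepsilon_{\textnormal{low}}$ — so you prove a marginally tighter bound that implies the lemma, which is all that is needed. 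Two small clean-ups: the hedge ``$\Phi\Phi^{\dagger}J_t^*=\Phi\Phi^{\dagger}\bar{H}J_{t+1}^*$ only up to error'' is unnecessary, since $J_t^*=\bar{H}J_{t+1}^*$ holds exactly by \eqref{eq.Vlower}; and your use of a best approximant attaining the infimum in \eqref{eq:eps_low} is fine in this finite-state setting (the paper avoids the issue with an $\varepsilon_{\textnormal{low}}+\delta$ argument and $\delta\to 0$). Your reconciliation of the sum and closed forms is also correct, including the observation that the closed form corresponds to evaluating the sum at $t=1$.
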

\begin{proof}
Let $\varepsilon' = \varepsilon_{\textnormal{low}} + \delta$ for some $\delta>0$. For each $t$, choose a parameter vector $\bar{\boldsymbol{\omega}}_t \in \R^M$ such that $\|J_t^* - \hat{J}_t(\bar{\boldsymbol{\omega}}_t)\|_\infty < \varepsilon'$, which is possible by the definition of $\varepsilon_{\textnormal{low}}$. Then, it holds that
    \begin{align}
        \bigl \|\hat{J}_t(\bar{\boldsymbol{\omega}}_t) - \Phi \bar{H}' (\bar{\boldsymbol{\omega}}_{t+1}) \bigr\|_\infty
        &= \bigl \|\Phi \bar{\boldsymbol{\omega}}_t - \Phi \Phi^{\dagger} \bar{H} \hat{J}_{t+1}(\bar{\boldsymbol{\omega}}_{t+1})\bigr\|_\infty \nonumber\\
        &= \bigl \|\Phi \Phi^{\dagger} \Phi \bar{\boldsymbol{\omega}}_t - \Phi \Phi^{\dagger} \bar{H} \hat{J}_{t+1}(\bar{\boldsymbol{\omega}}_{t+1}) \bigr\|_\infty \nonumber\\
        &\leq \kappa \bigl \|\Phi \bar{\boldsymbol{\omega}}_t - \bar{H} \hat{J}_{t+1}(\bar{\boldsymbol{\omega}}_{t+1}) \bigr\|_\infty \label{eq.apply_phi_phi_dagger_lower}\\
        &= \kappa \bigl \|\hat{J}_t(\bar{\boldsymbol{\omega}}_t) - \bar{H} \hat{J}_{t+1}(\bar{\boldsymbol{\omega}}_{t+1}) \bigr\|_\infty \nonumber\\
        &\leq\kappa \Bigl( \bigl \|\hat{J}_t(\bar{\boldsymbol{\omega}}_t) - J_t^* \bigr\|_\infty + \bigl\|J_t^* - \bar{H} \hat{J}_{t+1}(\bar{\boldsymbol{\omega}}_{t+1})\bigr\|_\infty \Bigr) \nonumber\\
        &< \kappa\Bigl(\varepsilon' + \|\bar{H} J_{t+1}^* - \bar{H} \hat{J}_{t+1}(\bar{\boldsymbol{\omega}}_{t+1})\|_\infty \Bigr)\nonumber\\
        &\leq \kappa\Bigl( \varepsilon' + \gamma \, \bigl\|J_{t+1}^* - \hat{J}_{t+1}(\bar{\boldsymbol{\omega}}_{t+1})\bigr\|_\infty \Bigr) \label{eq.apply_contraction}\\
        &< \kappa (\gamma+1)\, \varepsilon'. \nonumber
    \end{align}
    where \eqref{eq.apply_phi_phi_dagger_lower} is by Lemma~\ref{lemma.Phi_Phi_dagger} and \eqref{eq.apply_contraction} follows by the contraction property of $\bar{H}$. The next step is the quantify the difference between $\hat{J}_t(\bar{\boldsymbol{\omega}}_t)$ and $\hat{J}_t(\boldsymbol{\omega}_t^*)$. Let $\varepsilon'' = \kappa (\gamma+1)\, \varepsilon'$.
    \begin{align}
        \bigl \|\hat{J}_t(\bar{\boldsymbol{\omega}}_t) - \hat{J}_t(\boldsymbol{\omega}_t^*) \bigr\|_\infty
        &\leq \bigl \|\hat{J}_t(\bar{\boldsymbol{\omega}}_t) - \Phi \bar{H}' (\bar{\boldsymbol{\omega}}_{t+1}) \bigr\|_\infty + \bigl \|\Phi \bar{H}' (\bar{\boldsymbol{\omega}}_{t+1}) - \hat{J}_t(\boldsymbol{\omega}_t^*) \bigr\|_\infty\nonumber\\
        &\le \varepsilon'' + \bigl \|\Phi \Phi^{\dagger} \bar{H} \hat{J}_{t+1}(\bar{\boldsymbol{\omega}}_{t+1}) - \Phi \Phi^{\dagger} \bar{H} \hat{J}_{t+1}(\boldsymbol{\omega}_{t+1}^*) \bigr\|_\infty\nonumber\\
        &\leq \varepsilon'' + \frac{\gamma'}{\gamma} \bigl \|\bar{H} \hat{J}_{t+1}(\bar{\boldsymbol{\omega}}_{t+1}) - \bar{H} \hat{J}_{t+1}(\boldsymbol{\omega}_{t+1}^*)\bigr\|_\infty\label{eq.apply_phi_phi_dagger_lower2}\\
        &\leq \varepsilon'' + \gamma' \bigl \|\hat{J}_{t+1}(\bar{\boldsymbol{\omega}}_{t+1}) - \hat{J}_{t+1}(\boldsymbol{\omega}_{t+1}^*) \bigr\|_\infty\label{eq.apply_contraction2}\\
        &\le \varepsilon'' + \gamma' \Bigl( \epsilon'' + \gamma' \bigl \|\hat{J}_{t+2}(\bar{\boldsymbol{\omega}}_{t+2}) - \hat{J}_{t+2}(\boldsymbol{\omega}_{t+2}^*) \bigr\|_\infty \Bigr) \nonumber \\
        &\le \cdots \nonumber \\
        &\le \varepsilon'' \sum_{i=0}^{T-t-1}(\gamma')^i + (\gamma')^{T-t} \,  \bigl \|\hat{J}_{T}(\bar{\boldsymbol{\omega}}_{T}) - \hat{J}_{T}(\boldsymbol{\omega}_{T}^*) \bigr\|_\infty \nonumber\\
        &= \frac{\gamma+1}{\gamma} \varepsilon' \, \sum_{i=1}^{T-t}(\gamma')^i,\label{eq.apply_terminal_zero}
    \end{align}
    where \eqref{eq.apply_phi_phi_dagger_lower2} is by Lemma~\ref{lemma.Phi_Phi_dagger}, \eqref{eq.apply_contraction2} is by the contraction property of $\bar{H}$, and \eqref{eq.apply_terminal_zero}
    because $\boldsymbol{\omega}_{T}^* = \bar{\boldsymbol{\omega}}_T = \boldsymbol{0}$ (since $J_T(s) = 0$ for all $s$). Therefore,
    \begin{align*}
        \bigl \|J_t^* - \hat{J}_t(\boldsymbol{\omega}_t^*) \bigr\|_\infty
        &\leq \bigl \|J_t^* - \hat{J}_t(\bar{\boldsymbol{\omega}}_t) \bigr\|_\infty + \bigl \|\hat{J}_t(\bar{\boldsymbol{\omega}}_t) - \hat{J}_t(\boldsymbol{\omega}_t^*)\bigr\|_\infty\\
        &\leq \varepsilon' \biggl ( 1 + \frac{\gamma+1}{\gamma} \sum_{i=1}^{T-t}(\gamma')^i \biggr)\\
        &= \varepsilon' \biggl ( 1 + \frac{\gamma+1}{\gamma} \sum_{i=1}^{T-t}(\kappa \gamma)^i \biggr)\\
        &= \varepsilon' \biggl ( 1 + \frac{\gamma+1}{\gamma} \frac{\kappa \gamma - (\kappa \gamma)^T}{1-\kappa \gamma}\biggr)\\
        &= \varepsilon' \biggl ( \frac{1+\kappa}{1-\kappa\gamma} - \frac{(\kappa\gamma)^T(1+\gamma)}{\gamma-\kappa\gamma^2}\biggr).
    \end{align*}
    Since $\delta$ can be arbitrarily small, the proof is complete.
\end{proof}

\begin{lemma}
\label{lemma.expansion_upper}
Given an upper-level value function $\hat{V}(\boldsymbol{\beta}_{i})$, recall that one approximate Bellman step in the upper level of FSAVI yields $\hat{V}(\boldsymbol{\beta}_{i+1}) = \Phi \Phi^{\dagger} F_{\boldsymbol{\omega}^*} \hat{V}(\boldsymbol{\beta}_{i})$ in the value function space. We have
\[
\bigl \|\hat{V}(\boldsymbol{\beta}^*) \bigr\| \le \frac{\kappa^2 - \kappa^2 (\kappa\gamma)^{T+1}}{(1-\kappa \gamma^T) (1-\kappa\gamma)} \, r_\textnormal{max},
\]
where $\boldsymbol{\beta}^*$ is a fixed point of $F'_{\boldsymbol{\omega}^*}$.
\end{lemma}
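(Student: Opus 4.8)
The plan is to mimic the argument used in Lemma~\ref{lemma.expansion_lower} for the lower level, but now working with the upper-level operator $F_{\boldsymbol{\omega}^*}$, which has discount factor $\gamma^T$ rather than $\gamma$, and whose one-step reward is the $T$-horizon reward $\tilde{R}(s_0, a, \hat{J}_1(\boldsymbol{\omega}_1^*))$ rather than $r$. Since $\boldsymbol{\beta}^*$ is a fixed point of $F'_{\boldsymbol{\omega}^*} = \Phi^\dagger \circ F_{\boldsymbol{\omega}^*} \circ \Phi$, we have in value-function space $\hat V(\boldsymbol{\beta}^*) = \Phi \Phi^\dagger F_{\boldsymbol{\omega}^*} \hat V(\boldsymbol{\beta}^*)$. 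First I would evaluate this at an arbitrary state $s$, expand $\boldsymbol{\phi}^\intercal(s)$ using Assumption~\ref{assumption.basis_function} (the convex-hull representation with coefficients $\theta_m(s)$ and amplification factor $\kappa$), and pass to a maximum over the pre-selected states $s_m$; this is exactly the manipulation at the start of the proof of Lemma~\ref{lemma.expansion_lower}, yielding
\[
\bigl|\hat V(\boldsymbol{\beta}^*)(s)\bigr| \le \kappa \max_m \bigl| \bigl(F_{\boldsymbol{\omega}^*} \hat V(\boldsymbol{\beta}^*)\bigr)(s_m)\bigr| \le \kappa \Bigl( \sup_{s_0, a} \bigl|\E[\tilde R(s_0, a, \hat J_1(\boldsymbol{\omega}_1^*))]\bigr| + \gamma^T \|\hat V(\boldsymbol{\beta}^*)\|_\infty \Bigr).
\]

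Next I would invoke the bound on the upper-level reward already established in Lemma~\ref{lemma.expansion_lower}, namely $|\E[\tilde R(s_0, a, \hat J_1(\boldsymbol{\omega}_1))]| \le \frac{(\kappa\gamma)^{T+1}-1}{\kappa\gamma - 1}\,\kappa r_\textnormal{max} = \kappa r_\textnormal{max} \sum_{i=0}^{T}(\kappa\gamma)^i$. Substituting this in and taking the supremum over $s$ on the left gives
\[
\|\hat V(\boldsymbol{\beta}^*)\|_\infty \le \kappa^2 r_\textnormal{max} \sum_{i=0}^{T}(\kappa\gamma)^i + \kappa \gamma^T \|\hat V(\boldsymbol{\beta}^*)\|_\infty,
\]
and since $\kappa\gamma^T < 1$ (this is the contraction factor of $F'_{\boldsymbol{\omega}^*}$ noted right before the definition of $\varepsilon_\textnormal{low}, \varepsilon_\textnormal{up}$, via Lemma~\ref{lemma.upper_Hprime_contraction}), I rearrange to get $\|\hat V(\boldsymbol{\beta}^*)\|_\infty \le \frac{\kappa^2 r_\textnormal{max}}{1-\kappa\gamma^T}\sum_{i=0}^{T}(\kappa\gamma)^i$. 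The final step is the routine geometric-series simplification $\sum_{i=0}^{T}(\kappa\gamma)^i = \frac{1-(\kappa\gamma)^{T+1}}{1-\kappa\gamma}$, which turns the bound into $\frac{\kappa^2 - \kappa^2(\kappa\gamma)^{T+1}}{(1-\kappa\gamma^T)(1-\kappa\gamma)}\,r_\textnormal{max}$, matching the claim.

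I do not anticipate a serious obstacle here — the result is essentially a fixed-point self-bounding argument, and all the ingredients (the convex-hull amplification step, the reward bound, the contraction factor $\kappa\gamma^T$) are already in place. The one point requiring a little care is making sure the term $\gamma^T \|\hat V(\boldsymbol{\beta}^*)\|_\infty$ that appears after passing $F_{\boldsymbol{\omega}^*}$ through the $\max_m$ is correctly multiplied by the extra $\kappa$ from the convex-hull step, so that the self-consistency inequality has coefficient $\kappa\gamma^T$ (and not $\gamma^T$) on the $\|\hat V(\boldsymbol{\beta}^*)\|_\infty$ term; this is exactly what makes the rearrangement valid given only $\kappa\gamma^T < 1$. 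A secondary bookkeeping point is to confirm that the geometric sum runs up to $i = T$ (coming from the $T+1$ terms in the reward bound plus the extra factor of $\kappa\gamma$ from the $F_{\boldsymbol{\omega}^*}$ expansion being already absorbed), so that the exponent in the numerator is $T+1$ as stated.
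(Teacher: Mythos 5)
Your proposal is correct and takes essentially the same approach as the paper: the convex-hull amplification step from Assumption~\ref{assumption.basis_function}, the reward bound $R_\textnormal{max}$ from Lemma~\ref{lemma.expansion_lower}, and the factor $\kappa\gamma^T$ on the recursive term. The only cosmetic difference is that you self-bound directly at the fixed point and rearrange using $\kappa\gamma^T<1$, whereas the paper iterates the same inequality from $\boldsymbol{\beta}_0=\mathbf{0}$ and sums the geometric series $\sum_j(\kappa\gamma^T)^j$; both yield $\kappa R_\textnormal{max}/(1-\kappa\gamma^T)$ and hence the stated bound.
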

\begin{proof}
Again, the proof follows by Assumption \ref{assumption.basis_function} and some manipulation:
\begin{align*}
\bigl|\hat{V}(\boldsymbol{\beta}_{i+1})(s) \bigr|
&= \Biggl| \kappa \Biggl( \sum_{m=1}^M \theta_m(s) \boldsymbol{\phi}^\intercal(s_m) \Biggr) \bigl(\Phi^{\dagger} F_{\boldsymbol{\omega}} \hat{V}(\boldsymbol{\beta}_{i})\bigr) \Biggr| \\
&\le \kappa \max_m \,  \bigl| \boldsymbol{\phi}^\intercal(s_m) \bigl(\Phi^{\dagger} F_{\boldsymbol{\omega}} \hat{V}(\boldsymbol{\beta}_{i}) \bigr) \bigr|\\
&= \kappa \, \max_m \, \bigl|
 \bigl(F_{\boldsymbol{\omega}} \hat{V}(\boldsymbol{\beta}_{i})\bigr)(s_m) \bigr|\\
&\le \kappa \, R_\textnormal{max} + \kappa \gamma^T \, \bigl \| \hat{V}(\boldsymbol{\beta}_{i}) \bigr \|_\infty,
\end{align*}
where $R_\textnormal{max}$ is an upper bound on $\bigl|\E\bigl[\tilde{R}(s_0, a, \hat{J}_1(\boldsymbol{\omega}_1))\bigr]\bigr|$ from Lemma \ref{lemma.expansion_lower}.
Starting with $\boldsymbol{\beta}_{i} = \mathbf{0}$ and applying the inequality repeatedly, we see that
\[
\bigl\|\hat{V}(\boldsymbol{\beta}^*)\bigr\|_\infty \le \kappa R_\textnormal{max} \sum_{j=0}^\infty (\kappa \gamma^T)^j \le \frac{\kappa R_\textnormal{max}}{1-\kappa \gamma^T},
\]
which completes the proof.
\end{proof}

\begin{lemma} \label{lemma.upper_Hprime_contraction}
    For any $\boldsymbol{\omega}$, the parameter space Bellman operator for the upper-level problem $F'_{\boldsymbol{\omega}} = \Phi^{\dagger} \circ F_{\boldsymbol{\omega}} \circ \Phi$ is a $\gamma'$-contraction with respect to a norm $\|\cdot\|_\Phi$ on $\R^M$ defined by $\|\boldsymbol{\beta}\|_\Phi = \| \Phi \boldsymbol{\beta} \|_\infty$, i.e.,
    \begin{equation*}
        \bigl \| F'_{\boldsymbol{\omega}}(\boldsymbol{\beta}) - F'_{\boldsymbol{\omega}}(\boldsymbol{\beta}') \bigr \|_\Phi \leq \kappa \gamma^{T}  \bigl \| \boldsymbol{\beta} - \boldsymbol{\beta}' \bigr \|_\Phi,
    \end{equation*}
    where $\boldsymbol{\beta}, \boldsymbol{\beta}' \in \mathbb{R}^M$. Therefore, there exists a fixed point $\boldsymbol{\beta}^*$ of $F_{\boldsymbol{\omega}}'$.
\end{lemma}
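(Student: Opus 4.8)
The plan is to establish the contraction property by unwinding the definitions of the three operators $\Phi^{\dagger}$, $F_{\boldsymbol{\omega}}$, and $\Phi$, and then bounding the effect of each in turn. First I would write, for any $\boldsymbol{\beta}, \boldsymbol{\beta}' \in \mathbb R^M$,
\[
\bigl\| F'_{\boldsymbol{\omega}}(\boldsymbol{\beta}) - F'_{\boldsymbol{\omega}}(\boldsymbol{\beta}')\bigr\|_\Phi = \bigl\| \Phi \Phi^{\dagger} F_{\boldsymbol{\omega}}(\Phi \boldsymbol{\beta}) - \Phi \Phi^{\dagger} F_{\boldsymbol{\omega}}(\Phi \boldsymbol{\beta}')\bigr\|_\infty,
\]
using the definition $\|\cdot\|_\Phi = \|\Phi \cdot \|_\infty$. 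Then I would apply Lemma~\ref{lemma.Phi_Phi_dagger} (with $J = F_{\boldsymbol{\omega}}(\Phi\boldsymbol{\beta})$ and $J' = F_{\boldsymbol{\omega}}(\Phi\boldsymbol{\beta}')$, both in $\mathbb R^N$) to pull out a factor of $\kappa$, giving the bound $\kappa \, \|F_{\boldsymbol{\omega}}(\Phi\boldsymbol{\beta}) - F_{\boldsymbol{\omega}}(\Phi\boldsymbol{\beta}')\|_\infty$.

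Next I would use that $F_{\boldsymbol{\omega}}$, the upper-level Bellman operator defined in \eqref{eq.Vupper_operator}, is a $\gamma^T$-contraction in the sup-norm on $\mathbb R^N$. This is standard: it is a max over actions of an expression that is affine in the value function with the $V$-term multiplied by $\gamma^T$, so the usual ``max of contractions is a contraction'' argument applies (the $\tilde R$ term and the transition $s_T(a,\boldsymbol{\pi})$ do not depend on $V$). Hence $\|F_{\boldsymbol{\omega}}(\Phi\boldsymbol{\beta}) - F_{\boldsymbol{\omega}}(\Phi\boldsymbol{\beta}')\|_\infty \le \gamma^T \|\Phi\boldsymbol{\beta} - \Phi\boldsymbol{\beta}'\|_\infty = \gamma^T \|\boldsymbol{\beta} - \boldsymbol{\beta}'\|_\Phi$. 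Combining with the previous display yields
\[
\bigl\| F'_{\boldsymbol{\omega}}(\boldsymbol{\beta}) - F'_{\boldsymbol{\omega}}(\boldsymbol{\beta}')\bigr\|_\Phi \le \kappa \gamma^T \bigl\|\boldsymbol{\beta} - \boldsymbol{\beta}'\bigr\|_\Phi.
\]
Since $\kappa = \gamma'/\gamma$ and $\gamma' \in [\gamma, 1)$, we have $\kappa \gamma^T = \gamma' \gamma^{T-1} < 1$ (for $T \ge 1$), so $F'_{\boldsymbol{\omega}}$ is a contraction on the complete metric space $(\mathbb R^M, \|\cdot\|_\Phi)$ — note $\|\cdot\|_\Phi$ is genuinely a norm because $\Phi$ has full column rank by part~1 of Assumption~\ref{assumption.basis_function}. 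The Banach fixed point theorem then gives the existence of $\boldsymbol{\beta}^*$.

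The main obstacle, such as it is, is being careful about the two different norms ($\sup$-norm on $\mathbb R^N$ versus $\|\cdot\|_\Phi$ on $\mathbb R^M$) and making sure Lemma~\ref{lemma.Phi_Phi_dagger} is applied to the correct pair of $\mathbb R^N$ vectors; there is no deep difficulty, just bookkeeping. I should also double-check that $\|\cdot\|_\Phi$ is a norm (positive-definiteness requires $\Phi\boldsymbol{\beta} = 0 \implies \boldsymbol{\beta} = 0$, which follows from linear independence of the pre-selected feature vectors, i.e., $L$ being invertible) and that $\mathbb R^M$ under this norm is complete (automatic in finite dimensions). The statement of the lemma writes the contraction factor as $\kappa\gamma^T$ rather than $\gamma'$ in the displayed inequality despite calling it a ``$\gamma'$-contraction'' in the text; since $\kappa\gamma^T = \gamma'\gamma^{T-1} \le \gamma'$, the displayed bound is the sharper one and implies the $\gamma'$-contraction claim, so I would prove the displayed inequality directly.
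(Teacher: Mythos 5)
Your proposal is correct and follows essentially the same argument as the paper's proof (which itself adapts Theorem 3a of \citet{tsitsiklis1996feature}): unfold the definition of $\|\cdot\|_\Phi$, apply Lemma~\ref{lemma.Phi_Phi_dagger} to extract the factor $\kappa$, and then use the $\gamma^T$-contraction property of $F_{\boldsymbol{\omega}}$ in the sup-norm. Your additional checks — that $\|\cdot\|_\Phi$ is genuinely a norm via the invertibility of $L$, and that $\kappa\gamma^T = \gamma'\gamma^{T-1} < 1$ so Banach's theorem applies — are correct and slightly more careful than what the paper records.
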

\begin{proof}The proof follows Theorem 3a of \cite{tsitsiklis1996feature}. We include the steps here in our notation for completeness:
    \begin{align*}
        \bigl \| F'_{\boldsymbol{\omega}}(\boldsymbol{\beta}) - F'_{\boldsymbol{\omega}}(\boldsymbol{\beta}') \bigr \|_\Phi
        &= \bigl \| (\Phi^{\dagger} \circ F_{\boldsymbol{\omega}} \circ \Phi) (\boldsymbol{\beta}) - (\Phi^{\dagger} \circ F_{\boldsymbol{\omega}} \circ \Phi) (\boldsymbol{\beta}') \bigr \|_\Phi \\
        &= \bigl \| \Phi (\Phi^{\dagger} \circ F_{\boldsymbol{\omega}} \circ \Phi) (\boldsymbol{\beta}) - \Phi (\Phi^{\dagger} \circ F_{\boldsymbol{\omega}} \circ \Phi) (\boldsymbol{\beta}') \bigr \|_\infty \\
        &\le \kappa \, \bigl\| F_{\boldsymbol{\omega}}(\Phi \boldsymbol{\beta}) - F_{\boldsymbol{\omega}} (\Phi \boldsymbol{\beta}') \bigr\|_\infty \\
        &\leq \kappa \gamma^{T} \, \bigl \| \Phi \boldsymbol{\beta} - \Phi \boldsymbol{\beta}' \bigr \|_\infty \\
        &= \kappa \gamma^{T} \, \bigl \| \boldsymbol{\beta} - \boldsymbol{\beta}' \bigr \|_\Phi,
    \end{align*}
    where first inequality follows by Lemma \ref{lemma.Phi_Phi_dagger} and the second inequality follows by the $\gamma^T$-contraction property of $F_{\boldsymbol{\omega}}$.
\end{proof}

\begin{lemma}
\label{lemma:value_iteration_avi}
Let $\boldsymbol{\omega}^*$ be the solution of the lower level of FSAVI and let $\boldsymbol{\beta}^*$ be the fixed point of $F_{\boldsymbol{\omega}^*}'$. The approximate value iteration of the upper level of FSAVI, which produces $\boldsymbol{\beta}_k$, has a ``value iteration'' error of:
\[
 \bigl\|\hat{V}(\boldsymbol{\beta}_k) - \hat{V}(\boldsymbol{\beta}^*)  \bigr\|_\infty \le (\kappa \gamma^T)^k \frac{\kappa^2 - \kappa^2 (\kappa\gamma)^{T+1}}{(1-\kappa \gamma^T) (1-\kappa\gamma)} \, r_\textnormal{max}. 
\]
\end{lemma}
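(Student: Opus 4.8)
The plan is to recognize Lemma~\ref{lemma:value_iteration_avi} as a textbook value-iteration-in-a-contraction argument, carried out in the parameter space $\mathbb{R}^M$ equipped with the weighted norm $\|\boldsymbol{\beta}\|_\Phi = \|\Phi\boldsymbol{\beta}\|_\infty$. First I would observe that, by construction of the upper level of FSAVI (Algorithm~\ref{alg:avi_appr}), the iterates satisfy the \emph{exact} recursion $\boldsymbol{\beta}_i = F'_{\boldsymbol{\omega}^*}(\boldsymbol{\beta}_{i-1})$, where $F'_{\boldsymbol{\omega}^*} = \Phi^\dagger \circ F_{\boldsymbol{\omega}^*} \circ \Phi$: computing $V^i(s_0) = \max_a \E[\tilde{R}(s_0,a,\hat{J}_1(\boldsymbol{\omega}_1^*)) + \gamma^T \hat{V}(s_T(a,\hat{\boldsymbol{\pi}}_{\boldsymbol{\omega}^*}),\boldsymbol{\beta}_{i-1})]$ at the pre-selected states, setting the remaining entries to zero, and then updating $\boldsymbol{\beta}_i = \Phi^\dagger V^i$ is precisely the application of $\Phi^\dagger$ to $F_{\boldsymbol{\omega}^*}(\Phi\boldsymbol{\beta}_{i-1})$, since $\hat{V}(\cdot,\boldsymbol{\beta}_{i-1})$ is the vector $\Phi\boldsymbol{\beta}_{i-1}$. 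Thus no additional per-step error is introduced, and $\boldsymbol{\beta}^*$ is the fixed point of this same operator, which exists by Lemma~\ref{lemma.upper_Hprime_contraction}.

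Next I would invoke the contraction property established in Lemma~\ref{lemma.upper_Hprime_contraction}, namely that $F'_{\boldsymbol{\omega}^*}$ is a $(\kappa\gamma^T)$-contraction with respect to $\|\cdot\|_\Phi$. Iterating, $\|\boldsymbol{\beta}_k - \boldsymbol{\beta}^*\|_\Phi = \|F'_{\boldsymbol{\omega}^*}(\boldsymbol{\beta}_{k-1}) - F'_{\boldsymbol{\omega}^*}(\boldsymbol{\beta}^*)\|_\Phi \le \kappa\gamma^T \|\boldsymbol{\beta}_{k-1} - \boldsymbol{\beta}^*\|_\Phi \le \cdots \le (\kappa\gamma^T)^k\|\boldsymbol{\beta}_0 - \boldsymbol{\beta}^*\|_\Phi$. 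Since the algorithm is initialized with $\boldsymbol{\beta}_0 = \mathbf{0}$, the right-hand side equals $(\kappa\gamma^T)^k\|\boldsymbol{\beta}^*\|_\Phi = (\kappa\gamma^T)^k\|\Phi\boldsymbol{\beta}^*\|_\infty = (\kappa\gamma^T)^k\|\hat{V}(\boldsymbol{\beta}^*)\|_\infty$. By the very definition of $\|\cdot\|_\Phi$, the left-hand side is exactly $\|\hat{V}(\boldsymbol{\beta}_k) - \hat{V}(\boldsymbol{\beta}^*)\|_\infty$, so $\|\hat{V}(\boldsymbol{\beta}_k) - \hat{V}(\boldsymbol{\beta}^*)\|_\infty \le (\kappa\gamma^T)^k\|\hat{V}(\boldsymbol{\beta}^*)\|_\infty$.

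Finally, I would substitute the a priori bound on the fixed-point value function from Lemma~\ref{lemma.expansion_upper}, namely $\|\hat{V}(\boldsymbol{\beta}^*)\|_\infty \le \tfrac{\kappa^2 - \kappa^2(\kappa\gamma)^{T+1}}{(1-\kappa\gamma^T)(1-\kappa\gamma)}\,r_\textnormal{max}$, which delivers the stated inequality. I do not anticipate a substantive obstacle here; the only mild points requiring care are (i) confirming that the algorithm's iterates are genuine applications of $F'_{\boldsymbol{\omega}^*}$, as noted above, so that the error is purely the geometric contraction error, and (ii) noting that $\|\cdot\|_\Phi$ is a true norm rather than merely a seminorm, because $\Phi = [L;G]$ has full column rank $M$ by the invertibility of $L$ in Assumption~\ref{assumption.basis_function} — which is what legitimizes speaking of \emph{the} fixed point $\boldsymbol{\beta}^*$ and chaining the contraction. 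Both are immediate.
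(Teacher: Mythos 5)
Your proposal is correct and takes essentially the same route as the paper's proof: both iterate the $(\kappa\gamma^T)$-contraction of $F'_{\boldsymbol{\omega}^*}$ from Lemma~\ref{lemma.upper_Hprime_contraction} in the norm $\|\cdot\|_\Phi$ (equivalently, the sup-norm of the corresponding value functions), use the initialization $\boldsymbol{\beta}_0=\mathbf{0}$, and then bound $\|\hat{V}(\boldsymbol{\beta}^*)\|_\infty$ by Lemma~\ref{lemma.expansion_upper}. Your side remarks (that the algorithm's updates are exact applications of $F'_{\boldsymbol{\omega}^*}$ and that $\|\cdot\|_\Phi$ is a genuine norm by the linear independence in Assumption~\ref{assumption.basis_function}) are accurate and only make explicit what the paper leaves implicit.
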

\begin{proof}
We have:
\begin{align*}
    \bigl\|\hat{V}(\boldsymbol{\beta}_k) - \hat{V}(\boldsymbol{\beta}^*)  \bigr\|_\infty 
    &= \bigl\|\Phi F_{\boldsymbol{\omega}^*}' \boldsymbol{\beta}_{k-1} - \Phi F_{\boldsymbol{\omega}^*}' \boldsymbol{\beta}^* \bigr\|_\infty\\
    &= \bigl\| F_{\boldsymbol{\omega}^*}' \boldsymbol{\beta}_{k-1} - F_{\boldsymbol{\omega}^*}' \boldsymbol{\beta}^* \bigr\|_\Phi\\
    &\le \kappa \gamma^T \bigl \| \Phi \boldsymbol{\beta}_{k-1} - \Phi \boldsymbol{\beta}^*  \bigr\|_\infty\\
    &\le \kappa \gamma^T \bigl \| \hat{V}(\boldsymbol{\beta}_{k-1}) - \hat{V}(\boldsymbol{\beta}^*)\bigr\|_\infty\\
    &\le (\kappa \gamma^T)^k \bigl \| \hat{V}(\boldsymbol{\beta}_{0}) - \hat{V}(\boldsymbol{\beta}^*)\bigr\|_\infty\\
    &\le (\kappa \gamma^T)^k \frac{\kappa^2 - \kappa^2 (\kappa\gamma)^{T+1}}{(1-\kappa \gamma^T) (1-\kappa\gamma)} \, r_\textnormal{max}.
\end{align*}
The first inequality is by Lemma \ref{lemma.upper_Hprime_contraction} and the last inequality follows from $\boldsymbol{\beta}_0=0$ and Lemma \ref{lemma.expansion_upper}.
\end{proof}

\begin{lemma} \label{lemma.upper_w_bellman}
    Consider any $\boldsymbol{\omega}$. If $\boldsymbol{\beta}^*$ is the fixed point of $F'_{\boldsymbol{\omega}}$, i.e., $\boldsymbol{\beta}^* = F'_{\boldsymbol{\omega}} \, \boldsymbol{\beta}^*$, which exists by Lemma \ref{lemma.upper_Hprime_contraction}, then it holds that
    \begin{equation*}
        \bigl \| V^*_{\boldsymbol{\omega}} - \hat{V}(\boldsymbol{\beta}^*) \bigr\|_\infty \leq \biggl(\frac{1 + \kappa}{1- \kappa \gamma^T} \biggr)\,\varepsilon_{\textnormal{up}} %
    \end{equation*}
    where $ V^*_{\boldsymbol{\omega}}$ is the fixed point of $F_{\boldsymbol{\omega}}$.
\end{lemma}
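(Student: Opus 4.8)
The plan is to mirror the standard argument that projected value iteration has error controlled by the best approximation error times a factor depending on the contraction modulus — exactly the structure of Lemma \ref{lemma.lower_w_bellman} but now for the upper level, where the relevant contraction factor is $\kappa\gamma^T$ rather than $\gamma'$. Fix $\boldsymbol{\omega}$, write $V^*_{\boldsymbol{\omega}}$ for the fixed point of $F_{\boldsymbol{\omega}}$ and $\boldsymbol{\beta}^*$ for the fixed point of $F'_{\boldsymbol{\omega}}$ (which exists by Lemma \ref{lemma.upper_Hprime_contraction}). Given $\delta>0$, let $\varepsilon' = \varepsilon_{\textnormal{up}} + \delta$ and choose $\bar{\boldsymbol{\beta}} \in \mathbb{R}^M$ with $\|V^*_{\boldsymbol{\omega}} - \hat{V}(\bar{\boldsymbol{\beta}})\|_\infty < \varepsilon'$, which is possible by the definition of $\varepsilon_{\textnormal{up}}$ in \eqref{eq:eps_high} (taking the sup over $\boldsymbol{\omega}$ only makes the bound larger).

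The first step is to bound how far $\hat{V}(\bar{\boldsymbol{\beta}})$ moves under one projected Bellman step, i.e., to bound $\|\hat{V}(\bar{\boldsymbol{\beta}}) - \Phi F'_{\boldsymbol{\omega}}(\bar{\boldsymbol{\beta}})\|_\infty = \|\Phi\Phi^\dagger\Phi\bar{\boldsymbol{\beta}} - \Phi\Phi^\dagger F_{\boldsymbol{\omega}}\Phi\bar{\boldsymbol{\beta}}\|_\infty$. Applying Lemma \ref{lemma.Phi_Phi_dagger} gives a factor $\kappa$ in front of $\|\Phi\bar{\boldsymbol{\beta}} - F_{\boldsymbol{\omega}}\hat{V}(\bar{\boldsymbol{\beta}})\|_\infty$; inserting $V^*_{\boldsymbol{\omega}} = F_{\boldsymbol{\omega}} V^*_{\boldsymbol{\omega}}$, using the triangle inequality and the $\gamma^T$-contraction of $F_{\boldsymbol{\omega}}$ (Lemma \ref{lemma.upper_Hprime_contraction} style, or directly from \eqref{eq.Vupper_operator}), this is at most $\kappa(\varepsilon' + \gamma^T \varepsilon') = \kappa(1+\gamma^T)\varepsilon'$. (Alternatively one can avoid the $\gamma^T$ term: since $F_{\boldsymbol{\omega}}$ is a contraction, $\hat V(\bar{\boldsymbol\beta})$ is within $(1+\gamma^T)\varepsilon'$ of its image, but the cleaner route is a geometric-series argument.) The second step is to compare $\hat{V}(\bar{\boldsymbol{\beta}})$ with $\hat{V}(\boldsymbol{\beta}^*)$: since $\boldsymbol{\beta}^* = F'_{\boldsymbol{\omega}}\boldsymbol{\beta}^*$ is the fixed point and $F'_{\boldsymbol{\omega}}$ is a $\kappa\gamma^T$-contraction in $\|\cdot\|_\Phi$,
\[
\|\hat{V}(\bar{\boldsymbol{\beta}}) - \hat{V}(\boldsymbol{\beta}^*)\|_\infty = \|\bar{\boldsymbol{\beta}} - \boldsymbol{\beta}^*\|_\Phi \le \frac{\|\bar{\boldsymbol{\beta}} - F'_{\boldsymbol{\omega}}\bar{\boldsymbol{\beta}}\|_\Phi}{1-\kappa\gamma^T} \le \frac{\kappa(1+\gamma^T)\varepsilon'}{1-\kappa\gamma^T},
\]
using the standard fixed-point estimate $\|z - z^*\| \le \|z - Gz\|/(1-\rho)$ for a $\rho$-contraction $G$.

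Combining the two steps via the triangle inequality, $\|V^*_{\boldsymbol{\omega}} - \hat{V}(\boldsymbol{\beta}^*)\|_\infty \le \varepsilon' + \kappa(1+\gamma^T)\varepsilon'/(1-\kappa\gamma^T)$. A short algebraic simplification of $1 + \kappa(1+\gamma^T)/(1-\kappa\gamma^T) = (1-\kappa\gamma^T + \kappa + \kappa\gamma^T)/(1-\kappa\gamma^T) = (1+\kappa)/(1-\kappa\gamma^T)$ yields the claimed bound $\|V^*_{\boldsymbol{\omega}} - \hat{V}(\boldsymbol{\beta}^*)\|_\infty \le (1+\kappa)\varepsilon'/(1-\kappa\gamma^T)$; letting $\delta \to 0$ finishes the proof. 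I expect the only mildly delicate point to be getting the constant exactly right — specifically whether the one-step displacement bound should carry a $(1+\gamma^T)$ or just a $1$, and ensuring the algebra collapses to precisely $(1+\kappa)/(1-\kappa\gamma^T)$ rather than something slightly larger; this hinges on replicating the bookkeeping in the proof of Lemma \ref{lemma.lower_w_bellman} faithfully, with $\gamma'=\kappa\gamma$ replaced by $\kappa\gamma^T$ and the finite horizon replaced by the infinite-horizon fixed-point estimate.
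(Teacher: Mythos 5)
Your proposal is correct and follows essentially the same route as the paper: pick a near-best $\bar{\boldsymbol{\beta}}$, bound the one-step displacement $\|\hat{V}(\bar{\boldsymbol{\beta}}) - \Phi F'_{\boldsymbol{\omega}}(\bar{\boldsymbol{\beta}})\|_\infty \le \kappa(1+\gamma^T)\varepsilon'$ via Lemma \ref{lemma.Phi_Phi_dagger} and the $\gamma^T$-contraction, then control the distance to the fixed point by the factor $1/(1-\kappa\gamma^T)$ and combine by the triangle inequality. The only cosmetic difference is that you invoke the standard fixed-point distance estimate as a black box where the paper derives it inline by a self-referential inequality and rearrangement; the constants and the final algebra agree exactly.
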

\begin{proof}
    Let $\varepsilon' = \varepsilon_{\textnormal{up}} + \delta$ for some $\delta>0$. Choose $\bar{\boldsymbol{\beta}} \in \R^M$ such that $\bigl\|V^*_{\boldsymbol{\omega}} - \hat{V}( \bar{\boldsymbol{\beta}}) \bigr\|_\infty < \varepsilon'$. Then,
    \begin{align}
        \bigl \|\hat{V}(\bar{\boldsymbol{\beta}}) - \Phi F'_{\boldsymbol{\omega}} (\bar{\boldsymbol{\beta}}) \bigr\|_\infty
        &= \bigl \|\Phi \bar{\boldsymbol{\beta}} - \Phi \Phi^{\dagger} F_{\boldsymbol{\omega}} \hat{V}(\bar{\boldsymbol{\beta}}) \bigr\|_\infty \nonumber\\
        &=\bigl \|\Phi \Phi^{\dagger} \Phi \bar{\boldsymbol{\beta}} - \Phi \Phi^{\dagger} F_{\boldsymbol{\omega}} \hat{V}(\bar{\boldsymbol{\beta}}) \bigr\|_\infty \nonumber\\
        &< \kappa \bigl \|\Phi \bar{\boldsymbol{\beta}} - F_{\boldsymbol{\omega}} \hat{V}(\bar{\boldsymbol{\beta}}) \bigr\|_\infty \label{eq.apply_phi_phi_dagger}\\
        &= \kappa \bigl \|\hat{V}(\bar{\boldsymbol{\beta}}) - F_{\boldsymbol{\omega}} \hat{V}(\bar{\boldsymbol{\beta}}) \bigr\|_\infty \nonumber\\
        &\leq \kappa \Bigl( \bigl \|\hat{V}(\bar{\boldsymbol{\beta}}) - V^*_{\boldsymbol{\omega}} \bigr\|_\infty + \bigl \|V^*_{\boldsymbol{\omega}} - F_{\boldsymbol{\omega}} \hat{V}(\bar{\boldsymbol{\beta}}) \bigr\|_\infty  \Bigr)\nonumber\\
        &< \kappa \Bigl( \varepsilon' +  \bigl\|F_{\boldsymbol{\omega}} V^*_{\boldsymbol{\omega}} - F_{\boldsymbol{\omega}} \hat{V}(\bar{\boldsymbol{\beta}})\bigr\|_\infty\Bigr) \nonumber\\
        &< \kappa \, \bigl( \varepsilon' +  \gamma^T \epsilon' \bigr) = \kappa \, \bigl(1 + \gamma^T\bigr) \varepsilon',\label{eq.apply_contraction_upper}
    \end{align}
    where \eqref{eq.apply_phi_phi_dagger} is by Lemma~\ref{lemma.Phi_Phi_dagger} and \eqref{eq.apply_contraction_upper} is by the $\gamma^T$-contraction property of $F_{\boldsymbol{\omega}}$.
    Now, we let $\varepsilon'' = \kappa (1+\gamma^T) \, \varepsilon'$ and see that
    \begin{align*}
        \bigl \|\hat{V}(\bar{\boldsymbol{\beta}}) - \hat{V}(\boldsymbol{\beta}^*) \bigr\|_\infty
        &\leq \bigl\|\hat{V}(\bar{\boldsymbol{\beta}}) - \Phi F'_{\boldsymbol{\omega}} (\bar{\boldsymbol{\beta}}) \bigr\|_\infty + \bigl \|\Phi F'_{\boldsymbol{\omega}} (\bar{\boldsymbol{\beta}}) - \hat{V}(\boldsymbol{\beta}^*) \bigr\|_\infty\\
        &< \epsilon'' + \bigl \|\Phi \Phi^{\dagger} F_{\boldsymbol{\omega}} \hat{V}(\bar{\boldsymbol{\beta}}) - \Phi \Phi^{\dagger} F_{\boldsymbol{\omega}} \hat{V}(\boldsymbol{\beta}^*)\bigr\|_\infty\\
        &< \epsilon'' + \kappa \, \bigl \|F_{\boldsymbol{\omega}} \hat{V}(\bar{\boldsymbol{\beta}}) - F_{\boldsymbol{\omega}} \hat{V}(\boldsymbol{\beta}^*) \bigr\|_\infty\\
        &\leq \epsilon'' + \kappa\gamma^T \bigl\|\hat{V}(\bar{\boldsymbol{\beta}}) - \hat{V}(\boldsymbol{\beta}^*) \bigr\|_\infty.
    \end{align*}
    It thus follows that 
    \begin{equation*}
        \bigl \|\hat{V}(\bar{\boldsymbol{\beta}}) - \hat{V}(\boldsymbol{\beta}^*) \bigr\|_\infty
        \leq \frac{\kappa + \kappa \gamma^T}{1- \kappa \gamma^T} \varepsilon'. 
    \end{equation*}
    Putting the pieces together, we have
    \begin{align*}
        \bigl\|V^*_{\boldsymbol{\omega}} - \hat{V}(\boldsymbol{\beta}^*)\bigr\|_\infty
        &\leq \bigl\|V^*_{\boldsymbol{\omega}} - \hat{V}(\bar{\boldsymbol{\beta}}) \bigr\|_\infty + \|\hat{V}(\bar{\boldsymbol{\beta}}) - \hat{V}(\boldsymbol{\beta}^*)\bigr\|_\infty\\
        &\leq \varepsilon' + \frac{\kappa + \kappa \gamma^T}{1- \kappa \gamma^T} \, \varepsilon' \\
        &\leq  \frac{1 + \kappa}{1- \kappa \gamma^T}\,\varepsilon'.
    \end{align*}
    Since $\delta$ can be arbitrarily small, the proof is complete.
\end{proof}

\subsection{Proof of Theorem \ref{thm:fsavi_regret}}
\label{sec:proof_fsavi_regret}
We apply Lemma \ref{lemma:main} with $\boldsymbol{\pi} = \hat{\boldsymbol{\pi}}_{\boldsymbol{\omega}^*}$, $J_1 = \hat{J}_1(\boldsymbol{\omega}^*)$, and $V = \hat{V}(\boldsymbol{\beta}_k)$.
First, to compute the reward error $\epsilon_r(\boldsymbol{\pi}^*, \hat{J}_1(\boldsymbol{\omega}_1^*))$, we have
\begin{align*}
\epsilon_r(\boldsymbol{\pi}^*, \hat{J}_1(\boldsymbol{\omega}_1^*)) &= \max_{s,a} \bigl|\E [R(s, a, \boldsymbol{\pi}^*)] - \E [\tilde{R}(s, a, \hat{J}_1(\boldsymbol{\omega}_1^*))] \bigr|\\
&\le \epsilon_r(\gamma,\alpha,d_\mathcal{Y},\mathbf{L},T) + \max_{s,a}\, \bigl|\E [\tilde{R}(s, a, J_1^*)] - \E [\tilde{R}(s, a, \hat{J}_1(\boldsymbol{\omega}_1^*))]\bigr|\\
&\le \epsilon_r(\gamma,\alpha,d_\mathcal{Y},\mathbf{L},T) + \gamma \, \bigl \|J_1^* -\hat{J}_1(\boldsymbol{\omega}_1^*) \bigr\|_\infty\\
&\le \epsilon_r(\gamma,\alpha,d_\mathcal{Y},\mathbf{L},T) +  \biggl(\gamma + (\gamma+1) 
\sum_{i=1}^{T-1}(\gamma')^i\biggr) \varepsilon_{\textnormal{low}},
\end{align*}
where the last inequality follows from Lemma \ref{lemma.lower_w_bellman}. The other term to analyze is $\bigl \| V^*_{\boldsymbol{\omega}^*} - \hat{V}(\boldsymbol{\beta}_k) \bigr\|_\infty$, where we remind the reader of our usage of the shorthand notation $V^*_{\boldsymbol{\omega}^*} = V^*(\hat{J}_1(\boldsymbol{\omega}^*),\hat{\boldsymbol{\pi}}_{\boldsymbol{\omega}^*})$:
\begin{align*}
    \bigl \|V^*_{\boldsymbol{\omega}^*} - \hat{V}(\boldsymbol{\beta}^*) \bigr\|_\infty 
    &\le \bigl \| V^*_{\boldsymbol{\omega}^*} - \hat{V}(\boldsymbol{\beta}^*) \bigr\|_\infty + \bigl \| \hat{V}(\boldsymbol{\beta}^*) - \hat{V}(\boldsymbol{\beta}_k) \bigr\|_\infty\\
    &\le \biggl(\frac{1 + \kappa}{1- \kappa \gamma^T} \biggr)\,\varepsilon_{\textnormal{up}} + (\kappa \gamma^T)^k \biggl( \frac{\kappa^2 - \kappa^2 (\kappa\gamma)^{T+1}}{(1-\kappa \gamma^T) (1-\kappa\gamma)} \biggr) r_\textnormal{max},
\end{align*}
which follows by Lemmas \ref{lemma.expansion_upper} and \ref{lemma:value_iteration_avi}. This completes the proof.

\end{document}